\newcommand{\steven}[1]{\textcolor{black}{#1}}
\newtheorem{proposition}{Proposition}
\newtheorem{lemma}{Lemma}
\newtheorem{example}{Example}
\DeclareMathOperator*{\argmax}{arg\,max}
\newcommand{\ntpp}{\mathsf{ntpp}}
\newcommand{\tpp}{\mathsf{tpp}}
\newcommand{\ntppi}{\mathsf{ntppi}}
\newcommand{\tppi}{\mathsf{tppi}}
\newcommand{\ec}{\mathsf{ec}}
\newcommand{\dc}{\mathsf{dc}}
\newcommand{\eq}{\mathsf{eq}}
\newcommand{\po}{\mathsf{po}}
\newcommand{\intervald}{\mathsf{d}}
\newcommand{\intervaldi}{\mathsf{di}}
\newcommand{\intervalo}{\mathsf{o}}
\newcommand{\intervaloi}{\mathsf{oi}}
\newcommand{\intervalm}{\mathsf{m}}
\newcommand{\intervalmi}{\mathsf{mi}}
\newcommand{\intervals}{\mathsf{s}}
\newcommand{\intervalsi}{\mathsf{si}}
\newcommand{\intervalf}{\mathsf{f}}
\newcommand{\intervalfi}{\mathsf{fi}}
\title{Systematic Relational Reasoning With Epistemic Graph Neural Networks}
\author{%
  Irtaza Khalid \& Steven Schockaert\\
  Cardiff University, UK \\
  \texttt{\{khalidmi,schockaerts1\}@cardiff.ac.uk}
  }
\begin{document}

\maketitle

\begin{abstract}
Developing models that can learn to reason is a notoriously challenging problem. We focus on reasoning in relational domains, where the use of Graph Neural Networks (GNNs) seems like a natural choice. However, previous work has shown that regular GNNs lack the ability to systematically generalize from training examples on test graphs requiring longer inference chains, which fundamentally limits their reasoning abilities. A common solution relies on neuro-symbolic methods that systematically reason by learning rules, but their scalability is often limited and they tend to make unrealistically strong assumptions, e.g.\ that the answer can always be inferred from a single relational path. We propose the Epistemic GNN (EpiGNN), a novel parameter-efficient and scalable GNN architecture with an epistemic inductive bias for systematic reasoning. Node embeddings in EpiGNNs are treated as epistemic states, and message passing  is implemented accordingly. We show that EpiGNNs achieve state-of-the-art results on link prediction tasks that require systematic reasoning. Furthermore, for inductive knowledge graph completion, EpiGNNs rival the performance of state-of-the-art specialized approaches. Finally, we introduce two new benchmarks that go beyond standard relational reasoning by requiring the aggregation of information from multiple paths. Here, existing neuro-symbolic approaches fail, yet EpiGNNs learn to reason accurately. Code and datasets are available at \url{https://github.com/erg0dic/gnn-sg}.
\end{abstract}

\section{Introduction}

Learning to reason remains a key challenge for neural networks. When standard neural network architectures are trained on reasoning problems, they often perform well on similar problems but fail to generalize to problems with different characteristics than the ones that were seen during training, e.g.\ problems that were sampled from a different distribution \citep{DBLP:conf/ijcai/ZhangLMCB23}. This behaviour has been observed for different types of reasoning problems and different types of architectures, including pretrained transformers \citep{DBLP:conf/ijcai/ZhangLMCB23,DBLP:conf/iclr/WelleckLWBSK023}, {transformer} variants \citep{edge-transformer, kazemnejad2024} and Graph Neural Networks \citep{clutrr}. 

In this paper, we focus on the problem of \emph{systematic generalization} (SG), {and on} systematic reasoning about binary relations {in particular}. 
This refers to the ability of a model {to solve} test instances {by applying knowledge} obtained from training instances, {where the combination of inference steps that is needed is different from what has been seen during training} \citep{hupkes2020compositionality}. 
It is an essential ingredient for machines and humans to generalize from a limited amount of data \citep{lake2017building}. For relational systematic reasoning, 
{problem instances} can be represented as a labelled multi-graph (i.e.\ a knowledge graph) and the main reasoning task is to systematically infer the relationship between a head {entity} $h$ and target {entity} $t$. {Typically, the training data consists of instances that only require relatively short inference chains, where trained models are then evaluated} on increasingly larger test graphs. Graph Neural Networks (GNNs) intuitively seem well-suited for relational reasoning 
\citep{DBLP:conf/nips/ZhuZXT21, rgcn}, but in practice they underperform neuro-symbolic methods on SG \citep{DBLP:conf/nips/Rocktaschel017,DBLP:conf/icml/Minervini0SGR20}. Crucially, this is true despite the fact that GNNs are expressive enough to encode the required inference process. 

\begin{figure}[t!]
    \centering
    \vspace{-4ex}
    \includegraphics[width=0.9\textwidth]{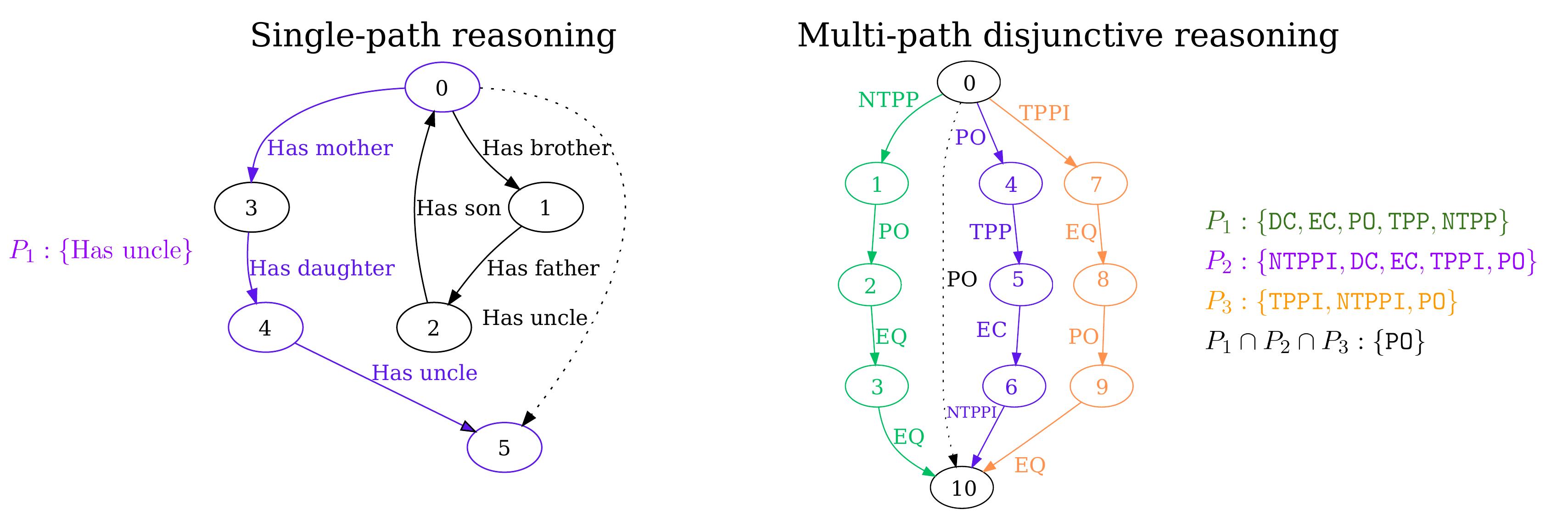}
    \caption{Left: A (single) relational path reasoning problem over family relations from CLUTRR \citep{clutrr}, {where the path $P_1$ allows us to infer the correct relation}. Right: A {multi-path} reasoning problem over RCC-8 relations where each path provides partial (disjunctive) information and the target label is obtained by combining information from paths $P_1$, $P_2$ and $P_3$.}
    \label{fig:hook-fig}
\end{figure}


Conceptually, there is a fundamental question that has largely remained unanswered: \emph{What makes neural-theorem-prover type methods successful for systematic reasoning?} We argue that the outperformance of such methods can be explained by their focus on modeling \emph{single relational paths}, as an alternative to local message passing. To predict the relationship between $h$ and $t$, conceptually, these methods consider all the (exponentially many) paths between them, select the most informative path, and make a prediction based on this path, {although in practice heuristics are used to avoid} exhaustive exploration \citep{DBLP:conf/icml/Minervini0SGR20}. The emphasis on \emph{single} paths provides a useful inductive bias, while also avoiding {problems with local message passing such as} over-smoothing. Compared to GNNs, neuro-symbolic methods also have an advantage in how individual relational paths are modeled. Consider a relational path $h \xrightarrow{r_1} x_1 \xrightarrow{r_2} ... \xrightarrow{r_k} t$, or simply $r_1;r_2;...;r_k$ if the entities {$x_i$} are unimportant. To predict the relationship between $h$ and $t$, {most methods essentially proceed by repeatedly choosing} two neighboring relations $r_i;r_{i+1}$ and replace them by their composition. Crucially, the order in which {these relations} are chosen may determine whether the model finds the answer, as certain orderings may require knowledge of unseen intermediate relationships. Neuro-symbolic methods can often avoid such issues by, in principle, considering all possible orderings. 

However, neuro-symbolic methods also have important drawbacks, including scalability, and most fundamentally, their focus on simple rules and single relational paths. Since these limitations are not tested by existing benchmarks, as a first contribution, we introduce a multi-path, disjunctive systematic reasoning benchmark based on qualitative spatial (RCC-8) and temporal (Interval Algebra) calculi \citep{DBLP:conf/kr/RandellCC92, allen1983maintaining}. The considered problems require models to combine partial (disjunctive) information from multiple relational paths, which are also present in real-world story understanding problems \citep{story-understanding-is-disjunctive}. Figure~\ref{fig:hook-fig} illustrates the difference between single and multi-path relational reasoning. 

{In this paper, we propose the Epistemic GNN (EpiGNN), a novel, scalable and parameter-efficient GNN for systematic relational reasoning. Motivated by the fact that aligning the model's architecture with an algorithm that approximately solves the given problem aids generalization \citep{DBLP:conf/iclr/XuLZDKJ20, bahdanu2019}, the EpiGNN is designed to simulate an approximation of the Algebraic Closure Algorithm (ACA) \citep{DBLP:conf/cp/RenzL05}, which solves multi-path reasoning on RCC-8 and IA. This translates to the following inductive biases in the EpiGNN's structure: (1) having a message passing function that explicitly simulates the composition of discrete relations in ACA (rather than general relation vector composition) (2) having epistemic (probabilistic) embeddings that can encode unions of base relations in ACA (3) having a pooling operation that simulates the intersection operator in ACA.}
Below is a summary of our main contributions:
\begin{itemize}
    \item We propose a novel GNN model, the EpiGNN, based on ACA, which rivals SOTA neuro-symbolic methods on simple single-path base systematic reasoning while being {highly efficient, and at least two orders of magnitude} more parameter-efficient {in practice}. 
    \item We {introduce} two multi-path, disjunctive relational reasoning benchmarks that are challenging for various SOTA models, {but where EpiGNNs still perform well}.  
    \item Despite being designed for SG-type link prediction, we show that EpiGNNs rival SOTA specialized approaches for standard inductive knowledge graph completion.  
    \item We theoretically {link EpiGNNs to an approximation of the algebraic closure algorithm, which we term \emph{directional algebraic closure}}.
\end{itemize}
\section{Learning to reason in relational domains}\label{secLTR}

We focus on the problem of reasoning about binary relations. We assume that a set $\mathcal{F}$ of facts is given, referring to a set of relations $\mathcal{R}$ and a set of entities $\mathcal{E}$. Each of these facts is an \emph{atom} of the form $r(a,b)$, with $r\in\mathcal{R}$ and $a,b\in\mathcal{E}$. We furthermore assume that there exists a set of rules $\mathcal{K}$ which can be used to infer relationships between the entities in $\mathcal{E}$. We write $\mathcal{K}\cup\mathcal{F} \models r(a,b)$ to denote that $r(a,b)$ can be inferred from the facts in $\mathcal{F}$ and the rules in $\mathcal{K}$. The problem that we are interested in is to develop a neural network model $f_{\theta}$ which can predict for a given assertion $r(a,b)$ whether $\mathcal{K}\cup\mathcal{F} \models r(a,b)$ holds or not. 
Note that the set of rules $\mathcal{K}$ is not given. We instead have access to a number of fact sets $\mathcal{F}_i$, together with examples of atoms $r(a,b)$ which can be inferred from these fact graphs and atoms which cannot. To be successful, $f_{\theta}$ must essentially learn the rules from $\mathcal{K}$, and the considered model must be capable of applying the learned rules in a systematic way to new problems. 

\subsection{Learning to reason about simple path rules}\label{secReasoningSimplePathRules}
The most commonly studied setting concerns Horn rules of the following form ($n\geq 3$):
\begin{align}\label{eqRuleWithNAtoms}
r(X_1,X_n) \leftarrow r_1(X_1,X_2) \wedge \ldots \wedge r_{n-1}(X_{n-1},X_n)
\end{align}
We will refer to such rules as \emph{simple path rules}. Note that we used the convention from logic programming to write the head of the rule on the left-hand side, and we use uppercase symbols such as $X_i$ to denote variables. We can naturally associate a labelled multi-graph $G_{\mathcal{F}}$ with the given set of facts. The rule \eqref{eqRuleWithNAtoms} expresses that when two entities $a$ and $b$ are connected by a relational path $r_1;\ldots;r_{n-1}$ in this graph $G_{\mathcal{F}}$, then we can infer that $r(a,b)$ is true. Without loss of generality, we can restrict this setting to rules with two atoms in the body: $r(X_1,X_3) \leftarrow r_1(X_1,X_2) \wedge r_2(X_2,X_3)$. Indeed, a rule with more than two atoms in the body can be straightforwardly simulated by introducing fresh relation symbols. The semantics of entailment are defined in the usual way (see {Appendix \ref{secSemanticsRules}} for details).  We can think of the process of showing $\mathcal{K}\cup\mathcal{F}\models r(a,b)$ in terms of operations on relational paths. We say that the path $r_1;\ldots;r_{i-2};s;r_{i+1};\ldots;r_k$ can be derived from $r_1;\ldots;r_k$ in one step if $\mathcal{K}$ contains a rule of the form $s(X,Z)\leftarrow r_{i-1}(X,Y)\wedge r_{i}(Y,Z)$. 
We say that $r$ can be derived from $r_1;\ldots;r_k$ if there exists a sequence of $k-1$ such steps that yields $r$.
\begin{proposition}\label{propSimpleRulesAsPathsReasoning}
We have that $\mathcal{K}\cup\mathcal{F}\models r(a,b)$ holds iff there exists a relational path $r_1;\ldots;r_k$ connecting $a$ and $b$ in the graph $G_{\mathcal{F}}$ such that $r$ can be derived from $r_1;\ldots;r_k$. 
\end{proposition}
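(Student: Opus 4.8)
The plan is to prove both directions of the biconditional, using the standard fixpoint characterization of entailment for Horn rules (as referenced in Appendix~\ref{secSemanticsRules}), and to connect the derivation relation on relational paths to single-step rule applications. Throughout I will use the reduction mentioned in the text: without loss of generality, every rule in $\mathcal{K}$ has exactly two atoms in its body, i.e.\ is of the form $s(X,Z) \leftarrow r_{i-1}(X,Y) \wedge r_i(Y,Z)$. I will write $\mathcal{F}^{*}$ for the least model of $\mathcal{K} \cup \mathcal{F}$, so that $\mathcal{K}\cup\mathcal{F} \models r(a,b)$ iff $r(a,b) \in \mathcal{F}^{*}$, and $\mathcal{F}^{*}$ is obtained as the union of the chain $\mathcal{F}_0 = \mathcal{F} \subseteq \mathcal{F}_1 \subseteq \mathcal{F}_2 \subseteq \cdots$, where $\mathcal{F}_{m+1}$ adds all atoms obtainable by one application of a rule in $\mathcal{K}$ to atoms in $\mathcal{F}_m$.

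For the ``if'' direction, suppose $r_1;\ldots;r_k$ connects $a$ and $b$ in $G_{\mathcal{F}}$ and $r$ can be derived from it via a sequence of $k-1$ single-step path-rewriting moves. I would prove by induction on the number of derivation steps the following slightly stronger statement: if a path $s_1;\ldots;s_\ell$ connecting $u$ and $v$ in $G_{\mathcal{F}}$ can be rewritten to the single relation $s$ in $\ell-1$ steps, then $s(u,v) \in \mathcal{F}^{*}$. The base case $\ell = 1$ is immediate since the path is a fact in $\mathcal{F} \subseteq \mathcal{F}^{*}$. For the inductive step, the first rewriting move replaces some adjacent pair $s_j;s_{j+1}$ by a relation $t$, where $t(X,Z) \leftarrow s_j(X,Y) \wedge s_{j+1}(Y,Z)$ is in $\mathcal{K}$; since the intermediate entity $x_j$ on the path witnesses $s_j(x_{j-1},x_j)$ and $s_{j+1}(x_j,x_{j+1})$ in $\mathcal{F}^{*}$, we get $t(x_{j-1},x_{j+1}) \in \mathcal{F}^{*}$, so the shortened path of length $\ell-1$ is again realized in the graph associated with $\mathcal{F}^{*}$ — and here I need to be a little careful, because the induction is over paths in $G_{\mathcal{F}}$ but after one step the new edge lives in $G_{\mathcal{F}^{*}}$, not $G_{\mathcal{F}}$. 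The clean fix is to phrase the induction over paths in $G_{\mathcal{F}^{*}}$ from the start (every $G_{\mathcal{F}}$-path is also a $G_{\mathcal{F}^{*}}$-path), and to observe that $G_{\mathcal{F}^{*}}$ is closed under the rewriting moves in exactly the sense just described; then the induction goes through and gives $s(u,v)\in\mathcal{F}^{*}$, hence $r(a,b)\in\mathcal{F}^{*}$.

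For the ``only if'' direction, assume $r(a,b) \in \mathcal{F}^{*}$, so $r(a,b) \in \mathcal{F}_m$ for some least $m$. I would induct on $m$ to show the existence of a path in $G_{\mathcal{F}}$ connecting $a$ and $b$ from which $r$ can be derived. If $m=0$ then $r(a,b)$ is a fact, so the length-one path $r$ does the job (with zero derivation steps, matching $k-1 = 0$). If $m>0$, then $r(a,b)$ was produced by a rule $r(X,Z) \leftarrow r'(X,Y)\wedge r''(Y,Z)$ applied to atoms $r'(a,c), r''(c,b) \in \mathcal{F}_{m-1}$. By the induction hypothesis there are $G_{\mathcal{F}}$-paths $\pi'$ from $a$ to $c$ and $\pi''$ from $c$ to $b$, of lengths $k'$ and $k''$, from which $r'$ resp.\ $r''$ can be derived in $k'-1$ resp.\ $k''-1$ steps. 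Concatenating gives a $G_{\mathcal{F}}$-path $\pi' ; \pi''$ from $a$ to $b$ of length $k' + k''$; rewriting the $\pi'$-part down to $r'$ and the $\pi''$-part down to $r''$ (these moves act on disjoint portions of the path, so they compose without interference) yields the two-edge path $r';r''$, and one final application of the rule rewrites it to $r$. The total number of steps is $(k'-1) + (k''-1) + 1 = (k'+k'') - 1$, which is exactly one less than the length of the concatenated path, as required by the definition of ``$r$ can be derived from'' a path.

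The main obstacle is the bookkeeping in the ``if'' direction noted above — ensuring the induction hypothesis is stated over the right graph so that intermediate rewritten edges are legitimately available — together with being precise that a single rewriting move only ever needs the two atoms it composes (so that no ``unseen intermediate relationship'' is silently assumed) and that the step count $k-1$ is preserved exactly under concatenation. None of these is deep, but the statement is only clean once the fixpoint semantics of Appendix~\ref{secSemanticsRules} and the two-body-atom normal form are both in play; I would make both explicit at the start of the proof.
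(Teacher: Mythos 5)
Your proposal is correct and follows essentially the same route as the paper's proof: the ``only if'' direction is the same induction on the fixpoint stage with concatenation of the two sub-paths, and the ``if'' direction tracks the derivation steps while noting that each rewritten edge corresponds to an entailed atom (the paper phrases your ``work in $G_{\mathcal{F}^{*}}$'' fix as associating each relation on the rewritten path with an atom entailed by $\mathcal{K}\cup\mathcal{F}$). Your explicit bookkeeping of the step count $(k'-1)+(k''-1)+1=(k'+k'')-1$ is a detail the paper leaves implicit, but there is no substantive difference.
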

Inferring $r(a,b)$ thus conceptually consists of two distinct steps: (i) selecting a relational path between $a$ and $b$ and (ii) showing that $r$ can be derived from it. Several of the neural network methods that have been proposed in recent years for relational reasoning directly implement these two steps, most notably R5 \citep{r5} and NCRL \citep{DBLP:conf/iclr/ChengAS23}. Neural theorem provers \citep{DBLP:conf/nips/Rocktaschel017} implicitly also operate in a similar way, considering all possible paths and all possible derivations for these paths. 
However, both steps are problematic for standard GNNs. First, by focusing on local message passing, rather than on selecting individual relational paths, the node representations that are learned by a GNN run the risk of becoming ``overloaded'', as they intuitively aggregate information from all the paths that go through a given node. Moreover, even for graphs that consist of a single path, GNNs have a disadvantage: the use of local message passing intuitively means that relational paths have to be processed sequentially. For example, for $r_1;r_2;r_3$, {models such as NBFNet \citep{DBLP:conf/nips/ZhuZXT21}} can only process $r_1;r_2$ as the first valid composition and not $r_2;r_3$.

The fact that relational paths can only be processed sequentially by GNNs is an important limitation, 
{as this might require the model to apply rules, or even capture types of relations, that were not} present in the training data. For instance, 
we may encounter the following chain {of family relationships}:
\begin{align*}
a \xrightarrow{\textit{has-father}} x_1 \xrightarrow{\textit{has-father}} x_2 \xrightarrow{\textit{has-brother}} x_3 \xrightarrow{\textit{has-daughter}} x_4 \xrightarrow{\textit{has-brother}} x_5 \xrightarrow{\textit{has-mother}} b
\end{align*}
Suppose the model has never encountered the \textit{great-cousin} relation during training. Then we cannot expect it to capture the relational path {$\textit{has-father};\textit{has-father};\textit{has-brother};\textit{has-daughter}$}. However, if it can first derive $a \xrightarrow{\textit{has-father}} x_1 \xrightarrow{\textit{has-aunt}} b$ then the problem disappears (assuming z{the model} understands the \textit{great-aunt} relation).


\subsection{Learning to reason about disjunctive rules}\label{ssec:learning-disjunctive-reasoning}
The rules that we have considered thus far uniquely determine the relationship between two entities $a$ and $c$, given knowledge about how $a$ relates to $b$ and $b$ relates to $c$. In many settings, however, such knowledge might not be sufficient for completely characterising the relationship between $a$ and $c$. 
Domain knowledge might then be expressed using disjunctive rules of the following form:
\begin{align}\label{eqDisjunctiveRule}
s_1(X,Z)\vee \ldots \vee s_k(X,Z) \leftarrow r_1(X,Y) \wedge r_2(Y,Z)
\end{align}
In other words, if we know that $r_1(a,b)$ and $r_2(b,c)$ hold for some entities $a,b,c$, then \steven{we can only} infer that \steven{one} of the relations $s_1,\ldots,s_k$ must hold between $a$ and $c$. We then typically also have constraints of the form $\bot \leftarrow r_1(X,Y) \wedge r_2(X,Y)$, encoding that $r_1$ and $r_2$ are disjoint. Many \steven{popular calculi}
for spatial and temporal reasoning fall under this setting, \steven{including the Region Connection Calculus (RCC-8)  and the Interval Algebra (IA)} 
\citep{DBLP:journals/cacm/Allen83,DBLP:conf/kr/RandellCC92}. \steven{RCC-8} uses eight relations to qualitatively describe spatial relations, as shown in Fig.~\ref{fig:rcc8-elements}. \steven{For instance}, $\ntpp(a,b)$ means that $a$ is a proper part of the interior of $b$. \steven{IA} is defined similarly (see  App.~\ref{secRCC8}).

\begin{wrapfigure}{R}{0.33\textwidth}
\vspace{-0.1in}
\centering
\begin{minipage}{0.95\linewidth}
\centering
\scalebox{0.8}{
\begin{tikzpicture}[scale=0.7]
  \tikzset{region/.style={draw, circle, minimum size=0.9cm}}

\fill[gray!15] (-5,-5.6) rectangle (1.9,3.);
  \begin{scope}[shift={(-4,2)}]
    \node[region, fill=green!30] (a1) at (0,0) {\textcolor{magenta}a};
    \node[region, fill=yellow!30] (b1) at (1.6,0) {\textcolor{cyan}b};
    \node[label] (l1) at (1,-1) {\texttt{dc}({\textcolor{magenta}a},{\textcolor{cyan}b})};
  \end{scope}

  \begin{scope}[shift={(0,2)}]
    \tkzDefPoint(0.02,0){A}  
    \tkzDefPoint(0.75,0){B}  

    \fill[green!30] (0,0) circle (0.69cm);
    
    \fill[yellow!30] (0.75,0) circle(0.69cm);
    
    \draw (0,0) circle (0.69cm);
    \draw (0.75,0) circle (0.69cm);
    
    \node at (-0.2,0) {\textcolor{magenta}a};
    \node at (1.,0) {\textcolor{cyan}b};
    \tkzInterCC[R](A,0.67 cm)(B,0.67 cm) \tkzGetPoints{M1}{N1}  
    \begin{scope}
        \tkzClipCircle(A,M1)
        \tkzFillCircle[color=orange!60](B,M1)
    \end{scope}
    \node[label] (l2) at (0.5,-1) {\texttt{po}({\textcolor{magenta}a},{\textcolor{cyan}b})};
  \end{scope}
  \begin{scope}[shift={(-4,0)}]
    \node[region, fill=green!30] (a2) at (0,0) {{\textcolor{magenta}a}};
    \node[region, fill=yellow!30] (b2) at (1.3,0) {{\textcolor{cyan}b}};
    \node[label] (l3) at (1.1,-1) {\texttt{ec}({\textcolor{magenta}a},{\textcolor{cyan}b})};
  \end{scope}

  \begin{scope}[shift={(0.2,0)}]
    \node[region, fill=orange!60] (a4) at (0,0) {{\textcolor{magenta}a},{\textcolor{cyan}b}};
    \node[label] (l4) at (0.3,-1) {\texttt{eq}({\textcolor{magenta}a},{\textcolor{cyan}b})};
  \end{scope}

  \begin{scope}[shift={(-3.5,-2)}]
    \node[region, fill=yellow!30] (b5) at (0,0) {\hspace{4mm}{\textcolor{cyan}b}};
    \node[region, scale=0.6, fill=green!30] (a5) at (-0.3,0) {\Large \textcolor{magenta}a};
    \node[label] (l5) at (0.7,-1) {\texttt{tpp}({\textcolor{magenta}a},{\textcolor{cyan}b})};
  \end{scope}

  \begin{scope}[shift={(0.2,-2)}]
    \node[region,fill=green!30] (a6) at (0,0) {\hspace{4mm} \textcolor{magenta}a};
    \node[region, scale=0.6, fill=yellow!30] (b6) at (-0.3,0) {\Large \textcolor{cyan}b};
    \node[label] (l6) at (0.3,-1) {\texttt{tppi}({\textcolor{magenta}a},{\textcolor{cyan}b})};
  \end{scope}

  \begin{scope}[shift={(-3.4,-4.2)}]
    \node[region, fill=yellow!30] (b7) at (0,0) {{\raisebox{6mm}{\hspace{-4.5mm}{\textcolor{cyan}b}}}};
    \node[region, scale=0.4, fill=green!30] (a7) at (0,0) {\Huge \textcolor{magenta}a};
    \node[label] (l7) at (.5,-1) {\texttt{ntpp}({\textcolor{magenta}a},{\textcolor{cyan}b})};
  \end{scope}

  \begin{scope}[shift={(0.25,-4.2)}]
    \node[region, fill=green!30] (a8) at (0,0) {{\raisebox{6mm}{\hspace{-4.5mm}\textcolor{magenta}a}}};
    \node[region, scale=0.4, fill=yellow!30] (b8) at (0,0) {\Huge \textcolor{cyan}b};
    \node[label] (l8) at (0.1,-1) {\texttt{ntppi}({\textcolor{magenta}a},{\textcolor{cyan}b})};
  \end{scope}

\end{tikzpicture}
}
\end{minipage}
\caption{RCC-8 relations.}\label{fig:rcc8-elements}
\vspace{-0.25in}
\end{wrapfigure}
\steven{Existing benchmarks for systematic relational reasoning do not consider disjunctive rules, and thus only test the reasoning abilities of models to a limited extent.
We therefore introduce} two new relational reasoning benchmarks, 
based on \steven{RCC-8 and IA respectively}. An example RCC-8 problem \steven{is shown} in Figure~\ref{fig:hook-fig}, \steven{illustrating how} 
we may need to aggregate evidence from multiple relational paths, as a single path does not \steven{always} yield a unique relation. \steven{For instance}, from \steven{path} $P_3$ we can only infer that \steven{one of the relations $\tppi,\ntppi,\po$ holds between nodes 0 and 10. Full details of the proposed benchmarks are provided in Appendix \ref{secRCC8}.}
Methods such as R5 and NCRL, which rely on a single path to make predictions, cannot be used in this setting. Neural theorem provers cannot handle disjunctive rules either, \steven{and cannot be generalized in a scalable way}.
We thus need a new approach for 
\steven{for learning to reason about disjunctive rules}. 

\steven{The characterization in Proposition \ref{propSimpleRulesAsPathsReasoning} explains how models can be designed for deciding entailment with simple rules}. 
A similar characterization for entailment with disjunctive rules is unfortunately not possible, as deciding entailment with \steven{such} rules is NP-complete \steven{in general}. 
However, for RCC-8 \steven{and IA}, \steven{and many other calculi}, deciding entailment in polynomial time is \emph{possible} using the \emph{algebraic closure} algorithm \citep{DBLP:conf/cp/RenzL05}. The main idea is to keep track, for every pair of entities, of which relationships are possible between these entities. This knowledge of possible relationships is then propagated to infer constraints about relationships between other entities using the rules in $\mathcal{K}$ (details are provided in the App.~\ref{sec:algebraic-closure}). \steven{As we will see next, our proposed epistemic GNN model is based on the same idea, and can be viewed as an approximate differentiable algebraic closure method}.

\section{An Epistemic GNN for systematic reasoning}
We now present the Epistemic GNN (EpiGNN), \steven{a GNN which aims to overcome a number of key limitations of existing models for systematic relational reasoning.}
Specifically, \steven{EpiGNNs are} more efficient than current neuro-symbolic methods while, \steven{as we will see in Section \ref{sec:experiments}}, matching their performance on reasoning with simple path rules. \steven{Moreover, they are} also able to reason about disjunctive rules, \steven{which has not been previously considered for neuro-symbolic methods}. 

\begin{figure*}
    \centering
    \centerline{\includegraphics[width=1.\textwidth]{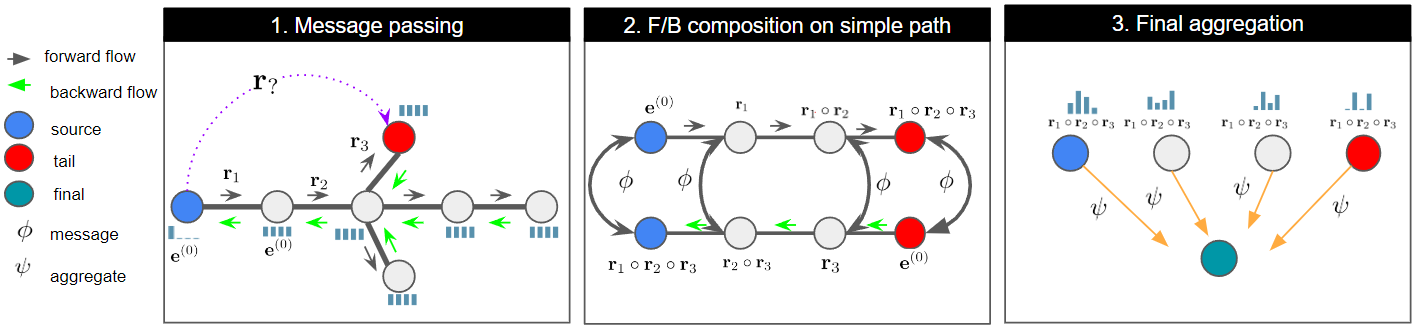}}
    \caption{Overview of the EpiGNN. Step 1: Independently learn the forward and backward entity embeddings through epistemic message passing. Step 2: Compose the entity embeddings on a path between the head (blue) and target (red) entity from the forward and backward model. Each composition predicts the target relation. Step 3: Aggregate the evidence provided by each prediction.}
    \label{fig:overview}
\end{figure*}

We start from the principle that reasoning is fundamentally about manipulating epistemic states (i.e.\ states of knowledge) and the GNN should reflect this: there should be a clear correspondence between the node embeddings that are learned by the model and what we can infer about the relationships that may hold between the entities of interest. Inspired by NBFNet~\citep{DBLP:conf/nips/ZhuZXT21}, a GNN that models path-based representations by anchoring node embeddings to a source, we also use a network which learns the relationships between one designated head entity $h$ and all other entities. Let us write $\mathbf{e}^{(l)}\in \mathbb{R}^n$ for the embedding of entity $e$ in layer $l$ of the network. This embedding reflects which relationships may hold between $e$ and the designated entity $h$. We think of $\mathbf{e}^{(l)}$ as a probability distribution over possible relationships. Accordingly, the embeddings are initialized as:
\begin{align}
\mathbf{e}^{(0)} = 
\begin{cases}
(1,0,\ldots,0) & \text{if $e=h$}\\
(\frac{1}{n},\ldots,\frac{1}{n}) & \text{otherwise}
\end{cases}
\label{eq:head-entity-init}
\end{align}
{We associate the first coordinate} with the identity relation. Since $h$ is identical to itself, we define $\mathbf{h}^{(0)}$ as $(1,0,\ldots,0)$. For the other entities, since we have no knowledge, their embeddings are initialized as a uniform distribution. Note that the entity components correspond to abstract primitive relations, rather than the relations {from} $\mathcal{R}$. We only require that the relations \steven{from $\mathcal{R}$ can} be defined in terms of these primitive relations. This distinction is important, because it allows us to capture semantic dependencies between relations (e.g.\ both \emph{parent} and \emph{father} \steven{may} exist in $\mathcal{R}$) and {to} express {(composed)} relationships which are outside $\mathcal{R}$.  For $l\geq 1$, the embeddings are updated as:
\begin{align*}
\mathbf{e}^{(l)} = \psi(\{\mathbf{e}^{(l-1)}\} \cup \{ \phi(\mathbf{f}^{(l-1)},\mathbf{r}) \,|\, r(f,e) \in \mathcal{F}\})
\end{align*}
where the argument of the pooling operator $\psi$ is a multi-set and $\mathbf{r}\in \mathbb{R}^n$ is a learned embedding of relation $r\in\mathcal{R}$, \steven{where} $\mathbf{r}=(r_1,\ldots,r_n)$ \steven{encodes} a probability distribution, i.e.~$r_i\geq 0$ and $\sum_i r_i=1$.

\paragraph{Message passing}
The vector $\phi(\mathbf{f}^{(l-1)},\mathbf{r})$ should capture the possible relationships between $h$ and $e$, given the knowledge provided by $\mathbf{f}^{(l-1)}$ about the possible relationships between $h$ and $f$ and the fact $r(f,e)$. Since both $\mathbf{f}^{(l-1)}$ and $\mathbf{r}$ are modeled as probability distributions over primitive relations, $\phi(\mathbf{f}^{(l-1)},\mathbf{r})$ can be defined in terms of compositions of primitive relations. Specifically, let the vector $\mathbf{a}_{ij}\in\mathbb{R}^n$ represent the composition of primitive relations $i$ and $j$, which we treat as a probability distribution, i.e.\ we require the components of $\mathbf{a}_{ij}$ to be non-negative and to sum to 1. We define:
\begin{align}\label{eqForwardModel}
\phi((f_1,\ldots,f_n),(r_1,\ldots,r_n)) = \sum_{i=1}^n\sum_{j=1}^n f_ir_j \mathbf{a}_{ij}
\end{align}
The initialization of $\mathbf{h}^{(0)}$ was based on the assumption that the first coordinate of the embeddings corresponds to the identity relation. Accordingly, we want to ensure that $\phi((1,0,\ldots,0),(r_1,\ldots,r_n))=(r_1,\ldots,r_n)$. We thus fix $\mathbf{a}_{1j}= \text{one-hot}(j)$, where we write $\text{one-hot}(j)$ to denote the $n$-dimensional vector which is 1 in the $j\textsuperscript{th}$ coordinate and 0 elsewhere. Note that the composition of two primitive relations is also a probability distribution over primitive relations. This provides an important inductive bias, as it encodes that the relationship between any two entities can be described by one of the $n$ considered primitive relations. Rule based methods, including NTP based approaches, also encode this assumption. However, the message passing operations that are used by standard GNNs typically do not. We hypothesise that the lack of this inductive bias partially explains why \steven{standard} GNNs fail at systematic generalization for reasoning in relational domains. 

\paragraph{Pooling} 
The pooling operator $\psi$ has to be chosen in accordance with the view of embeddings as epistemic states: if $\mathbf{x_1},\ldots,\mathbf{x_k}$ capture sets of possible relationships then $\psi\{\mathbf{x_1},\ldots,\mathbf{x_k}\}$ should intuitively capture the intersection of these sets. This requirement was studied by \citep{DBLP:journals/ijar/Schockaert24}, whose central result is that the minimum and component-wise (i.e.\ Hadamard) product are compatible with \steven{this view}, but sum pooling is not. In our setting, since the embeddings capture probability distributions rather than sets, $\psi$ is \steven{$L_1$-normalized after applying the minimum or product}.  
\paragraph{Training}
To train the model, we assume \steven{that} we have access to a set of examples of the form $(\mathcal{F}_i,h_i,t_i,r_i)$, where $h_i,t_i$ are entities appearing in $\mathcal{F}_i$ and the atom $r_i(h_i,t_i)$  can be inferred from \steven{$\mathcal{F}_i$}. Different examples will involve a different fact set $\mathcal{F}_i$ but the set of relations $\mathcal{R}$ appearing in these fact sets is fixed. We train the model via contrastive learning. This means that for each positive example $(\mathcal{F}_i,h_i,t_i,r_i)$ we can consider negative examples of the form $(\mathcal{F}_i,h_i,t_i,r')$ with  $r'\in\mathcal{R}\setminus \{r_i\}$. 
Let us write $\mathbf{t}_i$ for the final-layer embedding of entity $t_i$ in the graph {associated with} $\mathcal{F}_i$ (with $h_i$ as the designated head entity). We train the model using a margin loss, imposing that the cross-entropy between $\mathbf{t}_i$ and $\mathbf{r}$ {should be} lower for positive examples than for negative examples.

\paragraph{Facets}
A crucial hyperparameter is the dimensionality $n$ of the embedding $\mathbf{r}$, as it determines the number of primitive relations. A large $n$ is essential to express all the relations of interest, but choosing $n$ too high may lead to overfitting. To address this, we propose to jointly train \steven{$m$ different} models, each with a relatively low dimensionality. \steven{The underlying intuition is that each model focuses on a different \emph{facet} of the relations. Because these facets are easier to model than the target relations themselves, each of the $m$ models can} individually remain relatively simple. In the loss function, we simply add up the cross-entropies from {these} $m$ models \steven{(see Appendix \ref{secAdditionalExperimentalDetails} for details)}. 

\paragraph{Expressivity}
\steven{The algebraic closure algorithm maintains a set of possible relations for each pair of entities. Simulating this algorithm thus requires a number of vectors which is quadratic in the number of entities. As this limits scalability, we instead consider an approximation, which we call \emph{directional algebraic closure}, where we only maintain sets of possible relations between the head entity and the other entities (see Appendix \ref{sec:algebraic-closure} for details). EpiGNNs can be seen as a differentiable counterpart of directional algebraic closure, where standard directional algebraic closure emerges as a special case.}
\begin{proposition}[informal]\label{prop:neural-d-a-closure}
There exists a parameterisation of \steven{the} vectors $\mathbf{r}$ and $\mathbf{a}_{ij}$ such that \steven{the predictions of the EpiGNN exactly capture what can be inferred using directional algebraic closure.}
\end{proposition}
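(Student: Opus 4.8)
The plan is to exhibit an explicit, hand-crafted setting of the relation embeddings $\mathbf{r}$ and of the composition vectors $\mathbf{a}_{ij}$ under which the EpiGNN becomes a differentiable re-encoding of directional algebraic closure, and then to show by induction on the layer index that the \emph{support} of every node embedding tracks exactly the set of relations maintained by the algorithm. Concretely, I would take $n$ to be the number of base relations of the calculus (e.g.\ $8$ for RCC-8, $13$ for IA), identify primitive relation $1$ with the identity relation $\eq$, set $\mathbf{r}=\text{one-hot}(\iota(r))$ where $\iota(r)$ is the index of $r\in\mathcal{R}$ among the base relations, and set $\mathbf{a}_{ij}$ to be the uniform distribution supported on the composition $i\circ j$ dictated by the calculus' composition table. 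This is consistent with the constraint $\mathbf{a}_{1j}=\text{one-hot}(j)$ already imposed in the main text, since $\eq\circ j=\{j\}$. I would also assume, as is standard in this setting, that $\mathcal{F}$ is closed under converse facts (or that the model adds reverse edges), so that the EpiGNN message-passing neighbourhood coincides with the relational information propagated by the algorithm; the backward model and the path-composition/aggregation variant are handled symmetrically, mirroring the algorithm's use of converse relations and its combination of multiple derivations.

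The technical core consists of two support-tracking lemmas. First, if $\mathbf{f}$ is any non-negative vector with support $S_f$ and $\mathbf{r}=\text{one-hot}(j)$, then $\phi(\mathbf{f},\mathbf{r})=\sum_i f_i\,\mathbf{a}_{ij}$ has support exactly $\bigcup_{i\in S_f}(i\circ j)$; that is, message passing realises the disjunctive composition step at the level of supports. Second, both the $L_1$-normalised minimum and the $L_1$-normalised Hadamard product of vectors $\mathbf{x}_1,\ldots,\mathbf{x}_k$ have support exactly $\bigcap_k \mathrm{supp}(\mathbf{x}_k)$, with the convention that an empty intersection signals a detected inconsistency, treated as a separate flagged case. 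Including $\mathbf{e}^{(l-1)}$ in the pooled multiset then forces $\mathrm{supp}(\mathbf{e}^{(l)})\subseteq\mathrm{supp}(\mathbf{e}^{(l-1)})$, matching the monotone refinement-only behaviour of algebraic closure.

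With these lemmas the main argument is an induction on the layer $l$. For the base case, $\mathrm{supp}(\mathbf{h}^{(0)})=\{1\}=\{\eq\}$ and $\mathrm{supp}(\mathbf{e}^{(0)})=\{1,\ldots,n\}$ for $e\neq h$, which are precisely the initial sets of directional algebraic closure. For the inductive step, the two lemmas show that one EpiGNN layer performs, simultaneously at every node, exactly one synchronous round of the algorithm's constraint-propagation update $S_e\mapsto S_e\cap\bigcap_{r(f,e)\in\mathcal{F}}(S_f\circ r)$. Since the underlying lattice of tuples of subsets of $\{1,\ldots,n\}$ has finite height $O(n\,|\mathcal{E}|)$ and both the synchronous and the usual chaotic iteration of this monotone operator converge to the same closure, after sufficiently many layers $\mathrm{supp}(\mathbf{t})$ equals the fixpoint set $S_t$. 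Finally I would translate this support identity into the statement about predictions: under the chosen parameterisation the model's scoring of a candidate relation $r$ depends only on whether $\iota(r)\in\mathrm{supp}(\mathbf{t})$, so the set of relations ranked as most plausible by the EpiGNN equals the set of relations the algorithm deems possible, and in particular $r(h,t)$ is uniquely predicted iff $S_t=\{\iota(r)\}$, i.e.\ iff it is entailed under directional algebraic closure.

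I expect the main obstacle to be bookkeeping rather than anything conceptual: reconciling the fixed-depth, synchronous message passing of the EpiGNN with the asynchronous fixpoint formulation of the algorithm (resolved by the monotonicity-and-finite-height argument and by taking enough layers), and carefully handling the degenerate cases of the pooling operator — in particular normalisation when a support becomes empty, and the fact that a literally one-hot $\mathbf{r}$ makes the cross-entropy scoring degenerate ($+\infty$). This is why the ``predictions'' part of the claim is cleanest when phrased in terms of supports and arg-max rather than raw loss values; a smoothed one-hot $\mathbf{r}$ preserves the ordering while keeping everything finite, and I would state the formal version of the proposition accordingly.
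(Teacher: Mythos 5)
Your parameterisation is exactly the one the paper uses (one-hot relation vectors, $\mathbf{a}_{ij}$ uniform over the composition table entry $r_i\circ r_j$, min or normalised Hadamard pooling), and your two support-tracking lemmas together with the layer induction are the same skeleton as the paper's proof of the formal version (Proposition~\ref{propExpressivityMin}). So the approach is essentially identical. However, your induction invariant is stated too strongly and its base case is false as written: directional algebraic closure initialises $X_e^{(0)}=\{r\}$ whenever $r(h,e)\in\mathcal{F}$, whereas the EpiGNN initialises every non-head embedding to the uniform distribution, so $\mathrm{supp}(\mathbf{e}^{(0)})=\{1,\ldots,n\}\neq X_e^{(0)}$. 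The direct facts out of $h$ only enter the embeddings after one round of message passing, and because each layer composes with the \emph{previous} layer's (possibly stale) supports, exact layer-by-layer equality never holds. The paper instead proves the sandwich $X_e^{(i)} \supseteq \{r_j \mid e^{i+1}_j>0\} \supseteq X_e^{(i+1)}$, which is the correct invariant and still pins down the same fixpoint. Your fallback via monotonicity and finite lattice height does rescue the conclusion at convergence, but if you keep ``exactly one synchronous round'' as the inductive step, the proof breaks at $l=0$; you should either shift the indexing and prove the two-sided inclusion, or restart the algorithm's iteration from the coarser initialisation and argue the closures coincide.

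A second, smaller gap: you flag empty supports as a ``separate case'' but never exclude them. The paper assumes $\mathcal{K}\cup\mathcal{F}\not\models\bot$ and uses this assumption substantively --- e.g.\ when both $r_l(h,e)$ and $r_p(f,e)$ lie in $\mathcal{F}$, consistency is what guarantees that the $l$\textsuperscript{th} coordinate of the message $\frac{1}{n}\sum_i \mathbf{a}_{ip}$ arriving from $f$ is nonzero, so that min-pooling does not annihilate the coordinate asserted by the direct fact and the $L_1$ normalisation stays well defined. You should make that assumption explicit rather than treating emptiness as a benign flagged case. Your remarks on the degenerate cross-entropy under literally one-hot $\mathbf{r}$ and the $\varepsilon$-smoothing for the product pooling are sensible and match what the paper does for $\psi_{\odot}$; note though that the paper's formal statement is phrased purely in terms of embedding supports and does not attempt the argmax-over-scores translation you sketch.
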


\paragraph{Forward-backward model}
As we noted in Section \ref{secReasoningSimplePathRules}, the order in which the relations on a relational path $r_1;...;r_k$ are composed sometimes matters. The model we have constructed thus far always composes such paths from left to right, i.e.\ we first compose $r_1$ and $r_2$, then compose the resulting relation with $r_3$, etc. To mitigate this limitation, we introduce a backward model, \steven{which} relies on a designated tail entity. \steven{Similar as before, we initialise the embeddings as $\mathbf{t}^{(0)}=(1,0\ldots,0)$ and $\mathbf{e}^{(0)}=(1/n,\ldots,1/n)$ for $e\neq t$.}
These embeddings are updated as follows:
\begin{align}\label{eqBackwardModel}
\mathbf{e}^{(l)} = \psi(\{\mathbf{e}^{(l-1)}\} \cup \{ \phi(\mathbf{r},\mathbf{f}^{(l-1)}) \,|\, r(e,f) \in \mathcal{F}\})
\end{align}
where $\psi$ and $\phi$ are defined as before. Note that the backward model does not introduce any new parameters. We rely on the idea that $\phi$ captures the composition of relation vectors. In the forward model, the embedding of an entity $e$ is interpreted as capturing the relationship between $h$ and $e$ and in the backward model, it captures the relationship between $e$ and $t$. This is why $\mathbf{r}$ appears as the second argument of $\phi$ in \eqref{eqForwardModel} and as the first argument in \eqref{eqBackwardModel}. Let $\mathbf{e}^\rightarrow$ and $\mathbf{e}^\leftarrow$ be the final-layer embedding of $e$ in the forward and backward model. In particular, $\mathbf{t}^{\rightarrow}$ and ${\mathbf{h}^{\leftarrow}}$ now both capture the relationship between $h$ and $t$. Moreover, for any entity $e$ on a path between $h$ and $t$, the vector $\phi(\mathbf{e}^{\rightarrow},\mathbf{e}^{\leftarrow})$ should also capture this relationship. We take advantage of the aggregation operator $\psi$ to take into account all these predictions. In particular, we construct the following vector:
\begin{align}
\label{eq:fb-aggregation}
\steven{\mathbf{s}} = \psi( \{\mathbf{t}^{\rightarrow},\mathbf{h}^{\leftarrow}\} \cup \{ \phi(\mathbf{e}^{\rightarrow},\mathbf{e}^{\leftarrow}) \,|\, e\in \mathcal{E}_{h,t} \})
\end{align}
where we write $\mathcal{E}_{h,t}$ for the entities that appear on some path from $h$ to $t$. When there are multiple paths between $h$ and $t$, we randomly select one of the shortest paths to define $\mathcal{E}_{h,t}$. The model is then trained as before, using $\steven{\mathbf{s}}$ as the predicted relation vector rather than $\mathbf{t}^{\rightarrow}$. Note that while this approach cannot exhaustively consider all possible {derivations}, it has the key advantage of remaining highly efficient. 
A schematic of the EpiGNN 
learning dynamics is shown in Figure~\ref{fig:overview}.

\section{Experiments}\label{sec:experiments}
We use the challenging problem of inductive relational reasoning to evaluate our proposed model against GNN, transformer and neuro-symbolic baselines. 
We consider two variants of the EpiGNN, which differ in the choice of the pooling operator: component-wise multiplication (EpiGNN-$\texttt{mul}$) and min-pooling (EpiGNN-$\texttt{min}$). \steven{Most of} the considered benchmarks involve relation classification queries of the form $(h, ?, t)$, asking which relation holds between a given head entity $h$ and tail entity $t$. We focus in particular on systematic generalization, to assess whether models can deal with large distributional shifts from the training set, which is paramount in many real-world settings \citep{liang2021distributionalshift}. We consider two existing benchmarks designed to test systematic generalization for relational reasoning: CLUTRR \citep{clutrr} and Graphlog \citep{graphlog}. We also \steven{evaluate on two novel benchmarks: one involving RCC-8 relations and one based on IA. These go beyond existing} benchmarks on two fronts, as illustrated in Figure~\ref{fig:hook-fig}: (i) the need to go beyond Horn rules to capture relational compositions and (ii) requiring models to aggregate information multiple relational paths. For CLUTRR, \steven{RCC-8 and IA}, to test for systematic generalization, models are trained on small graphs and subsequently evaluated on larger graphs. In particular, for CLUTRR, the length $k$ of the considered relational paths is varied, while for RCC-8 \steven{and IA} we vary both the number of relational paths $b$ and their length $k$. In the case of Graphlog, the size of training and test graphs is similar, but models still need to apply learned rules in novel ways to perform well. \steven{We complement these experiments with an evaluation on the popular task of inductive knowledge graph completion. Here, the need for systematic generalization is less obvious and a much broader family of methods can be used. The main purpose of this analysis is to analyze how well EpiGNNs perform compared to domain-specialized models in a more general setting. {Finally}, we also analyze the parameter and time complexity of EpiGNNs.}

\begin{table}[t]
\centering
\scriptsize
\caption{Results (accuracy) on CLUTRR after training on problems with $k\in\{ 2, 3, 4 \}$  and then evaluating on problems with $k\in\{5,\ldots, 10 \}$.  Results marked with $^*$ were taken from \citep{DBLP:conf/icml/Minervini0SGR20}, those with $^\dagger$ from \citep{r5} and those with $^2$ from \citep{DBLP:conf/iclr/ChengAS23}. The best performance for each $k$ is highlighted in \textbf{bold}.
\label{tab:db9b8f04}} 
\begin{tabular}{lcccccc}
        \toprule
         & \textbf{5 Hops} & \textbf{6 Hops} & \textbf{7 Hops} & \textbf{8 Hops} & \textbf{9 Hops} & \textbf{10 Hops} \\
        \midrule
        EpiGNN-\texttt{mul} (ours) &0.99$\pm$.01 & \textbf{0.99$\pm$.01} & \textbf{0.99$\pm$.02} &0.99$\pm$.03 &0.96$\pm$.03 & \textbf{0.98$\pm$.02} \\
        EpiGNN-\texttt{min} (ours) &0.99$\pm$.01 &0.98$\pm$.02 &0.98$\pm$.03 &0.97$\pm$.06 &0.95$\pm$.04 &0.93$\pm$.07 \\
        \midrule
        NCRL$^2$ & \textbf{1.0$\pm$.01} & \textbf{0.99$\pm$.01} &0.98$\pm$.02 &0.98$\pm$.03&0.98$\pm$.03 &0.97$\pm$.02 \\  
        R5$^\dagger$ &0.99$\pm$.02 &0.99$\pm$.04 &0.99$\pm$.03 & \textbf{1.0$\pm$.02} & \textbf{0.99$\pm$.02} &0.98$\pm$.03 \\
        $\mbox{CTP}_L^*$ &0.99$\pm$.02 &0.98$\pm$.04 &0.97$\pm$.04 &0.98$\pm$.03 &0.97$\pm$.04 &0.95$\pm$.04 \\
        $\mbox{CTP}_A^*$ &0.99$\pm$.04 &0.99$\pm$.03 &0.97$\pm$.03 &0.95$\pm$.06 &0.93$\pm$.07 &0.91$\pm$.05 \\
        $\mbox{CTP}_M^*$ &0.98$\pm$.04 &0.97$\pm$.06 &0.95$\pm$.06 &0.94$\pm$.08 &0.93$\pm$.08 &0.90$\pm$.09 \\
        GNTP$^*$ &0.68$\pm$.28 &0.63$\pm$.34 &0.62$\pm$.31 &0.59$\pm$.32 &0.57$\pm$.34 &0.52$\pm$.32 \\
        \midrule
        ET &0.99$\pm$.01 &0.98$\pm$.02 & \textbf{0.99$\pm$.02} &0.96$\pm$.04 &0.92$\pm$.07 &0.92$\pm$.07 \\
        \midrule
        GAT$^*$ &0.99$\pm$.00 &0.85$\pm$.04 &0.80$\pm$.03 &0.71$\pm$.03 &0.70$\pm$.03 &0.68$\pm$.02 \\
        GCN$^*$ &0.94$\pm$.03 &0.79$\pm$.02 &0.61$\pm$.03 &0.53$\pm$.04 &0.53$\pm$.04 &0.41$\pm$.04 \\
        NBFNet &0.83$\pm$.11 &0.68$\pm$.09 &0.58$\pm$.10 &0.53$\pm$.07 &0.50$\pm$.11 &0.53$\pm$.08 \\
        R-GCN &0.97$\pm$.03 &0.82$\pm$.11 &0.60$\pm$.13 &0.52$\pm$.11 &0.50$\pm$.09 & 0.45$\pm$.09 \\
        \midrule
        RNN$^*$ &0.93$\pm$.06 &0.87$\pm$.07 &0.79$\pm$.11 &0.73$\pm$.12 &0.65$\pm$.16 &0.64$\pm$.16 \\
        LSTM$^*$ &0.98$\pm$.03 &0.95$\pm$.04 &0.89$\pm$.10 &0.84$\pm$.07 &0.77$\pm$.11 &0.78$\pm$.11 \\
        GRU$^*$ &0.95$\pm$.04 &0.94$\pm$.03 &0.87$\pm$.08 &0.81$\pm$.13 &0.74$\pm$.15 &0.75$\pm$.15 \\
        \bottomrule
    \end{tabular}
\end{table}

\begin{figure*}[t]
    \centering
    \centerline{\includegraphics[width=0.92\textwidth]{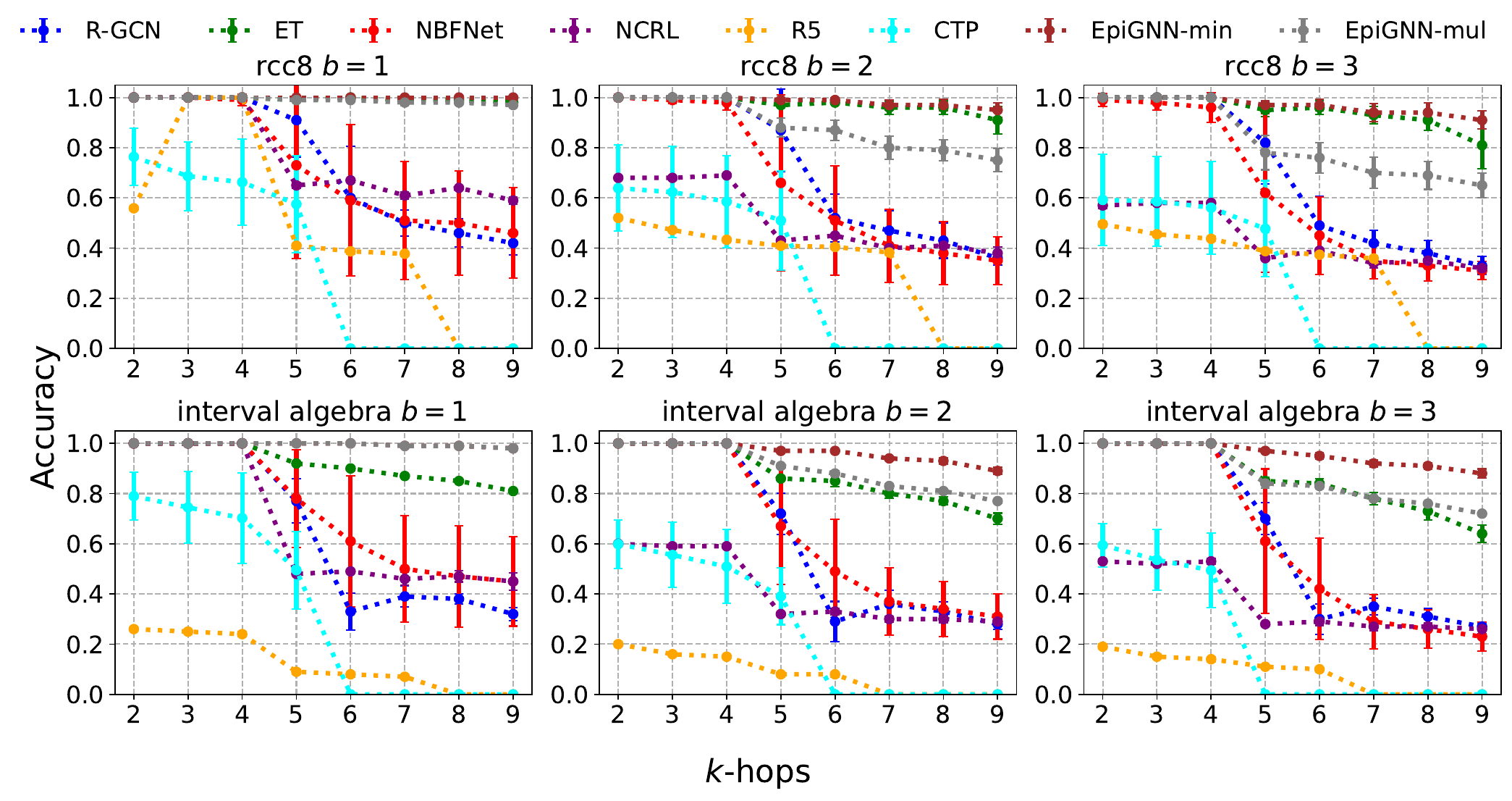}}
\caption{RCC-8 and Interval Algebra benchmark results (accuracy).  R5 and CTP results for 5+ hops were set to zero since the model took longer than 30 minutes for inference. Models are trained on graphs with $b\in\{1,2,3\}$ paths of length $k\in\{2,3,4\}$. The best model for all cases is EpiGNN-$\texttt{min}$. }
\label{table:rcc8}
\end{figure*}

\begin{table}[t]
\centering
\setlength\tabcolsep{5pt}
\scriptsize
\caption{Results on Graphlog (accuracy). For each world, we report the number of distinct relation sequences between head and tail (ND) and the Average resolution length (ARL). Results marked with $^*$ were taken from \citep{r5} and those with $^\dagger$ from \citep{DBLP:conf/iclr/ChengAS23}. The best and second-best performance across all the models are highlighted in \textbf{bold} or \underline{underlined}.}
    \begin{tabular}{lllcccccc c}
        \toprule
        \textbf{World ID} & \textbf{ND} & \textbf{ARL} & \textbf{E-GAT}$^*$ & \textbf{R-GCN}$^*$ & \textbf{CTP}$^*$ & \textbf{R5}$^*$ & \textbf{NCRL}$^\dagger$ & \textbf{ET} & \textbf{EpiGNN-\texttt{mul}} \\
         \midrule 
        World 6 & 249 & 5.06 & 0.536 & 0.498 & 0.533$\pm$0.03 & \underline{0.687}$\pm$0.05 & \textbf{0.702$\pm$0.02}& 0.496 $\pm$ 0.087 & 0.648 $\pm$ 0.012        \\
        World 7 & 288 & 4.47 & \underline{0.613} & 0.537 & 0.513$\pm$0.03 & \textbf{0.749$\pm$0.04} & - & 0.487 $\pm$ 0.056 & 0.611$\pm$0.026           \\
        World 8 & 404 & 5.43 & 0.643 & 0.569 & 0.545$\pm$0.02 & \underline{0.671}$\pm$0.03 & \textbf{0.687}$\pm$0.02 & 0.55 $\pm$ 0.092 & 0.649$\pm$0.042           \\
        World 11 & 194 & 4.29 & 0.552 & 0.456 & 0.553$\pm$0.01 & \textbf{0.803}$\pm$0.01 & - & 0.637 $\pm$ 0.091 &  \underline{0.758} $\pm$ 0.037            \\
        World 32 & 287 & 4.66 & 0.700 & 0.621 & 0.581$\pm$0.04 & \underline{0.841}$\pm$0.03 & - & 0.815 $\pm$ 0.061 & \textbf{0.914$\pm$0.026}           \\
        \bottomrule
    \end{tabular}
\label{tab:graphlog}
\end{table}
\paragraph{Main results} Results for CLUTRR, RCC-8, \steven{IA} and Graphlog are shown in Table~\ref{tab:db9b8f04}, \steven{Figure} \ref{table:rcc8} and \steven{Table} \ref{tab:graphlog}. We report the average accuracy and $2\sigma$ errors across 10 seeds for CLUTRR and 3 seeds for RCC-8, \steven{IA} and Graphlog.  For CLUTRR, both variants of our model outperform all GNN and RNN methods, as well as edge transformers (ET). The EpiGNN-\texttt{mul} model is also on par with the SOTA neuro-symbolic methods NCRL \citep{DBLP:conf/iclr/ChengAS23} and R5 \citep{r5}. For RCC-8 \steven{and IA}, as expected, the neuro-symbolic methods are largely ineffective, being substantially outperformed by our method as well as by ET. This highlights the fact that neuro-symbolic methods are not capable of modeling disjunctive rules. Our model with min-pooling achieves the best results. Edge transformers also perform well, \steven{especially for RCC-8}, but they underperform on the most challenging configurations (e.g.\ $k=9$ and $b=3$).
Comparing the \texttt{mul} and \texttt{min} variants of our model, we find that \texttt{mul} performs better on single-path problems (CLUTRR and Graphlog), while \texttt{min} leads to better results when paths need to be aggregated (RCC-8 {and IA}). For Graphlog, we only consider worlds that are characterized as `hard' {by \citet{r5}}, with an Average Resolution Length (ARL) $>4$. 
Graphlog is amenable to path-based reasoning {but is noisier than the other datasets}.
We find that EpiGNN-\texttt{mul} is outperformed by R5 and NCRL in most cases, {thanks to their stronger inductive bias. However}, EpiGNN-\texttt{mul} clearly improves the SOTA for World 32. Moreover, EpiGNN-\texttt{mul} outperforms CTPs and the GNN baselines.

The results for inductive knowledge graph completion (KGC) are summarized in Table \ref{table:kgc-inductive}. This task involves link prediction queries of the form $(h,r,?)$, asking for tail entities that are in relation $r$ with some head entity $h$. We {use} the inductive splits of FB15k-237 \citep{fb15k} and WN18RR \citep{wn18rr} from by \citet{grail} and the evaluation protocol of \citet{transe}. 
{In inductive KGC}, models are evaluated on a test graph which is disjoint from the training graph. {Models thus need to learn inference patterns, but they may not need to systematically generalize them. Note in particular that the inference chains that are needed for the training and test graphs come from the same distribution and have the same length.} 
Since {all entities need to be scored}, we use the forward-only EpiGNN to \steven{efficiently score all entities in the knowledge graph with respect to the query relation $r$. The results show that EpiGNNs perform well, being competitive with SOTA models, and even achieving the best results in two cases, despite not being designed for this task.}

\begin{table*}
\scriptsize
    \caption{Hits@10 results on the inductive benchmark datasets extracted from WN18RR, FB15k-237 with 50 negative samples. The results of other baselines except NBFNet are obtained from \citep{REST} and the former from \citep{DBLP:conf/nips/ZhuZXT21}. The best and second-best performance across all models are highlighted in \textbf{bold} or \underline{underlined}.}\label{table:kgc-inductive}
    \centering
        \begin{tabular}{c l c c c c c c c c} 
            \toprule
            & &\multicolumn{4}{c}{\textbf{WN18RR}}&  \multicolumn{4}{c}{\textbf{FB15k-237}} \\
            \cmidrule(lr){3-6} \cmidrule(lr){7-10}
            & & v1    & v2    & v3    &  v4   & v1    & v2    & v3    &  v4 \\
            \midrule
            \multirow{3}{*}{Rule-Based} &
            Neural LP       & 74.37 & 68.93 & 46.18 & 67.13 & 52.92 & 58.94 & 52.90 & 55.88 \\
            & DRUM            & 74.37 & 68.93 & 46.18 & 67.13 & 52.92 & 58.73 & 52.90 & 55.88 \\
            & RuleN           & 80.85 & 78.23 & 53.39 & 71.59 & 49.76 & 77.82 & 87.69 & 85.60 \\
            \midrule
            \multirow{8}{*}{Graph-Based} &
            GraIL           & 82.45 & 78.68 & 58.43 & 73.41 & 64.15 & 81.80 & 82.83 & 89.29 \\ 
            & CoMPILE & 83.60 & 79.82 & 60.69 & 75.49 & 67.64 & 82.98 & 84.67 & 87.44\\
            & TACT & 84.04 & 81.63 & 67.97 & 76.56 & 65.76 & 83.56 & 85.20 & 88.69\\
            & SNRI    & 87.23 & 83.10 & 67.31 & 83.32 & 71.79 & 86.50 & 89.59 & 89.39 \\
            & ConGLR & 85.64 & 92.93 & 70.74 & 92.90 & 68.29 & 85.98 & 88.61 & 89.31 \\
            & REST  & \textbf{96.28} & \textbf{94.56} & 79.50 & \textbf{94.19} & 75.12 &  91.21 & 93.06 & \textbf{96.06}  \\
            & NBFNet & \underline{94.80} &  \underline{90.50} & \textbf{89.30} & \underline{89.00} &  \underline{83.40} & \underline{94.90} & \textbf{95.10} & \underline{96.00}  \\
            \cmidrule(lr) {2-10}
            & EpiGNN-{\texttt{min}}  & 92.45 & 85.99 & \underline{84.18} & 85.77 & \textbf{91.67} &  \textbf{95.54} & \underline{93.74} & 93.45  \\
            \bottomrule
        \end{tabular}
\end{table*}

\begin{wraptable}{R}{0.45\textwidth}
    \centering 
  \setlength\tabcolsep{3pt}
  \scriptsize
  \caption{Ablation study results. We show the average accuracy across all configurations (i.e.\ 4-10 hops for CLUTRR, and $k\in\{2,...,9\}$ and $b\in\{1,2,3\}$ for RCC-8), and the accuracy for the hardest setting (i.e.\ 10 hops for CLUTRR, and $b=3$, $k=9$ for RCC-8). }
\label{table:ablations}
\begin{tabular}{lcccc}
        \toprule
 & \multicolumn{2}{c}{\textbf{CLUTRR}} & \multicolumn{2}{c}{\textbf{RCC-8}} \\ 
 \cmidrule(lr){2-3}\cmidrule(lr){4-5}
& \textbf{Avg} & \textbf{Hard} & \textbf{Avg} & \textbf{Hard} \\
\midrule
EpiGNN                                            & 0.99  & 0.99 & 0.96  & 0.80   \\
- With facets=1                                       & 0.94  & 0.85 & 0.92  & 0.68 \\
- Unconstrained embeddings         & 0.36  & 0.30 & 0.38  & 0.21 \\
- MLP+distmul composition  & 0.29  & 0.31 & 0.13  & 0.13 \\
- Forward model only   & 0.94  & 0.82 & 0.84 & 0.51 \\
    \bottomrule
\end{tabular}
\end{wraptable}
\paragraph{Ablations} We confirm the importance of key architectural components of our model through an ablation study on CLUTRR and RCC-8. \steven{To show the importance of jointly training multiple lower-dimensional models, we show results for {for a variant with only $m=1$ facet}. To show the importance of modeling embeddings as epistemic states, we test a variant in which we remove the requirement that embedding components are non-negative and sum to 1. To show the importance of the bilinear composition function $\phi$ in \eqref{eqForwardModel}, we test a variant where the composition function is replaced by distmul \citep{distmul} after applying a 4-layer MLP with ReLU activation to both inputs of $\phi$ (with the MLP being added to ensure the model is not underparameterised). To show the importance of the foward-backward nature of the model, we test a variant in which only the forward embeddings {$\mathbf{t}^\rightarrow$} are used}. The results in Table~\ref{table:ablations} confirm the importance of all these components, with the use of embeddings as epistemic states and the bilinear form of $\phi$ being particularly important. 

\paragraph{Parameter and time complexity} 
\begin{wrapfigure}{R}{0.39\textwidth}
\vspace{-0.1in}
\centering
\begin{minipage}{\linewidth}
\centerline{
  \includegraphics[width=\linewidth]{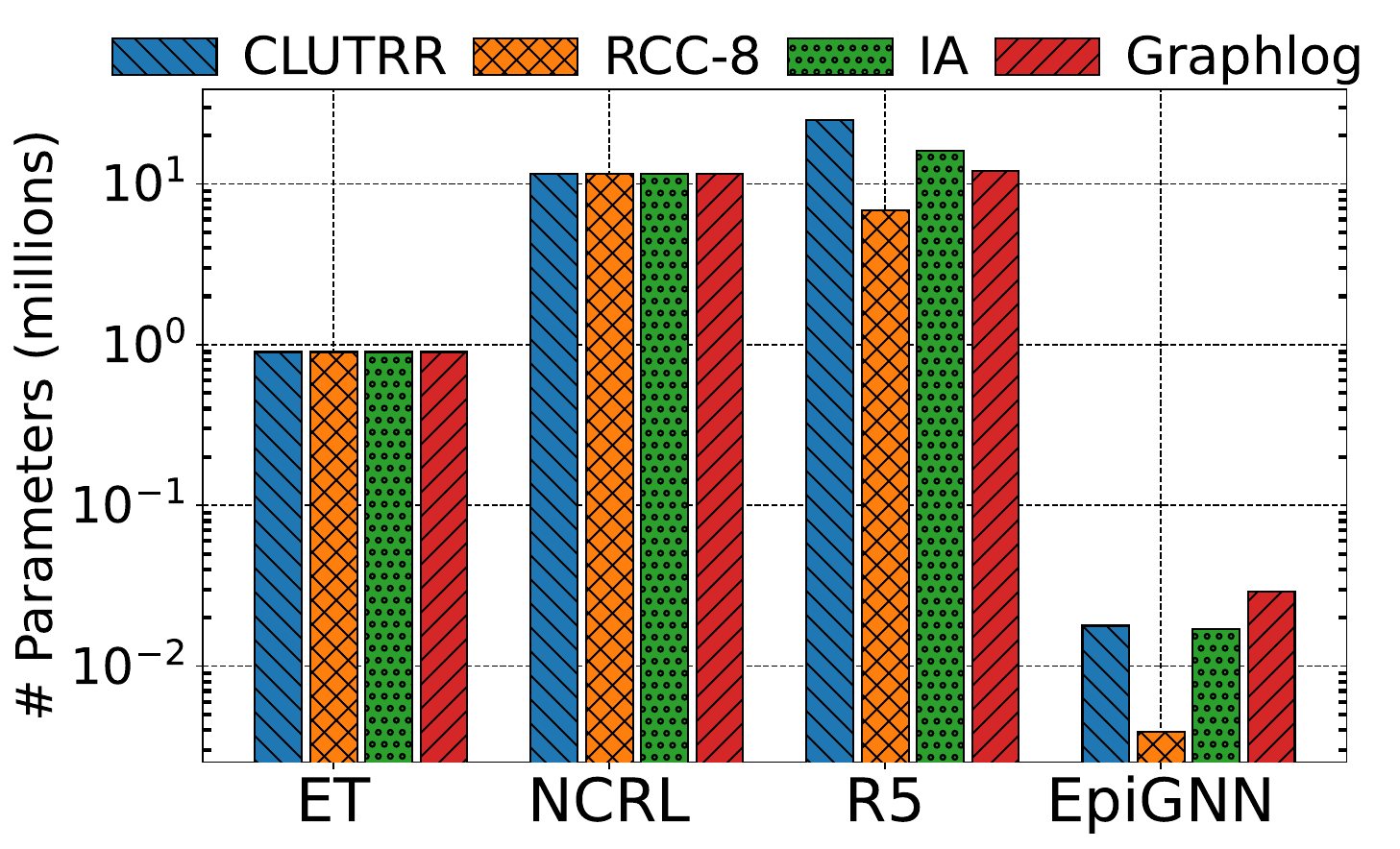}
}
\end{minipage}
\caption{Parameter complexity across the relation prediction benchmarks.}\label{figParameterAnalaysis}
\vspace{-0.25in}
\end{wrapfigure}
\steven{Writing $n$ for the total dimensionality of the relation vectors and $m$ for the number of facets, the number of parameters is exactly $|\mathcal{R}|n + \frac{n^3}{m^2}$, namely $|\mathcal{R}|n$ parameters for encoding the relation vectors and $\frac{n^3}{m^3}$ parameters for encoding the $\mathbf{a}_{ij}$ vectors in each of the $m$ facets. In practice, we use a large number of facets, which allows us to keep $\frac{n}{m}$ small. In particular, when comparing our model with existing neuro-symbolic models, after hyperparameter tuning each of the models, we find that EpiGNNs are at least two orders of magnitude smaller, as shown in Figure \ref{figParameterAnalaysis}.
The time complexity of the EpiGNN is $O(|\mathcal{E}|n + |\mathcal{F}|\frac{n^3}{m^2})$, and thus highly efficient given that $\frac{n}{m}$ is small in practice. A comparison with the main baselines is shown in Appendix \ref{secAppendixTimeComplexity}.}
\section{Related work}
\paragraph{Neuro-symbolic methods} Neuro-symbolic methods \steven{such as} DeepProbLog \citep{deep-prob-log} \steven{and} NTPs \citep{DBLP:conf/nips/Rocktaschel017} are essentially differentiable formulations of inductive logic programming, \steven{where the latter} is concerned with learning symbolic rule bases from sets of examples
\citep{DBLP:journals/ngc/Muggleton91, muggleton1994, nienhuys-cheng-1997, evans-grefenstette2018}. 
\steven{Although these methods are capable of systematic reasoning, scalability is a key concern. For instance, by generalizing logic programming, NTPs and DeepProbLog rely on the exploration of a potentially exponential number of proof paths. Another challenge to scalability comes from the fact that the space of \steven{candidate} rules grows exponentially with the number of relations \citep{evans-grefenstette2018}.} 
\steven{More} recent methods, 
\steven{including} R5 \citep{r5}, CTPs \citep{DBLP:conf/icml/Minervini0SGR20} and NCRL \citep{DBLP:conf/iclr/ChengAS23} are \steven{somewhat} more efficient than NTPs. R5 \steven{uses} Monte Carlo Tree search with a dynamic rule memory network, CTPs use a learned heuristic filter function on top of \steven{NTPs} to reduce the number of explored paths during backward chaining, \steven{and NCRL focuses on learning how to iteratively collapse relational paths}. These methods can systematically reason but only NCRL \steven{scales to knowledge graphs such as FB15k}. Moreover, both R5 and NCRL are limited by their over-reliance on single path sampling and cannot deal with \steven{disjunctive rules}. 
\steven{CTPs cannot handle benchmarks such as RCC-8 and IA either, as they cannot model the constraint that the different relations are pairwise disjoint and jointly exhaustive.}
\steven{Some approaches, such as Logic Tensor Networks \citep{ltn} and Lifted Relational Neural Networks \citep{DBLP:journals/jair/SourekAZSK18} rely on fuzzy logic connectives to enable more efficient differentiable logic programming. These frameworks are typically quite general, e.g.~being capable of encoding GNNs as a special case \citep{DBLP:journals/ml/SourekZK21}. As such, they should be viewed as modeling frameworks rather than specific models. We are not aware of work that uses these frameworks for systematic generalization.}
\steven{More generally, within the context of statistical relational learning, methods such as Markov Logic Networks \citep{DBLP:journals/ml/RichardsonD06} have been studied, which can learn weighted sets of arbitrary propositional formulas, but suffer from limited scalability, a limitation which is inherited by differentiable versions of such methods \citep{neural-markov-logic-nets}.}



\paragraph{Systematic reasoning}
There has been a large body of work on learning to reason with neural networks, \steven{including} recent approaches \steven{based on} GNNs \citep{DBLP:conf/iclr/ZhangCYRLQS20,DBLP:conf/nips/ZhuZXT21,DBLP:conf/www/ZhangY22}, pre-trained language models \citep{DBLP:conf/ijcai/ClarkTR20,DBLP:conf/nips/KojimaGRMI22,DBLP:conf/iclr/CreswellSH23} or tailor-made architectures for relational reasoning \citep{DBLP:conf/nips/SantoroRBMPBL17,edge-transformer}. 
\steven{Most of these methods, however, fail at tasks that require systematic generalization}
\citep{clutrr, graphlog} and are prone \steven{learning reasoning shortcuts} \citep{reasoning-shortcuts,DBLP:conf/ijcai/ZhangLMCB23}. NBFNet \citep{DBLP:conf/nips/ZhuZXT21}, R-GCN \citep{rgcn}, and graph-convolution methods \citep{conve-gcn, Vashishth2020Composition-based} are \steven{examples of} scalable and \steven{(mostly)} parameter-efficient methods designed for knowledge graph completion. 
\steven{Such methods typically leverage knowledge graph} embedding methods \citep{transe, complex, distmul} in the message passing step. NBFNet uses such methods to compose relations and anchors the embeddings to a head entity to develop path-based relational representations. 
GNNs for logical reasoning have also been proposed, \steven{including} R2N \citep{r2n} and LERP \citep{LERP}, but \steven{their} focus is \steven{again} on knowledge graphs and not systematic reasoning. \steven{For instance}, R2N uses \steven{multi-layer perceptrons (MLPs)} for information composition and sum aggregation. \steven{This increases the capacity of the model, which can be beneficial for learning statistical regularities from knowledge graphs, but which at the same time also hurts} a GNN's systematic reasoning ability, \steven{as we have seen in our ablation analysis}. 
EpiGNNs are inspired by the idea of algorithmic alignment \citep{DBLP:conf/iclr/XuLZDKJ20}, i.e.~aligning the neural architecture with the desired reasoning algorithm. Edge Transformers \citep{edge-transformer} also rely on this idea {to some extent}, aligning the model with relational reasoning by using a triangular variant of attention \citep{transformer} to capture relational compositions. However, \steven{compared to EpiGNNs, edge transformers are less scalable, as they cannot operate on sparse graphs, less parameter-efficient, and less successful at systematic generalization, as we have seen in our experiments.}
\steven{Finally, the problem of systematic generalization has also been studied beyond relational reasoning. For instance, there is} existing work on benchmarking length generalization for sequence based methods such as pretrained transformers and recurrent networks, including SCAN \citep{lakeandbaroni2018}, LEGO \citep{LEGO} \steven{and} Addition \citep{addition}. 

\section{Conclusions}
We have challenged the view that Graph Neural Networks are not capable of systematic generalization in relational reasoning problems, by introducing the EpiGNN, a principled GNN architecture for this setting. To impose an appropriate inductive bias, node embeddings in our framework are treated as epistemic states, intuitively capturing sets of possible relationships, which are iteratively refined by the EpiGNN. In this way, EpiGNNs are closely aligned with the algebraic closure algorithm, which is used for relational reasoning by symbolic methods, and a formal connection with this algorithm has been established. 
EpiGNNs are scalable \steven{and} parameter-efficient. \steven{They rival} neuro-symbolic methods on standard \steven{systematic reasoning} benchmarks such as CLUTRR and Graphlog, while clearly outperforming existing GNN and transformer based methods.  Moreover, we have highlighted that existing neuro-symbolic methods have an important weakness, which arises from an implicit assumption that relations can be predicted from a single relational path. To explore this issue, we have introduced two new benchmarks based on spatio-temporal calculi, finding that neuro-symbolic methods indeed fail on them, while EpiGNNs still performs well in this more challenging setting. Finally, despite being designed for systematic \steven{reasoning}, our experiments show that EpiGNNs rival SOTA specialized methods for inductive knowledge \steven{graph} completion. 

\paragraph{Limitations} The EpiGNN is limited by its statistical nature, where the parameters of the model will inevitably be biased by training data artefacts. While we have shown that our model can perform well in settings where existing neuro-symbolic methods fail (i.e.\ spatio-temporal benchmarks), the much stronger inductive bias imposed by the latter puts them at an advantage in severely data-limited settings. This can be seen in some of the Graphlog results. For a variant of CLUTRR where the model is only trained on problems of size $k\in\{2,3\}$, our model is also outperformed by the best neuro-symbolic methods (shown in Appendix \ref{ssec:clutrr2}).

\section*{Acknowledgements}
 This work was supported by the EPSRC grant EP/W003309/1.

\bibliography{iclr2025_conference}
\bibliographystyle{iclr2025_conference}

\newpage
\appendix
\section{Semantics of entailment}\label{secSemanticsRules}
For simple path rules of the form \eqref{eqRuleWithNAtoms}, the semantics of entailment can be defined in terms of the immediate consequence operator.
Let $\steven{\mathcal{K}}_{\mathcal{F}}$ be the grounding of $\steven{\mathcal{K}}$ w.r.t.\ $\mathcal{F}$, i.e.\ for each rule of the form $r(X,Z)\leftarrow r_1(X,Y)\wedge r_2(Y,Z)$ in $\steven{\mathcal{K}}$, $\steven{\mathcal{K}}_{\mathcal{F}}$ contains all the possible rules of the form $r(a,c)\leftarrow r_1(a,b)\wedge r_2(b,c)$ that can be obtained by substituting the variables for entities that appear in $\mathcal{F}$. Since we assume that both $\mathcal{F}$ and $\steven{\mathcal{K}}$ are finite, we have that $\steven{\mathcal{K}}_{\mathcal{F}}$ is finite as well. Let $\mathcal{F}_0 = \mathcal{F}$ and for $i\geq 1$ define:
\begin{align*}
\mathcal{F}_i \,{=}\, \mathcal{F}_{i-1} \cup \{r(a,c) \,|\, \exists b\in \mathcal{E}\,.\, r_1(a,b), r_2(b,c)\in \mathcal{F}_{i-1} \text{ and } (r(a,c)\leftarrow r_1(a,b)\wedge r_2(b,c))\in\steven{\mathcal{K}}_{\mathcal{F}}\}
\end{align*}
Since there are finitely many atoms $r(a,c)$ that can be constructed using the entities and relations appearing in $\steven{\mathcal{K}}_{\mathcal{F}}$, this process reaches a fixpoint after a finite number of steps, i.e.\ for some $\ell\in \mathbb{N}$ we have $\mathcal{F}_\ell = \mathcal{F}_{\ell+1}$. Let us write $\mathcal{F}^*$ for this fixpoint. Then we define $\mathcal{F}\cup \steven{\mathcal{K}}\models r(a,b)$ iff $r(a,b)\in \mathcal{F}^*$.
\begin{example}
The canonical example for this setting concerns reasoning about family relationships. In this case, $\mathcal{K}$ contains rules such as:
\begin{align*}
\textit{grandfather}(X,Z) \leftarrow \textit{father}(X,Y)\wedge  \textit{mother}(Y,Z)
\end{align*}
If $\mathcal{F}=\{\textit{father}(\textit{bob},\textit{alice}), \textit{mother}(\textit{alice},\textit{eve})\}$ then $\steven{\mathcal{K}}_\mathcal{F}$ contains rules such as:
\begin{align*}
\textit{grandfather}(\textit{bob},\textit{eve}) &\leftarrow \textit{father}(\textit{bob},\textit{alice})\wedge  \textit{mother}(\textit{alice},\textit{eve})\\
\textit{grandfather}(\textit{bob},\textit{alice}) &\leftarrow \textit{father}(\textit{bob},\textit{alice})\wedge  \textit{mother}(\textit{alice},\textit{alice})\\
\textit{grandfather}(\textit{bob},\textit{bob}) &\leftarrow \textit{father}(\textit{bob},\textit{alice})\wedge  \textit{mother}(\textit{alice},\textit{bob})\\
\textit{grandfather}(\textit{bob},\textit{eve}) &\leftarrow \textit{father}(\textit{bob},\textit{eve})\wedge  \textit{mother}(\textit{eve},\textit{eve})\\
& \dots
\end{align*}
We have $\mathcal{F}_1= \{\textit{father}(\textit{bob},\textit{alice}), \textit{mother}(\textit{alice},\textit{eve}),\textit{grandfather}(\textit{bob},\textit{eve})\}$, with $\mathcal{F}_1=\mathcal{F}_2=\mathcal{F}^*$. We thus find: $\mathcal{K}\cup \mathcal{F} \models \textit{grandfather}(\textit{bob},\textit{eve})$.
\end{example}

For the more general setting with disjunctive rules and constraints, we can define the semantics of entailment in terms of Herbrand models. Given a set of disjunctive rules and constraints $\steven{\mathcal{K}}$ and a set of facts $\mathcal{F}$, the Herbrand universe $\mathcal{U}_{\steven{\mathcal{K}},\mathcal{F}}$ is the set of all atoms of the form $r(a,b)$ which can be constructed from a relation $r$ and entities $a,b$ appearing in $\steven{\mathcal{K}}\cup\mathcal{F}$. A Herbrand interpretation $\omega$ is a subset of $\mathcal{U}_{\steven{\mathcal{K}},\mathcal{F}}$. The interpretation $\omega$ satisfies a ground disjunctive rule of the form $s_1(a,c)\vee \ldots \vee s_k(a,c) \leftarrow r_1(a,b)\wedge r_2(b,c)$  iff either $\{r_1(a,b),r_2(b,c)\}\not\subseteq \omega$ or $\omega\cap \{s_1(a,c),\ldots,s_k(a,c)\}\neq\emptyset$. The interpretation $\omega$ satisfies a ground constraint of the form $\bot \leftarrow r_1(a,b) \wedge r_2(a,b)$ iff $\{r_1(a,b),r_2(a,b)\}\not\subseteq \omega$. We say that $\omega$ is a model of the grounding $\steven{\mathcal{K}}_{\mathcal{F}}$ iff $\omega$ satisfies all the ground rules and constraints in $\steven{\mathcal{K}}_{\mathcal{F}}$. Finally, we have that $\steven{\mathcal{K}}\cup\mathcal{F} \models r(a,b)$ iff $r(a,b)$ is contained in every model of $\steven{\mathcal{K}}_{\mathcal{F}}$.

\section{Simple path entailment: Proof of Proposition \ref{propSimpleRulesAsPathsReasoning}}

The correctness of Proposition \ref{propSimpleRulesAsPathsReasoning} immediately follows from Lemmas \ref{lemmaSimpleRulesAsPathsReasoning1} and \ref{lemmaSimpleRulesAsPathsReasoning2} below.

\begin{lemma}\label{lemmaSimpleRulesAsPathsReasoning1}
Suppose there exists a relational path $r_1;...;r_k$ connecting $a$ and $b$ in $G_\mathcal{F}$ such that $r$ can be derived from $r_1;...;r_k$. Then it holds that $\mathcal{K}\cup \mathcal{F} \models r(a,b)$.
\end{lemma}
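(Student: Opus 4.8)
The plan is to prove Lemma \ref{lemmaSimpleRulesAsPathsReasoning1} by induction on the length $k$ of the relational path, using the characterisation of entailment via the immediate consequence operator established in Appendix \ref{secSemanticsRules} (i.e.\ $\mathcal{K}\cup\mathcal{F}\models r(a,b)$ iff $r(a,b)\in\mathcal{F}^*$). First I would recall the definition of ``$r$ can be derived from $r_1;\ldots;r_k$'': there is a sequence of $k-1$ one-step rewrites, each replacing two neighbouring relations $r_{i-1};r_i$ by a relation $s$ with $(s(X,Z)\leftarrow r_{i-1}(X,Y)\wedge r_i(Y,Z))\in\mathcal{K}$, ending in the single relation $r$. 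The base case is $k=2$: the path is $r_1;r_2$ connecting $a$ and $b$, so there is an entity $c$ with $r_1(a,c),r_2(c,b)\in\mathcal{F}$, and $r$ is derived in one step, so $(r(X,Z)\leftarrow r_1(X,Y)\wedge r_2(Y,Z))\in\mathcal{K}$; the corresponding ground instance lies in $\mathcal{K}_\mathcal{F}$, hence $r(a,b)\in\mathcal{F}_1\subseteq\mathcal{F}^*$.

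For the inductive step I would consider the \emph{first} rewrite applied in the derivation witnessing that $r$ is derivable from $r_1;\ldots;r_k$. Say this step replaces $r_{i};r_{i+1}$ by some relation $s$, so $(s(X,Z)\leftarrow r_i(X,Y)\wedge r_{i+1}(Y,Z))\in\mathcal{K}$. Let $x_{i-1}$ and $x_{i+1}$ be the entities on the path flanking the node $x_i$ incident to $r_i$ and $r_{i+1}$; then $r_i(x_{i-1},x_i),r_{i+1}(x_i,x_{i+1})\in\mathcal{F}$, so the ground rule gives $s(x_{i-1},x_{i+1})\in\mathcal{F}_1$. Now the remaining $k-2$ rewrites witness that $r$ is derivable from the shorter path $r_1;\ldots;r_{i-1};s;r_{i+2};\ldots;r_k$, which connects $a$ and $b$ in the graph $G_{\mathcal{F}_1}$ (i.e.\ the graph on the already-closed fact set). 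Applying the induction hypothesis with $\mathcal{F}_1$ in place of $\mathcal{F}$ — and using monotonicity of the consequence operator, i.e.\ $(\mathcal{F}_1)^* = \mathcal{F}^*$ since $\mathcal{F}_1\subseteq\mathcal{F}^*$ and $\mathcal{F}^*$ is already closed — yields $r(a,b)\in(\mathcal{F}_1)^*=\mathcal{F}^*$, hence $\mathcal{K}\cup\mathcal{F}\models r(a,b)$.

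The main obstacle I anticipate is bookkeeping rather than conceptual: one must be careful that the induction is genuinely on the path length and that after the first rewrite the derivation really does restrict to a valid derivation on the shortened path (the remaining steps never again touch the consumed relations, so this is fine, but it needs to be stated cleanly). A secondary subtlety is the interaction between ``performing one rewrite'' and ``applying the immediate consequence operator once'': rather than trying to match iterations exactly, it is cleanest to just observe that every atom produced by a rewrite step lies in $\mathcal{F}^*$ because $\mathcal{F}^*$ is a fixpoint closed under all ground rules in $\mathcal{K}_\mathcal{F}$, so a single induction on $k$ with the invariant ``all intermediate atoms produced so far are in $\mathcal{F}^*$'' suffices. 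An alternative, essentially equivalent, route is a direct induction on the number of rewrite steps, accumulating derived atoms into $\mathcal{F}$ one at a time; I would mention this as a remark but carry out the path-length induction as the main argument.
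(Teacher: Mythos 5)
Your proposal is correct and follows essentially the same route as the paper's proof: both peel off the first rewrite step of the derivation, observe that the corresponding ground rule instance yields an entailed atom, and then handle the shortened path by iteration/induction. The only difference is packaging — the paper runs a forward iteration over the $k-1$ derivation steps maintaining the invariant that every relation on the current path is witnessed by an entailed atom, whereas you set it up as an induction on $k$ with the fact set as a parameter and use the fixpoint identity $(\mathcal{F}_1)^*=\mathcal{F}^*$ (you should also note the trivial case $k=1$, where $r=r_1$ and $r(a,b)\in\mathcal{F}$ directly).
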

\begin{proof}
Let $r_1;...;r_k$ be a relational path connecting $a$ and $b$, such that $r$ can be derived from $r_1;...;r_k$. Then $\mathcal{F}$ contains facts of the form $r_1(a,x_1),r_2(x_1,x_2),...,r_k(x_{k-1},b)$. Let us now consider the derivation from $r_1;...;r_k$ to $r$. After the first derivation step, we have a relational path of the form $r_1;...;r_{i-2};s;r_{i+1};...;r_k$. In this case, $\mathcal{K}$ contains a rule of the form $s(X,Z)\leftarrow r_{i-1}(X,Y)\wedge r_{i}(Y,Z)$. This means that 
$\mathcal{K}\cup\mathcal{F} \models s(x_{i-2},x_i)$.
We thus have a relational path of the form $r_1;...;r_{i-2};s;r_{i+1};...;r_k$, where each relation is associated with an atom that is entailed by $\mathcal{K}\cup\mathcal{F}$. Each derivation step introduces such an atom (while reducing the length of the relational path by 1). After $k-1$ steps, we thus obtain the atom $r(a,b)$, from which it follows that $\mathcal{K}\cup\mathcal{F} \models r(a,b)$.
\end{proof}

\begin{lemma}\label{lemmaSimpleRulesAsPathsReasoning2}
Suppose that $\mathcal{K}\cup \mathcal{F} \models r(a,b)$.
Then it holds that there exists a relational path $r_1;...;r_k$ connecting $a$ and $b$ in $G_\mathcal{F}$ such that $r$ can be derived from $r_1;...;r_k$. 
\end{lemma}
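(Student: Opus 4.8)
The plan is to prove the lemma by induction on the smallest index $i$ for which $r(a,b)\in\mathcal{F}_i$, using the fixpoint characterisation of entailment from Appendix~\ref{secSemanticsRules}: $\mathcal{K}\cup\mathcal{F}\models r(a,b)$ holds exactly when $r(a,b)\in\mathcal{F}^*=\bigcup_i\mathcal{F}_i$, and since this union stabilises after finitely many steps such a smallest index exists. Recall also that we may assume without loss of generality that every rule in $\mathcal{K}$ has exactly two body atoms, matching the definition of a derivation step. For the base case $i=0$ we have $r(a,b)\in\mathcal{F}$, so the length-one path consisting of the single atom $r(a,b)$ connects $a$ and $b$ in $G_\mathcal{F}$, and $r$ is trivially derived from it in $k-1=0$ steps.

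For the inductive step, suppose $r(a,b)$ first appears in $\mathcal{F}_i$ with $i\geq 1$. By definition of the immediate consequence operator, there exist relations $r_1,r_2$, an entity $x$, and facts $r_1(a,x),r_2(x,b)\in\mathcal{F}_{i-1}$ such that $\mathcal{K}$ contains a rule $r(X,Z)\leftarrow r_1(X,Y)\wedge r_2(Y,Z)$. Both $r_1(a,x)$ and $r_2(x,b)$ lie in $\mathcal{F}_{i-1}$, hence each first appears at some stage strictly smaller than $i$, so the induction hypothesis applies to each. This yields a relational path $P_1 = s_1;\ldots;s_p$ connecting $a$ and $x$ in $G_\mathcal{F}$ from which $r_1$ can be derived, and a relational path $P_2 = t_1;\ldots;t_q$ connecting $x$ and $b$ in $G_\mathcal{F}$ from which $r_2$ can be derived.

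I would then form the concatenation $P = s_1;\ldots;s_p;t_1;\ldots;t_q$. Since $P_1$ realises a chain of facts from $\mathcal{F}$ ending at $x$ and $P_2$ realises a chain of facts from $\mathcal{F}$ starting at $x$, $P$ is a genuine relational path connecting $a$ and $b$ in $G_\mathcal{F}$. To exhibit a derivation of $r$ from $P$, I run in order: (i) the derivation taking $P_1$ to $r_1$, (ii) the derivation taking $P_2$ to $r_2$, and (iii) one final step applying the rule $r(X,Z)\leftarrow r_1(X,Y)\wedge r_2(Y,Z)$ to the two-relation path $r_1;r_2$. After stage (i) the path has been rewritten to $r_1;t_1;\ldots;t_q$; after stage (ii) to $r_1;r_2$; and after stage (iii) to $r$. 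This uses $(p-1)+(q-1)+1 = (p+q)-1$ steps, exactly one fewer than the length of $P$, so it is a valid derivation of $r$ from $P$, completing the induction.

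The step that needs care is the lifting in (i) and (ii): I must argue that each step of the derivation of $r_1$ from $P_1$ remains a legal derivation step when $P_1$ sits as a contiguous prefix inside $P$ (and symmetrically for $P_2$ as a suffix). This holds because a derivation step is purely syntactic on the sequence of relation symbols — it inspects two adjacent symbols and checks whether $\mathcal{K}$ contains the corresponding composition rule — so its validity is context-independent. One also observes that the rewrites performed on the prefix never interact with the suffix: each step strictly shortens the block it acts on and the two blocks start disjoint, so the prefix derivation stays within the first $p$ positions and leaves $t_1;\ldots;t_q$ untouched (and conversely once the prefix has collapsed to the single symbol $r_1$). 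In particular, since derivation steps ignore entities, there is no need to track intermediate entity names; genuineness of $P$ as a path in $G_\mathcal{F}$ together with the purely syntactic derivation gives the conclusion. I expect this bookkeeping about non-interaction of the two sub-derivations to be the only mildly delicate point; everything else is routine.
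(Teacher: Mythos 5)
Your proof is correct and follows essentially the same route as the paper's: induction on the stage at which $r(a,b)$ enters the fixpoint sequence $\mathcal{F}_i$, concatenating the two paths obtained from the induction hypothesis, and composing their derivations with one final rule application. The only difference is that you make explicit the bookkeeping (step count and non-interaction of the prefix and suffix sub-derivations) that the paper leaves implicit in the sentence ``we have that $r_1;r_2$ can be derived from this path, and thus also $r$.''
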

\begin{proof}
Recall from \steven{Appendix} \ref{secSemanticsRules} that $\mathcal{K}\cup \mathcal{F} \models r(a,b)$ iff $r(a,b)\in \mathcal{F}^*$. It thus suffices to show by induction that $r(a,b)\in \mathcal{F}_i$ implies that there exists a relational path $r_1;...;r_k$ connecting $a$ and $b$ in $G_\mathcal{F}$ such that $r$ can be derived from $r_1;...;r_k$. 
First, if $r(a,b)\in \mathcal{F}_0$ then by definition there is a relational path $r$ connecting $a$ and $b$ in $G_\mathcal{F}$, meaning that the result is trivially satisfied. Now suppose that the result has already been shown for $\mathcal{F}_{i-1}$. Let $r(a,b)\in \mathcal{F}_i\setminus \mathcal{F}_{i-1}$. Then there must exist facts
$r_1(a,x)$ and $r_2(x,b)$ in $\mathcal{F}_{i-1}$ such that $\mathcal{K}$ contains a rule of the form $r(X,Z) \leftarrow r_1(X,Y)\wedge r_2(Y,Z)$. By induction, we have that there is a relational path $s_1;...;s_{\ell_1}$ connecting $a$ and $x$ such that $r_1$ can be derived from this path, and a relational path $t_1;...;t_{\ell_2}$ connecting $x$ and $b$ from which $r_2$ can be derived. We then have that $s_1;...s_{\ell_1};t_1;...;t_{\ell_2}$ is a path connecting $a$ and $b$. Furthermore, we have that $r_1;r_2$ can be derived from this path, and thus also $r$.
\end{proof}

\section{Reasoning about disjunctive rules using algebraic closure}\label{sec:algebraic-closure}
We consider knowledge bases with three types of rules. First, we have disjunctive rules that encode relational compositions:
\begin{align}\label{eqDisjunctiveRuleAppendix}
s_1(X,Z)\vee \ldots \vee s_k(X,Z) \leftarrow r_1(X,Y) \wedge r_2(Y,Z)
\end{align}
Second, we have constraints of the following form:
\begin{align}\label{eqConstraint}
\bot \leftarrow r_1(X,Y) \wedge r_2(X,Y)
\end{align}
meaning that $r_1$ and $r_2$ are disjoint. Finally, we consider knowledge about inverse relations, expressed using rules of the following form:
\begin{align}\label{eqInvRule}
r_2(Y,X) \leftarrow r_1(X,Y)
\end{align}

We now describe the algebraic closure algorithm, which can be used to decide entailment for calculi such as RCC-8 \steven{and IA}. We also describe an approximation of this algorithm, which we call \emph{directional algebraic closure}. 

\paragraph{Full algebraic closure}
Let us make the following assumptions:
\begin{itemize}
\item The knowledge base $\mathcal{K}$ contains rules of the form \eqref{eqDisjunctiveRuleAppendix}, encoding the composition of relations $r_1$ and $r_2$.
\item We also have that $\mathcal{K}$ contains the rule $\bigvee_{r\in\mathcal{R}}r(X,Y)\leftarrow \top$, expressing that the set of relations is exhaustive.
\item For all distinct relations $r_1,r_2\in\mathcal{R}$, $r_1\neq r_2$, $\mathcal{K}$ contains a constraint of the form \eqref{eqConstraint}, expressing that the relations are pairwise disjoint.
\item For every $r\in \mathcal{R}$ there is some relation $\hat{r}\in\mathcal{R}$, such that $\mathcal{K}$ contains the rules $r(Y,X)\leftarrow \hat{r}(X,Y)$ and $\hat{r}(Y,X)\leftarrow r(X,Y)$, expressing that $\hat{r}$ is the inverse of $r$.
\item $\mathcal{K}$ contains no other rules.
\end{itemize}
Let us write $r_1\circ r_2$ for the set of relations that appears in the head of the rule defining the composition of $r_1$ and $r_2$ in $\mathcal{K}$. If no such rule exists in $\mathcal{K}$ for $r_1$ and $r_2$ then we define $r_1\circ r_2=\mathcal{R}$.  Let $\mathcal{E}$ be the set of entities that appear in $\mathcal{F}$. Let us assume that $\mathcal{F}$ is consistent with $\mathcal{K}$, i.e.\ $\mathcal{K}\cup\mathcal{F}\not\models \bot$. 

The main idea of the algebraic closure algorithm is that we iteratively refine our knowledge of what relationships are possible between different entities. Specifically, for all entities $e,f\in \mathcal{E}$, we define the initial set of possible relationships as follows:
\begin{align*}
X^{(0)}_{ef} = 
\begin{cases}
\{r\} & \text{if $r(e,f)\in\mathcal{F}$}\\
\{\hat{r}\} & \text{if $r(f,e)\in\mathcal{F}$}\\
\mathcal{R} & \text{otherwise}\\
\end{cases}
\end{align*}
where $\hat{r}$ is the unique relation that is asserted to be the inverse of $r$ in $\mathcal{K}$. Note that because we assumed $\mathcal{F}$ to be consistent, if $r(e,f)\in \mathcal{F}$ and $r'(f,e)\in\mathcal{F}$ it must be the case that $r'=\hat{r}$.
We now iteratively refine the sets $X^{(i)}_{ef}$. Specifically, for $i\geq 1$, we define the refinement step:
\begin{align}\label{eq:full-refinement-step}
X^{(i)}_{ef} = X^{(i-1)}_{ef} \cap \bigcap \{ X^{(i-1)}_{eg} \diamond X^{(i-1)}_{gf} \,|\, g\in\mathcal{E}\}
\end{align}
where, for $X,Y\subseteq \mathcal{R}$, we define:
\begin{align*}
X \diamond Y &= \bigcup\{ r\circ s \,|\, r\in X, s\in Y\}
\end{align*}
Since each of the sets $X^{(i)}_{ef}$ only contains finitely many elements, and there are only finitely many such sets, this process must clearly converge after a finite number of steps. Let us write $X_{ef}$ of the sets of relations that are obtained upon convergence. For many spatial and temporal calculi, we have that $r\in X_{ef}$ iff $\mathcal{K}\cup\mathcal{F} \cup \{r(e,f)\} \not \models \bot$. In other words, $X_{ef}$ encodes everything that can be inferred about the relationship between $e$ and $f$. In particular, we have have $\mathcal{K}\cup\mathcal{F}\models r(e,f)$ iff $X_{ef}=\{r\}$. Note that this equivalence only holds for particular calculi: in general, $\mathcal{K}\cup\mathcal{F}\models r(e,f)$ does not imply $X_{ef}=\{r\}$. \steven{Furthermore, even for RCC-8 and IA, this equivalence only holds because the initial set of facts $\mathcal{F}$ is disjunction-free. We refer to \citep{DBLP:conf/ijcai/Renz99,DBLP:journals/jacm/KrokhinJJ03,DBLP:journals/ai/BroxvallJR02} for more details on when algebraic closure decides entailment.}
\paragraph{Directional algebraic closure}
The algebraic closure algorithm relies on sets $X_{ef}^{(i)}$ for every pair of entities, which limits its scalability (although more efficient special cases have been studied \citep{DBLP:conf/ijcai/AmaneddineCS13}). Any simulation of this algorithm using a neural network would thus be unlikely to scale to large graphs. For this reason, we study an approximation, which we refer to as \emph{directional algebraic closure}. This approximation essentially aims to infer the possible relationships between a fixed head entity $h$ and all the other entities from the graph. The relationship between $h$ and a given entity $e$ is determined based on the paths in $\mathcal{G}$ connecting $h$ to $e$. For this approximation, we furthermore omit rules about inverse relations. Other than this, we make the same assumptions about $\mathcal{K}$ as before. We now learn sets $X_e^{(0)}$ which capture the possible relationships between $h$ and $e$. These sets are initialized as follows:
\begin{align*}
X^{(0)}_{e} = 
\begin{cases}
\{r\} & \text{if $r(h,e)\in\mathcal{F}$}\\
\mathcal{R} & \text{otherwise}
\end{cases}
\end{align*}
We define for $i\geq 1$, the refinement step:
\begin{align}\label{eq:directional-refinement-step}
X^{(i)}_{e} = X^{(i-1)}_{e} \cap \bigcap \{ X^{(i-1)}_{f} \diamond s \,|\, s(f,e)\in\mathcal{F}\}
\end{align}
with
\begin{align*}
X^{(i-1)}_{f} \diamond s &= \bigcup\{ r\circ s \,|\, r\in X^{(i-1)}_{f}\}
\end{align*}
where $r_1\circ r_2$ is defined as before.  We write $X_e$ to denote the sets $X_e^{(i)}$ that are obtained upon convergence.
Clearly, after a finite number of iterations $i$ we have $X^{(i)}_{e}=X^{(i+1)}_{e}$. Note how the directional closure algorithm essentially limits the full algebraic closure algorithm to compositions of the form $X_{he}\diamond X_{ef}$, for entities $e$ and $f$ such that there is a fact of the form $r(e,f)$ in $\mathcal{F}$. As the following result shows, the algorithm is still sound, although it may infer less knowledge than the full algebraic closure algorithm. 

For a rule $\rho$ of the form \eqref{eqDisjunctiveRuleAppendix}, we write $\textit{head}(\rho)$ to denote the set $\{s_1,...,s_k\}$ of relations appearing in the head of the rule and $\textit{body}(\rho)$ to denote the pair $(r_1,r_2)$ of relations appearing in the body.
\begin{proposition}\label{propDisjunctiveConstruction1}
Assume that $\mathcal{K}$ consists of (i) rules of the form \eqref{eqDisjunctiveRuleAppendix}, (ii) for each pair of distinct relations $r_1,r_2$ the disjointness constraint \eqref{eqConstraint}, and (iii) the rule $\bigvee_{r\in\mathcal{R}}r(X,Y)\leftarrow \top$. Let $e\in\mathcal{E}$. It holds that $\mathcal{K} \cup \mathcal{F} \models \bigvee_{r\in X_e} r(h,e)$.
\end{proposition}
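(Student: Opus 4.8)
The plan is to prove soundness of directional algebraic closure by a short simultaneous induction on the refinement index, after one preliminary observation. First I would note that in any Herbrand model $\omega$ of $\mathcal{K}_{\mathcal{F}}$ that contains $\mathcal{F}$ (as required by the semantics of Appendix~\ref{secSemanticsRules}), \emph{exactly one} relation holds between each ordered pair of entities $a,b\in\mathcal{E}$: at least one, because $\omega$ must satisfy the ground instance of the exhaustiveness rule $\bigvee_{r\in\mathcal{R}}r(X,Y)\leftarrow\top$ at $(X,Y)=(a,b)$; at most one, because otherwise some ground disjointness constraint of the form \eqref{eqConstraint} would be violated. Write $r^{\omega}_{ab}$ for this unique relation, so $r^{\omega}_{ab}(a,b)\in\omega$. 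The whole statement then reduces to showing that $r^{\omega}_{he}\in X_e$ for every such model $\omega$: since $r^{\omega}_{he}(h,e)\in\omega$, this gives $\omega\models\bigvee_{r\in X_e}r(h,e)$, and as $\omega$ ranges over all models we get $\mathcal{K}\cup\mathcal{F}\models\bigvee_{r\in X_e}r(h,e)$ (if $\mathcal{F}$ is inconsistent with $\mathcal{K}$ there are no models and the claim is vacuous).

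The core is an induction on $i$, carried out \emph{simultaneously over all entities}, with invariant: $r^{\omega}_{f}:=r^{\omega}_{hf}\in X^{(i)}_{f}$ for every $f\in\mathcal{E}$. Base case $i=0$: if $r(h,e)\in\mathcal{F}$ then $r(h,e)\in\omega$, hence $r^{\omega}_{he}=r$ and $X^{(0)}_e=\{r\}$; otherwise $X^{(0)}_e=\mathcal{R}$ and $r^{\omega}_{he}\in\mathcal{R}$ trivially. Inductive step: fix $e$ and recall
\[
X^{(i)}_{e} = X^{(i-1)}_{e} \cap \bigcap\{ X^{(i-1)}_{f} \diamond s \mid s(f,e)\in\mathcal{F}\}.
\]
Applying the hypothesis to $e$ itself gives $r^{\omega}_{he}\in X^{(i-1)}_{e}$, so it remains to verify $r^{\omega}_{he}\in X^{(i-1)}_{f}\diamond s$ for each fact $s(f,e)\in\mathcal{F}$. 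Since $s(f,e)\in\omega$ we have $r^{\omega}_{fe}=s$; put $r:=r^{\omega}_{hf}$, which lies in $X^{(i-1)}_{f}$ by the hypothesis. If $\mathcal{K}$ contains the composition rule with body $(r,s)$, its grounding at $(X,Y,Z)=(h,f,e)$ is $\bigvee_{t\in r\circ s}t(h,e)\leftarrow r(h,f)\wedge s(f,e)$; as $r(h,f),s(f,e)\in\omega$ and $\omega$ satisfies this rule, some $t\in r\circ s$ has $t(h,e)\in\omega$, so by uniqueness $r^{\omega}_{he}\in r\circ s$. If no such rule exists, then $r\circ s=\mathcal{R}$ by definition and $r^{\omega}_{he}\in r\circ s$ anyway. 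In both cases $r^{\omega}_{he}\in r\circ s\subseteq\bigcup\{r'\circ s\mid r'\in X^{(i-1)}_{f}\}=X^{(i-1)}_{f}\diamond s$, which closes the step.

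Since the invariant holds for all $i$ and $(X^{(i)}_e)_i$ stabilises to $X_e$, we get $r^{\omega}_{he}\in X_e$, and the conclusion follows as explained. I do not anticipate a genuine obstacle; the points needing care are (i) phrasing the induction hypothesis so that it ranges over \emph{all} entities at level $i-1$, since refining $X_e$ appeals to $X_f$ for in-neighbours $f$; (ii) making explicit that the relevant Herbrand models extend $\mathcal{F}$; and (iii) the small case split on whether $\mathcal{K}$ supplies a composition rule for $(r,s)$, which is exactly where the ``$r_1\circ r_2=\mathcal{R}$ when undefined'' convention is used.
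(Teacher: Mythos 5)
Your proof is correct, and it reaches the conclusion by a route that is dual to the one the paper takes, though both share the same skeleton: an induction on the refinement index $i$ with the hypothesis quantified over \emph{all} entities simultaneously. The paper argues at the level of entailed formulas: it shows that every relation $r$ that gets \emph{removed} in passing from $X_e^{(i-1)}$ to $X_e^{(i)}$ satisfies $\mathcal{K}\cup\mathcal{F}\models\neg r(h,e)$, which it derives by combining the composition rule with body $(t,s)$ for each surviving $t\in X_f^{(i-1)}$ with the pairwise-disjointness constraints \eqref{eqConstraint}, and then chaining through the entailed disjunction $\bigvee_{t\in X_f^{(i-1)}}t(h,f)$. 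You instead argue model-by-model: you first use exhaustiveness plus disjointness to show that each Herbrand model $\omega$ extending $\mathcal{F}$ determines a unique relation $r^{\omega}_{he}$, and then show this ``true'' relation is never eliminated, so it \emph{survives} into $X_e$. Your version is arguably cleaner in the inductive step (no need to convert ``some other relation holds'' into a refutation of $r$ via disjointness each time, and the case where $\mathcal{K}$ has no composition rule for $(r,s)$ is handled transparently by the $r\circ s=\mathcal{R}$ convention), and it makes explicit where the exhaustiveness rule is used; the cost is that you must commit to the convention that the relevant models are those containing $\mathcal{F}$, which the paper's Appendix-level semantics leaves slightly implicit, whereas the paper's entailment-level argument never needs to unfold what a model is beyond the base case. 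Both proofs are sound and of comparable length.
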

\begin{proof}
We show this result by induction. First note that we have $\mathcal{K}\cup \mathcal{F} \models \bigvee_{r\in X_e^{(0)}} r(h,e)$. Indeed, either $X_e^{(0)}=\{r(h,e)\}$ with $r(h,e)\in\mathcal{F}$, in which case the claim clearly holds, or $X_e^{(0)}=\mathcal{R}$, in which case the claim holds because $\mathcal{K}$ contains the rule $\bigvee_{r\in\mathcal{R}}r(X,Y)\leftarrow \top$. Now suppose we have already established $\mathcal{K}\cup \mathcal{F} \models \bigvee_{r\in X_e^{(i-1)}} r(h,e)$ for every $e\in\mathcal{E}$. To show that $\mathcal{K} \cup \mathcal{F} \models \bigvee_{r\in X_e^{(i)}} r(h,e)$, it is sufficient to show that $\mathcal{K} \cup \mathcal{F} \models \neg r(h,e)$ for every $r\in X_e^{(i-1)}\setminus X_e^{(i)}$. Let $r\in X_e^{(i-1)}\setminus X_e^{(i)}$. Then there must exist some $s(f,e)\in\mathcal{F}$ such that $r\notin X_f^{(i-1)}\diamond s$. In that case, for every $t\in X_f^{(i-1)}$ we have $r\notin t\circ s$. This means that for every $t\in X_f^{(i-1)}$ there exists some rule $\rho_t$ in $\mathcal{K}$ such that $\textit{body}(\rho_t)=(t,s)$ and $r\notin \textit{heads}(\rho_t)$. 
We have for every $t\in X_f^{(i-1)}$ that $\mathcal{K} \cup \mathcal{F} \models t(h,f) \rightarrow \vee_{u\in \textit{heads}(\rho_t)} u(h,e)$. Moreover, because of our assumption that $\mathcal{K}$ encodes that all relations are pairwise disjoint, for $u\neq r$ we have $\mathcal{K}\models u(h,e)\rightarrow \neg r(h,e)$. We thus find for every $t\in X_f^{(i-1)}$ that $\mathcal{K} \cup \mathcal{F} \models t(h,f) \rightarrow \neg r(h,e)$. By induction we also have $\mathcal{K} \cup \mathcal{F} \models \bigvee_{t\in X_f^{(i-1)}} t(h,e)$. Together we find 
$\mathcal{K} \cup \mathcal{F} \models \neg r(h,e)$.
\end{proof}

\section{Expressivity: Proof of Proposition~\ref{prop:neural-d-a-closure}}
In this section, we provide a proof for Proposition~\ref{prop:neural-d-a-closure}, which we first restate formally in Proposition~\ref{propExpressivityMin}. We will show that \steven{the base EpiGNN model (i.e.\ the forward model) is already} capable of simulating the directional algebraic closure algorithm. The proof associates the embeddings $\mathbf{e}^{(i)}$ from the GNN with the sets $X_e^{(i)}$. We show the result for the min-pooling operator $\psi_{\min}$ defined as follows:
\begin{align}\label{eqMinPooling}
\psi_{\min}(\mathbf{x_1},\dots,\mathbf{x_k}) = \frac{\min(\mathbf{x_1},\dots,\mathbf{x_k})}{\| \min(\mathbf{x_1},\dots,\mathbf{x_k})\|_1}
\end{align}
\begin{proposition}\label{propExpressivityMin}
Let $\mathcal{F}$ be a set of facts and $\mathcal{K}$ a knowledge base satisfying the conditions of Proposition \ref{propDisjunctiveConstruction1}. Furthermore assume that $\mathcal{K}\cup \mathcal{F}$ is consistent, i.e.\ $\mathcal{K}\cup \mathcal{F}\not\models \bot$. Let  $X_e^{(i)}$ be sets of relations that are constructed using the directional algebraic closure algorithm, and let $e^{i}_j$ denote the $j\textsuperscript{th}$ coordinate of $\mathbf{e}^{(i)}$.  Let the pooling operation be chosen as $\psi=\psi_{\min}$.
There exists a parameterisation of the vectors $\mathbf{r}_i$ and $\mathbf{a}_{ij}$ such that the following refinement condition for (\ref{eq:directional-refinement-step}) holds: 
\begin{align}\label{eqExpressivityRelationshipXandE}
X_e^{(i)} \supseteq \{r_j \,|\,  e^{i+1}_j > 0, 2\leq j\leq n\} \supseteq X_e^{(i+1)}
\end{align}
\end{proposition}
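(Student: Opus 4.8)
The plan is to exhibit an explicit parameterisation and then prove the two inclusions in \eqref{eqExpressivityRelationshipXandE} simultaneously by induction on $i$. For the parameterisation, I would set $n = |\mathcal{R}| + 1$, identifying coordinate $1$ with the identity relation and coordinates $2,\ldots,n$ with the relations of $\mathcal{R}$. For each $r\in\mathcal{R}$ I set $\mathbf{r}$ to be the one-hot vector on the coordinate of $r$. The composition vectors are chosen to reflect $\circ$: for $i,j\geq 2$ corresponding to relations $r,s\in\mathcal{R}$, let $\mathbf{a}_{ij}$ be the uniform distribution over the coordinates of the relations in $r\circ s$ (recall $r\circ s=\mathcal{R}$ when no composition rule exists, so this is always a nonempty set); and, as required by the paper, $\mathbf{a}_{1j}=\text{one-hot}(j)$. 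The key invariant I want to maintain is that, for each entity $e$, the \emph{support} of $\mathbf{e}^{(i)}$ restricted to coordinates $2,\ldots,n$ is sandwiched between $X_e^{(i)}$ and $X_e^{(i+1)}$, while coordinate $1$ may remain positive only at $e=h$ (this is why the index set in \eqref{eqExpressivityRelationshipXandE} runs over $2\leq j\leq n$).

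For the base case $i=0$: the GNN initialisation gives $\mathbf{h}^{(0)}=\text{one-hot}(1)$ and $\mathbf{e}^{(0)}=(\tfrac1n,\ldots,\tfrac1n)$ for $e\neq h$, whose support on coordinates $2,\ldots,n$ is all of $\mathcal{R}$. Meanwhile $X_e^{(0)}$ is either $\{r\}$ (when $r(h,e)\in\mathcal{F}$) or $\mathcal{R}$. So at layer $0$ the support is $\mathcal{R}$, which contains $X_e^{(0)}$ but need not be contained in it; however what I actually need is $X_e^{(0)}\supseteq \mathrm{supp}(\mathbf{e}^{(1)})$, i.e.\ one message-passing step must already sharpen the embedding of a directly-asserted neighbour of $h$ down to $\{r\}$. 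This follows because $\phi(\mathbf{h}^{(0)},\mathbf{r})=\phi(\text{one-hot}(1),\mathbf{r})=\mathbf{r}=\text{one-hot}(\text{coord of }r)$ by the fixed choice $\mathbf{a}_{1j}=\text{one-hot}(j)$, and min-pooling with this one-hot vector collapses the support to $\{r\}$. So I would actually organise the induction so that the hypothesis at step $i$ is exactly the double inclusion in \eqref{eqExpressivityRelationshipXandE} with $i$ decremented appropriately, matching the off-by-one in the statement.

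For the inductive step, I compare the refinement rule \eqref{eq:directional-refinement-step}, $X_e^{(i)} = X_e^{(i-1)} \cap \bigcap\{X_f^{(i-1)}\diamond s \mid s(f,e)\in\mathcal{F}\}$, with the GNN update $\mathbf{e}^{(l)} = \psi_{\min}(\{\mathbf{e}^{(l-1)}\}\cup\{\phi(\mathbf{f}^{(l-1)},\mathbf{r}) \mid r(f,e)\in\mathcal{F}\})$. The two facts to establish are: (1) $\mathrm{supp}(\phi(\mathbf{f}^{(l-1)},\mathbf{s}))\cap\{2,\ldots,n\}$ equals $\bigcup\{r\circ s \mid r\in\mathrm{supp}(\mathbf{f}^{(l-1)})\cap\{2,\ldots,n\}\} = \mathrm{supp}(\mathbf{f}^{(l-1)})\diamond s$ — this is a direct support computation from \eqref{eqForwardModel} using that all $f_i r_j$ coefficients multiplying a chosen $\mathbf{a}_{ij}$ are nonnegative (no cancellation), together with the fact that coordinate $1$ of $\mathbf{s}$ is $0$ so the identity coordinate of $\mathbf{f}^{(l-1)}$ only ever maps forward via $\mathbf{a}_{1j}$, contributing exactly $s$ itself, consistent with "$h\xrightarrow{s}e$ means relation $s$"; and (2) the support of a min-pool is exactly the intersection of the supports, since $\min(\mathbf{x}_1,\ldots,\mathbf{x}_k)_j>0$ iff every $(\mathbf{x}_t)_j>0$, and $L_1$-normalisation does not change the support (the denominator is nonzero precisely because $\mathcal{K}\cup\mathcal{F}$ is consistent, which guarantees the directional closure never empties a set, hence by induction the intersection of supports is nonempty). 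Combining (1) and (2) with the induction hypothesis on each $\mathbf{f}^{(l-1)}$ sandwiching $X_f$ between consecutive layers, the intersection structure of the GNN update maps onto the intersection structure of \eqref{eq:directional-refinement-step}, yielding both inclusions of \eqref{eqExpressivityRelationshipXandE} at the next level (the "$\supseteq$ left" inclusion uses the \emph{lower} side of the hypothesis, the "$\supseteq$ right" uses the \emph{upper} side, which is exactly why a staggered double inclusion rather than an equality is the right invariant).

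The main obstacle I expect is handling the off-by-one and the role of coordinate $1$ cleanly: because the GNN needs one layer to "discover" the asserted facts $r(h,e)\in\mathcal{F}$ (whereas directional closure has them at $i=0$), and because the identity coordinate behaves differently from the others under $\phi$, the bookkeeping of which layer of the GNN corresponds to which iteration of the algorithm — and the argument that coordinate $1$ stays zero on all entities except $h$ after the first layer — is where the proof is most likely to go subtly wrong. A secondary subtlety is ensuring the normalisation denominators in $\psi_{\min}$ never vanish; I would isolate this as a small lemma, deriving nonemptiness of all $X_e^{(i)}$ from the assumed consistency of $\mathcal{K}\cup\mathcal{F}$ (so that Proposition \ref{propDisjunctiveConstruction1}'s soundness guarantee applies and no $X_e$ collapses to $\emptyset$), and then transferring nonemptiness to the embedding supports via the induction.
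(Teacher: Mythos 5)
Your proposal matches the paper's proof essentially step for step: the same parameterisation (one-hot relation vectors $\mathbf{r}_j$ and $\mathbf{a}_{ij}$ uniform over $r_i\circ r_j$, with $\mathbf{a}_{1j}=\text{one-hot}(j)$ for the identity coordinate), the same staggered double-inclusion invariant proved by induction, and the same two key observations that $\phi$ acts as $\diamond$ on supports while $\psi_{\min}$ intersects them, with consistency of $\mathcal{K}\cup\mathcal{F}$ guaranteeing non-vanishing normalisers. The base-case subtlety you flag (one message-passing step collapsing the support of a direct neighbour of $h$ to $\{r\}$) is exactly how the paper handles the off-by-one.
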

\begin{proof}
Let $r_1,...,r_n$ be an enumeration of the relations in $\mathcal{R}\cup\{\textit{id}\}$, where we fix $r_1=\textit{id}$. Let $\mathbf{a}_{ij} = (a^{ij}_1,...,a^{ij}_n)$ be defined for as follows ($i\in \{1,...,n\}$):
\begin{align*}
a^{ij}_l = 
\begin{cases}
\frac{1}{|r_i\circ r_j|} & \text{if $r_l\in r_i\circ r_j$}\\
0 & \text{otherwise}
\end{cases}
\end{align*}
where we define $\textit{id}\circ r_j = r_j$ for every $j\in\{1,...,n\}$.
Note that $\mathbf{a}_{ij}$ indeed satisfies the requirements of the model: the coordinates of $\mathbf{a}_{ij}$ are non-negative and sum to 1, while $\mathbf{a}_{1j} = \textit{one-hot}(j)$ follows from the fact that $\textit{id}\circ r_j = r_j$. Furthermore, we define $\mathbf{r}_j = \text{one-hot}(j)$. 

We show the result by induction. For $i=0$, if $\mathcal{F}$ contains a fact of the form $r_l(h,e)$, we have $X_e^{(0)} = \{r_l\}$. We show that $\mathbf{e}^{(1)}$ is non-zero in the $l\textsuperscript{th}$ coordinate and zero everywhere else. We have:
\begin{align*}
\phi(\mathbf{h}^{(0)},\mathbf{r}_l) = \sum_i\sum_l h_i^0 r^l_j \mathbf{a}_{ij}= \sum_j  r^l_j \mathbf{a}_{1j} = \mathbf{r}_l = \text{one-hot}(l)
\end{align*}
This already shows that $\mathbf{e}^{(1)}$ is zero in all coordinates apart from the $l\textsuperscript{th}$. Now let $f\neq h$ and $r_p$ be such that $r_p(f,e)\in\mathcal{F}$.  We have
\begin{align*}
\phi(\mathbf{f}^{(0)},\mathbf{r}_p) = \sum_i\sum_j f_i^0 r^p_j \mathbf{a}_{ij}=  \frac{1}{n} \sum_i \sum_j  r^p_j \mathbf{a}_{ij} = \frac{1}{n} \sum_i \mathbf{a}_{ip}
\end{align*}
We need to show that the $l\textsuperscript{th}$ coordinate of the latter vector is non-zero. 
Since $\mathcal{K}\cup \mathcal{F}$ is consistent and $\mathcal{F}$ contains both $r_l(h,e)$ and $r_p(f,e)$, it has to be the case there there is some $q\in \{1,...,n\}$ such that $r_l\in r_q\circ r_p$. There thus exists some $q\in\{1,...,n\}$ such that the $l\textsuperscript{th}$ coordinate of $\mathbf{a}_{qp}$ is non-zero. Since all vectors of the $\mathbf{a}_{ip}$ vectors have non-negative coordinates, it follows that the $l\textsuperscript{th}$ coordinate of $\frac{1}{n} \sum_i \mathbf{a}_{ip}$ is non-zero.  We have thus shown that $\{r_j \,|\,  e^{0}_j > 0, 2\leq j\leq n\} = \{r_l\} = X^{(0)}_e\supseteq X^{(1)}_e$.

If $\mathcal{F}$ does not contain any facts of the form $r_l(h,e)$, then $X_e^{(0)}=\mathcal{R}$. We need to show for each $l\in\{2,...,n\}$ that either $e^1_l$ is non-zero or $r_l\notin X^1_e$. We have that $e^1_l$ is non-zero if the $l\textsuperscript{th}$ component of $\phi(\mathbf{f}^{(0)},\mathbf{r}_p)$ is non-zero for every fact of the form $r_p(f,e)$ in $\mathcal{F}$, where $f\neq h$. Consider such a fact $r_p(f,e)$ and assume the $l\textsuperscript{th}$ component of $\phi(\mathbf{f}^{(0)},\mathbf{r}_p)$ is actually 0. Since $f\neq h$ we have $\mathbf{f}^{(0)}=(\frac{1}{n},...,\frac{1}{n})$.  The $l\textsuperscript{th}$ component of $\phi(\mathbf{f}^{(0)},\mathbf{r}_p) = \frac{1}{n} \sum_i \mathbf{a}_{ip}$ can thus only be 0 if the $l\textsuperscript{th}$ component of $\mathbf{a}_{ip}$ is 0 for every $i$. This implies that $r_l\notin r_i\circ r_p$ for any $i\in\{1,...,n\}$, from which it follows that $r_l\notin X_e^{(1)}$.

Now suppose that $X_e^{(i-1)} \supseteq \{r_j \,|\,  e^{i}_j > 0, 2\leq j\leq n\} \supseteq X_e^{(i)}$ holds for every entity $e$. We show that \eqref{eqExpressivityRelationshipXandE} must then hold as well. We first show $X_e^{(i)} \supseteq \{r_j \,|\,  e^{i+1}_j > 0, 2\leq j\leq n\}$. To this end, we need to show that for each $r_j\in X_e^{(i-1)}\setminus X_e^{(i)}$ it holds that $e^{i+1}_j=0$. If $r_j\in X_e^{(i-1)}\setminus X_e^{(i)}$ it means that there is some $r_p(f,e)\in\mathcal{F}$ such that $r_j\notin X_f^{(i-1)}\diamond r_p$. This is the case if $r_j\notin r_q\diamond r_p$ for every $r_q\in X_f^{(i-1)}$. This, in turn, means that the $j\textsuperscript{th}$ component of $\mathbf{a}_{qp}$ is 0 for every $q$ such that $r_q\in X_f^{(i-1)}$.  Furthermore, by the induction hypothesis, we know that $r_q\notin X_f^{(i-1)}$ means $f^{i}_{q}=0$.  In other words, for each $q$ we have that either $f_q^{i}=0$ or that the $j\textsuperscript{th}$ component of $\mathbf{a}_{qp}$ is 0. It follows that the $j\textsuperscript{th}$ component of $\phi(\mathbf{f}^{(i)},\mathbf{r}_p)$ is 0, and thus also that $e_j^{i+1}=0$. 

We now show $\{r_j \,|\,  e^{i+1}_j > 0, 2\leq j\leq n\} \supseteq X_e^{(i+1)}$. Let $j$ be such that $e^i_j > e^{i+1}_j=0$. We need to show that $r_j\notin X_e^{(i+1)}$. If $e^i_j > e^{i+1}_j=0$ there needs to be some $r_p(f,e)\in\mathcal{F}$ such that the $j\textsuperscript{th}$ component of $\phi(\mathbf{f}^{(i)},\mathbf{r}_p)$ is 0. We have
\begin{align*}
\phi(\mathbf{f}^{(i)},\mathbf{r}_p) = \sum_l\sum_j f_l^i r^p_j \mathbf{a}_{lj}=   \sum_l f_l^i \mathbf{a}_{lp}
\end{align*}
So when the $j\textsuperscript{th}$ component of $\phi(\mathbf{f}^{(i)},\mathbf{r}_p)$ is 0 we must have for each $l$ that either $f_l^i=0$ or that $a^j_{lp}=0$. By the induction hypothesis, $f_l^i=0$ implies $r_l\notin X^{(i)}_f$. Furthermore $a^j_{lp}=0$ means that $r_j\notin r_l\circ r_p$. When the $j\textsuperscript{th}$ component of $\phi(\mathbf{f}^{(i)},\mathbf{r}_p)$ is 0, we thus have that either $r_l\notin X^{(i)}_f$ or $r_j\notin r_l\circ r_p$ for each $l$, which implies $r_j\notin X^{(i+1)}_e$.
\end{proof}
\steven{Another possibility is to use the component-wise product for pooling embeddings:
\begin{align*}
\psi_{\odot}= \frac{(\mathbf{x_1}\odot \ldots \odot \mathbf{x_k}) }{\|(\mathbf{x_1}\odot \ldots \odot \mathbf{x_k}) \|_1}
\end{align*}
where we write $\odot$ for the Hadamard product.
Using the same argument as in the proof of Proposition \ref{propExpressivityMin}, we can show that the same result holds for this pooling operator}. In practice, however, for numerical stability, we evaluate $\psi_{\odot}$ as follows:
\begin{align}\label{eqProductPooling}
\psi_{\odot}(\mathbf{x_1},\ldots,\mathbf{x_k}) = \frac{(\mathbf{x_1}\odot \ldots \odot \mathbf{x_k}) + \mathbf{z}}{\|(\mathbf{x_1}\odot \ldots \odot \mathbf{x_k}) + \mathbf{z} \|_1}
\end{align}
where $\mathbf{z}=(\varepsilon,\ldots,\varepsilon)$ for some small constant $\varepsilon>0$. As long as $\varepsilon$ is sufficiently small, this does not affect the ability of the GNN model to simulate the directional closure algorithm. In particular, whenever $e_j^i=0$ in the GNN with $\psi_{\min}$ we can ensure that this coordinate is arbitrarily small in the GNN with $\Psi_{\odot}$ (choosing the $\mathbf{a}_{ij}$ and $\mathbf{r}_j$ vectors as in the proof of Proposition \ref{propExpressivityMin}), by selecting $\varepsilon$ small enough. In particular, there exists some $\delta>0$ such that
\begin{align*}
X_e^{(i)} \supseteq \{r_j \,|\,  e^{i+1}_j > \delta, 2\leq j\leq n\} \supseteq X_e^{(i+1)}
\end{align*}

\section{Disjunctive reasoning benchmarks}\label{secRCC8}
\begin{figure*}[t]
    \centering
    \centerline{\includegraphics[width=0.7\textwidth]{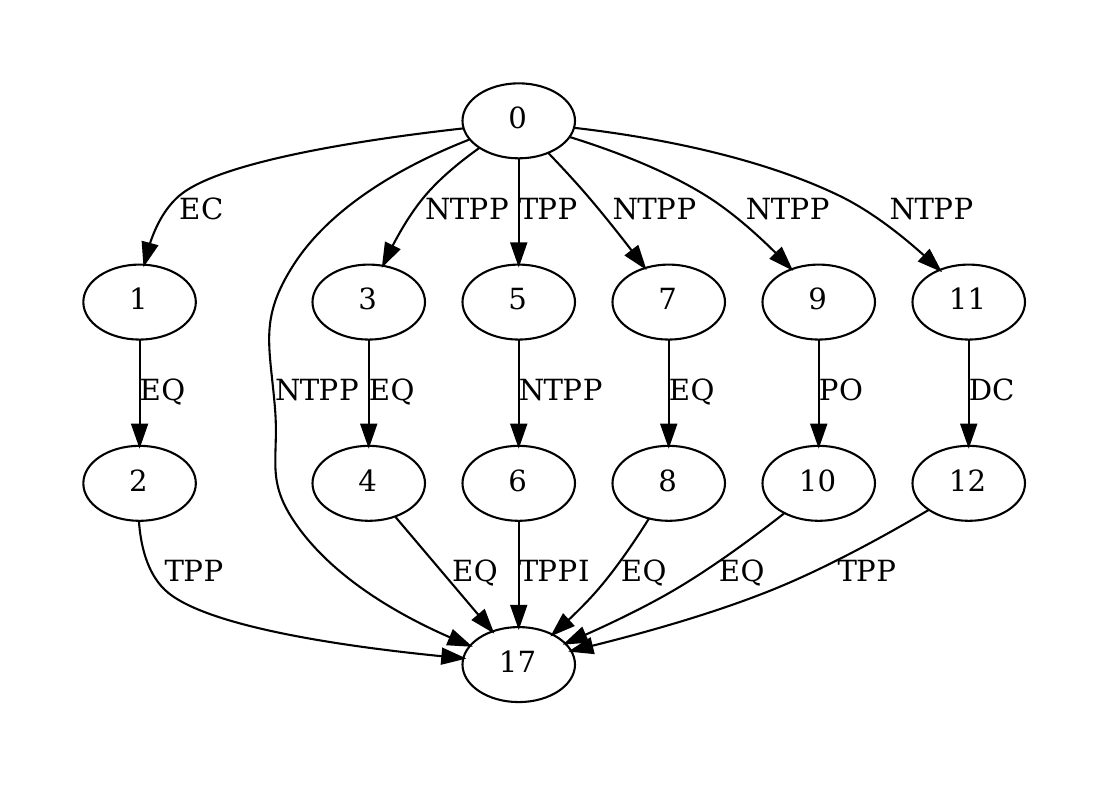}}
    \caption{An example from the RCC-8 benchmark with $b=6$ paths from the source node 0 to the target node 17. Each path has a length of $k=3$ where $k$ is the number of hops or edges from the node 0 to node 17. The graph has to be collapsed into the single relation $\ntpp(0,17)$ using information from all the paths.}
    \label{fig:rcc8-example}
\end{figure*}
\steven{We introduce two new benchmarks: one based on RCC-8 and one based in IA. Both benchmarks are} similar to CLUTRR and Graphlog in \steven{their} focus on assessing inductive relational reasoning via relation classification queries of the form $(h,?,t)$, where we need to predict which relation holds between a given head entity $h$ and tail entity $t$.  The available knowledge is provided as a set of facts $\mathcal{F}$, which we can again think of as a graph. The inductive part refers to the fact that the model is trained and tested on distinct graphs. The main difficulty comes from the fact that only small graphs are available for training, while some of the test graphs are considerably larger.

However, different from existing benchmarks, our \steven{benchmarks require} reasoning about disjunctive rules, which is particularly challenging for many methods. The kind of knowledge that has to be learned is thus more expressive than the Horn rules which are considered in most existing benchmarks. As illustrated in Example \ref{exRCC8reasoning}, this means that models need to process different relational paths and aggregate the resulting information. We vary the difficulty of problem instances based on the number $b$ of such paths and their length $k$. Figure \ref{fig:rcc8-example} provides an example where there are $b=6$ paths of length $k=3$. Each path is partially informative, in the sense that it allows to exclude certain candidate relations, but is not sufficiently informative to pinpoint the exact relation. The model thus needs to rely on the different paths to be able to exclude all but one of the eight possible relations.

In summary, within the context of benchmarks for systematic generalization, our RCC-8 \steven{and IA benchmarks are} novel on two fronts compared to existing benchmarks such as CLUTRR \citep{clutrr} and Graphlog \citep{graphlog}:
\begin{enumerate}
    \item \textbf{Going beyond Horn rules for relation composition}: The composition of some RCC-8 relations is a disjunction of several RCC-8 relations, \steven{and similar for IA}.
    \item \textbf{Multi-path information aggregation}: Models need to reason about multiple relational paths (for the case where $b>1$) to infer the correct answer.
    \item \textbf{Rich graph topologies}: The graph generation process recursively expands an edge in the base graph with valid subgraphs and leads to significantly diverse graph topolgies 
\end{enumerate}

\subsection{RCC-8}
 Region Connection Calculus (RCC-8) \citep{DBLP:conf/kr/RandellCC92} uses eight primitive relations to describe qualitative spatial relationships between regions: $\ntpp(a,b)$ means that $a$ is a proper part of the interior of $b$, $\tpp(a,b)$ means that $a$ is a proper part of $b$ and shares a boundary point with $b$, $\po(a,b)$ means that $a$ and $b$ are overlapping (but neither is included in the other), $\dc(a,b)$ means that $a$ and $b$ are disjoint, $\ec(a,b)$ means that $a$ and $b$ are adjacent (i.e.\ sharing a boundary point but no interior points), $\eq(a,b)$ means that $a$ and $b$ are equal, and $\ntppi$ and $\tppi$ are the inverses of $\ntpp$ and $\tpp$.
\begin{example}\label{exRCC8reasoning}
The RCC-8 calculus describes qualitative spatial relations between two regions using eight primitive relations. The semantics of the RCC-8 relations are governed by disjunctive rules such as:
\begin{align}
\po(X,Z) \vee \tpp(X,Z) \vee \ntpp(X,Z) &\leftarrow \ec(X,Y) \wedge \ntpp(Y,Z) \label{eqRCC8Exrule1}\\
\po(X,Z) \vee \tppi(X,Z) \vee \ntppi(X,Z) &\leftarrow \tppi(X,Y) \wedge \po(Y,Z)\label{eqRCC8Exrule2}
\end{align}
as well as constraints encoding that the RCC-8 relations are disjoint.
Let $\mathcal{K}$ contain these rules and constraints, and let $\mathcal{F} = \{\ec(a,b),\ntpp(b,c),\tppi(a,d),\po(d,c)\}$. Then we have $\mathcal{K}\cup \mathcal{F} \models \po(a,c)$. Indeed, using \eqref{eqRCC8Exrule1} we can infer $\po(a,c) \vee \tpp(a,c) \vee \ntpp(a,c)$, while using \eqref{eqRCC8Exrule2} we can infer $\po(a,c) \vee \tppi(a,c) \vee \ntppi(a,c)$. Using the disjointness constraints, we finally infer $\po(a,c)$.
\end{example}
 
The RCC-8 semantics is governed by the so-called composition table, which describes the composition mapping between two relations.  This is shown in Table \ref{tableRCC8composition} where the trivial composition with the identity element $\eq$ being itself is dropped. Each entry in this table corresponds to a rule of the form \eqref{eqRCC8Exrule1}, specifying the possible relations that may hold between two regions $a$ and $c$, when we know the RCC-8 relation that holds between $a$ and some region $b$ as well as the relation that holds between $b$ and $c$. For instance, the composition of $\ec$ and $\tppi$ is given by the set $\{\dc,\ec\}$, which means that from $\{\ec(a,b),\tppi(b,c)\}$ we can infer $\dc(a,c)\vee \ec(a,c)$. In the table, we write $\mathcal{R}_8$ to denote that any RCC-8 relation is possible. 

\begin{table}
\centering
\scriptsize
\caption{RCC-8 composition table \citep{DBLP:conf/ssd/CuiCR93} (excluding $\eq$).  \label{tableRCC8composition}}
\begin{tabular}{|m{15pt}||m{39pt}|m{39pt}|m{39pt}|m{39pt}|m{44pt}|m{39pt}|m{39pt}|}
\hline
				    & $\dc$ & $\ec$ & $\po$ & $\tpp$ & $\ntpp$ & $\tppi$ & $\ntppi$ \\
\hline
\hline
$\dc$       & $\mathcal{R}_8$   & $\dc$, $\ec$, $\po$, $\tpp$, $\ntpp$   & $\dc$, $\ec$, $\po$, $\tpp$, $\ntpp$        & $\dc$, $\ec$, $\po$, $\tpp$, $\ntpp$  & $\dc$, $\ec$, $\po$, $\tpp$, $\ntpp$  &  $\dc$    & $\dc$ \\
\hline
$\ec$           & $\dc$, $\ec$, $\po$, $\tppi$, $\ntppi$       & $\dc$, $\ec$, $\po$, $\tpp$, $\tppi$, $\eq$      & $\dc$, $\ec$, $\po$, $\tpp$, $\ntpp$ & $\ec$, $\po$, $\tpp$, $\ntpp$ & $\po$, $\tpp$, $\ntpp$ & $\dc$, $\ec$ & $\dc$ \\             
\hline
$\po$        & $\dc$, $\ec$, $\po$, $\tppi$, $\ntppi$   & $\dc$, $\ec$, $\po$, $\tppi$, $\ntppi$        & $\mathcal{R}_8$   & $\po$, $\tpp$, $\ntpp$  & $\po$, $\tpp$, $\ntpp$ & $\dc$, $\ec$, $\po$, $\tppi$, $\ntppi$       & $\dc$, $\ec$, $\po$, $\tppi$, $\ntppi$   \\           
\hline
$\tpp$       & $\dc$ & $\dc$, $\ec$    & $\dc$, $\ec$, $\po$, $\tpp$, $\ntpp$  & $\tpp$, $\ntpp$   & $\ntpp$ & $\dc$, $\ec$, $\po$, $\tpp$, $\tppi$, $\eq$      & $\dc$, $\ec$, $\po$, $\tppi$, $\ntppi$  \\           
\hline
$\ntpp$      & $\dc$ & $\dc$    & $\dc$, $\ec$, $\po$, $\tpp$, $\ntpp$  & $\ntpp$ & $\ntpp$ & $\dc$, $\ec$, $\po$, $\tpp$, $\ntpp$  &  $\mathcal{R}_8$ \\
\hline
$\tppi$     & $\dc$, $\ec$, $\po$, $\tppi$, $\ntppi$        & $\ec$, $\po$, $\tppi$, $\ntppi$    & $\po$, $\tppi$, $\ntppi$    & $\po$, $\eq$, $\tpp$, $\tppi$     & $\po$, $\tpp$, $\ntpp$ & $\tppi$, $\ntppi$ & $\ntppi$ \\            
\hline
$\ntppi$  & $\dc$, $\ec$, $\po$, $\tppi$, $\ntppi$       & $\po$, $\tppi$, $\ntppi$ & $\po$, $\tppi$, $\ntppi$        & $\po$, $\tppi$, $\ntppi$        & $\po$, $\tppi$, $\tpp$, $\ntpp$, $\ntppi$, $\eq$ & $\ntppi$ & $\ntppi$\\      
\hline
\end{tabular}
\end{table}

\subsection{Allen Interval Algebra}
We also introduce a \steven{benchmark} based on Allen's interval algebra \citep{allen1983maintaining} for qualitative temporal reasoning.  The interval algebra \steven{(IA)} uses 13 primitive relations to describe qualitative temporal relationships. \steven{IA} captures all possible relationships between two time intervals, 
\steven{as follows}: $<$$(a,b)$ means that the time interval $a$ \steven{completely} precedes the time interval $b$; $\intervald(a,b)$ means that $a$ occurs during $b$ \steven{while not sharing any boundary points};  $\intervalo(a,b)$ means that $a$ overlaps with $b$;  $\intervalm(a,b)$ means that $a$ meets $b$ (i.e.~$a$ ends exactly \steven{when} $b$ starts);  $\intervals(a,b)$ means that $a$ starts $b$ (i.e.~ $a$ and $b$ \steven{start at the same time while $a$ finishes strictly before $b$});  $\intervalf(a,b)$ means that $a$ finishes $b$ ($a$ and $b$ \steven{finish at the same time, while $b$ starts strictly before $a$}); $=$$(a,b)$ means that $a$ equals $b$;
and finally $>, \intervaldi, \intervaloi, \intervalmi, \intervalsi, \intervalfi$ are the inverses of the respective operations defined previously. 
The composition table for all the primitive interval relations is shown in Table \ref{table-interval-composition} with the exception of the trivial composition of primitive elements with the identity element $=$.  

\begin{table}[t]
\centering
\scriptsize
\caption{Allen's interval algebra composition table \citep{allen1983maintaining} excluding the trivial composition with $=$.  \label{table-interval-composition}}
\centerline{\begin{tabular}{|m{15pt}||m{18pt}|m{18pt}|m{18pt}|m{18pt}|m{18pt}|m{18pt}|m{18pt}|m{18pt}|m{18pt}|m{18pt}|m{18pt}|m{18pt}|}
\hline
				    & $<$ & $>$ & $\intervald$ & $\intervaldi$ & $\intervalo$ & $\intervaloi$ & $\intervalm$ & $\intervalmi$ & $\intervals$ & $\intervalsi$ & $\intervalf$ & $\intervalfi$   \\
\hline
\hline
$<$ 
& $<$  &  & $<,\intervalo$, $\intervalm$, $\intervald$, $\intervals$ & $<$ & $<$ & $<,\intervalo$, $\intervalm$, $\intervald$, $\intervals$ & $<$ & $<,\intervalo$, $\intervalm$, $\intervald$, $\intervals$ & $<$ & $<$ & $<,\intervalo$, $\intervalm$, $\intervald$, $\intervals$ & $<$ \\
\hline
$>$   
& & $>$ & $>$, $\intervaloi$, $\intervalmi$, $\intervald$, $\intervalf$ & $>$ 
& $>$, $\intervaloi$, $\intervalmi$, $\intervald$, $\intervalf$ & $>$ & $>$, $\intervaloi$, $\intervalmi$, $\intervald$, $\intervalf$ & $>$ & $>$, $\intervaloi$, $\intervalmi$, $\intervald$, $\intervalf$ & $>$ & $>$ & $>$  \\             
\hline
$\intervald$
& $<$ & $>$ & $\intervald$ & & $<$, $\intervalo$, $\intervalm$, $\intervald$, $\intervals$ & $>$, $\intervaloi$, $\intervalmi$, $\intervald$, $\intervalf$ & $<$ & $>$ & $\intervald$ & $>$, $\intervaloi$, $\intervalmi$, $\intervald$, $\intervalf$ & $\intervald$ &  $<$, $\intervalo$, $\intervalm$, $\intervald$, $\intervals$   
\\           
\hline
$\intervaldi$       
& $<$, $\intervalo$, $\intervalm$, $\intervaldi$, $\intervalfi$ & 
$>$, $\intervaloi$, $\intervaldi$, $\intervalmi$, $\intervalsi$ & $\intervalo$, $\intervaloi$, $\intervald$, $\intervals$, $\intervalf$, $\intervaldi$, $\intervalsi$, $\intervalfi$, $=$ & $\intervaldi$ & $\intervalo$, $\intervaldi$, $\intervalfi$ & $\intervaloi$, $\intervaldi$, $\intervalsi$ & $\intervalo$, $\intervaldi$, $\intervalfi$ & $\intervaloi$, $\intervaldi$, $\intervalsi$ & $\intervalo$, $\intervaldi$, $\intervalfi$ & $\intervaldi$ & $\intervaloi$, $\intervaldi$, $\intervalsi$ & $\intervaldi$ 
\\           
\hline
$\intervalo$      
& $<$ & $>$, $\intervaloi$, $\intervaldi$, $\intervalmi$, $\intervalsi$ 
& $\intervalo$, $\intervald$, $\intervals$ & $<$, $\intervalo$, $\intervalm$, $\intervaldi$, $\intervalfi$ & $<$, $\intervalo$, $\intervalm$ & $\intervalo$, $\intervaloi$, $\intervald$, $\intervals$, $\intervalf$, $\intervaldi$, $\intervalsi$, $\intervalfi$, $=$ & $<$ & $\intervaloi$, $\intervaldi$, $\intervalsi$ & $\intervalo$ & $\intervalo$, $\intervaldi$, $\intervalfi$ & $\intervalo$, $\intervald$, $\intervals$ & $<$, $\intervalo$, $\intervalm$   
\\
\hline
$\intervaloi$     
&
$<$, $\intervalo$, $\intervalm$, $\intervaldi$, $\intervalfi$ & $>$ & $\intervaloi$, $\intervald$, $\intervalf$ & $>$, $\intervaloi$, $\intervalmi$, $\intervaldi$, $\intervalsi$ & $\intervalo$, $\intervaloi$, $\intervald$, $\intervaldi$, $\intervals$, $\intervalsi$, $\intervalf$, $\intervalfi$, $=$ & $>$, $\intervaloi$, $\intervalmi$ & $\intervalo$, $\intervaldi$, $\intervalfi$ & $>$ & $\intervaloi$, $\intervald$, $\intervalf$ & $\intervaloi$, $>$, $\intervalmi$ & $\intervaloi$ & $\intervaloi$, $\intervaldi$, $\intervalsi$
\\            
\hline
$\intervalm$  & 
$<$ & $>$, $\intervaloi$, $\intervaldi$, $\intervalmi$, $\intervalsi$ & $\intervalo$, $\intervald$, $\intervals$ & $<$ & $<$ & $\intervalo$, $\intervald$, $\intervals$ & $<$ & $\intervalf$, $\intervalfi$, $=$ & $\intervalm$ & $\intervalm$ & $\intervald$, $\intervals$, $\intervalo$ & $<$
\\      
\hline
$\intervalmi$  & 
$<$, $\intervalo$, $\intervalm$, $\intervaldi$, $\intervalfi$ & $>$ & $\intervaloi$, $\intervald$, $\intervalf$ & $>$ & $\intervaloi$, $\intervald$, $\intervalf$ & $>$ & $\intervals$, $\intervalsi$, $=$ & $>$ & $\intervald$, $\intervalf$, $\intervaloi$ & $>$ & $\intervalmi$ & $\intervalmi$
\\      
\hline
$\intervals$  
& 
$<$ & $>$ & $\intervald$ & $<$, $\intervalo$, $\intervalm$, $\intervaldi$, $\intervalfi$ & $<$, $\intervalo$, $\intervalm$ & $\intervaloi$, $\intervald$, $\intervalf$ & $<$ & $\intervalmi$ & $\intervals$ & $\intervals$, $\intervalsi$, $=$ & $\intervald$ & $<$, $\intervalm$, $\intervalo$
\\      
\hline
$\intervalsi$  
& 
$<$, $\intervalo$, $\intervalm$, $\intervaldi$, $\intervalfi$ & $>$ & $\intervaloi$, $\intervald$, $\intervalf$ & $\intervaldi$ & $\intervalo$, $\intervaldi$, $\intervalfi$ & $\intervaloi$ & $\intervalo$, $\intervaldi$, $\intervalfi$ & $\intervalmi$ & $\intervals$, $\intervalsi$, $=$ & $\intervalsi$ & $\intervaloi$ & $\intervaldi$ 
\\      
\hline
$\intervalf$  
& 
$<$ & $>$ & $\intervald$ & $>$, $\intervaloi$, $\intervalmi$, $\intervaldi$, $\intervalsi$ & $\intervalo$, $\intervald$, $\intervals$ & $>$, $\intervaloi$, $\intervalmi$ & $\intervalm$ & $>$ & $\intervald$ & $>$, $\intervaloi$, $\intervalmi$ & $\intervalf$ & $\intervalf$, $\intervalfi$, $=$
\\      
\hline
$\intervalfi$  
& 
$<$ & $>$, $\intervaloi$, $\intervaldi$, $\intervalmi$, $\intervalsi$ & $\intervalo$, $\intervald$, $\intervals$ & $\intervaldi$ & $\intervalo$ & $\intervaloi$, $\intervaldi$, $\intervalsi$ & $\intervalm$ & $\intervalsi$, $\intervaloi$, $\intervaldi$ & $\intervalo$ & $\intervaldi$ & $\intervalf$, $\intervalfi$, $=$ & $\intervalfi$
\\      
\hline
\end{tabular}}
\end{table}


\subsection{Dataset generation process}\label{ssec:datagen}
We now explain how the dataset was created. All sampling in the discussion below is uniform random. Each problem instance has to be constructed such that after aggregating the information provided by all the relational paths, we need to be able to infer a singleton label. In other words, problem instances need to be consistent (i.e.\ the information provided by different paths cannot be conflicting) and together all the paths need to be informative enough to uniquely determine which relation holds between the head and tail entity. This makes brute-force sampling of problem instances prohibitive. Instead, to create a problem instance involving $b$ paths of length $k$, we first sample a base graph, which has $b$ shorter paths, with a length in $\{2,3,4\}$. This is done by pre-computing relational compositions for a large number of paths and then selecting $b$ paths whose intersection is a singleton. Then we repeatedly increase the length of the paths by selecting an edge and replacing it by a short path whose composition is equal to the corresponding relation.

Finally, to add further diversity to the graph topology, for each of the $b$ paths, we allow 1 edge from the base graph to be replaced by a subgraph (rather than a path), where this subgraph is generated using the same procedure. Note that the final path count $b$ then includes the paths from this subgraph as well.

The process is described in more detail below:

\begin{enumerate}
    \item \textbf{Sample short paths:} Randomly sample $n=100\,000$ paths of length $k \in \{2,3,4\}$ and compute their composition. 
    Note that this sampling is done with replacement to avoid uniqueness upper bounds for small graphs. 
    \item \textbf{Generate base graphs:} Generate the desired number of $b$-path base graphs, by selecting paths that were generated in step 1. Each individual path typically composes to a set of relations, but the graphs are constructed such that the intersection of these sets, across all $b$ paths, produces a singleton target label.
    \item \textbf{Recursive edge expansion:} Randomly pick an edge from a path that does not yet have the required length $k$. Select a path from step 1 which composes to a singleton, corresponding to the relation that is associated with the chosen edge. Replace the edge with this path.
    \item \textbf{Recursive subgraph expansion:} Rather than replacing an edge with a path, we can also replace it with a subgraph. As candidate subgraphs, we use the base graphs from step 2 with at most $\lfloor \frac{b}{2} \rfloor$ paths.
    \item Keep repeating steps 2 and 3 until we have the desired number paths $b$ with the desired length of $k$, with the restriction that step 3 is applied at most once to each path from the initial base graph.  
\end{enumerate}
Some example graphs generated via this procedure for the RCC-8 dataset are displayed in Figure~\ref{fig:example-topo-rcc8}. For higher $k$, there is greater diversity in the graph topology and complexity of the graph. To create a dataset for reasoning about interval algebra problems, we follow the same process as for the RCC-8 dataset. Example graphs generated via this procedure for IA are displayed in Figure~\ref{fig:example-topo-interval}.
\begin{figure}[!h]
    \centering 
\begin{subfigure}{0.25\textwidth}
  \centerline{\includegraphics[width=0.55\linewidth]{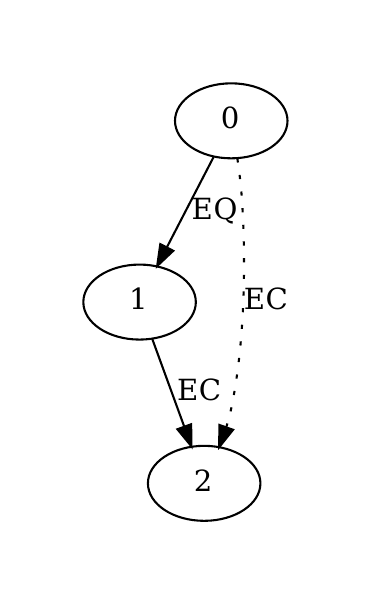}}
  \caption{$k=2, b=1$}
  \label{fig:1}
\end{subfigure}\hfil 
\begin{subfigure}{0.25\textwidth}
  \includegraphics[width=0.8\linewidth]{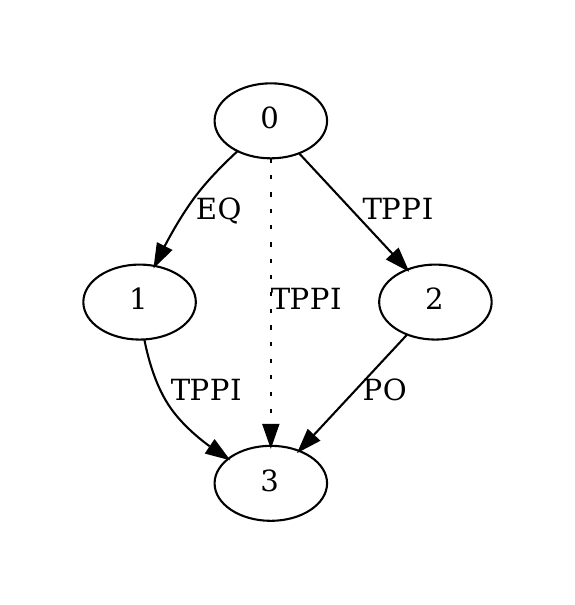}
  \caption{$k=2, b=2$}
  \label{fig:2}
\end{subfigure}\hfil 
\begin{subfigure}{0.25\textwidth}
  \includegraphics[width=\linewidth]{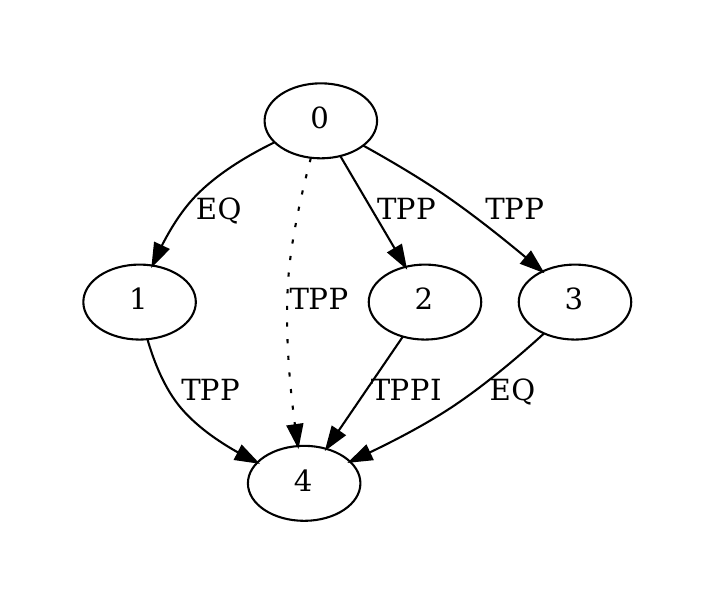}
  \caption{$k=2, b=3$}
  \label{fig:3}
\end{subfigure}
\begin{subfigure}{0.25\textwidth}
  \includegraphics[width=0.7\linewidth]{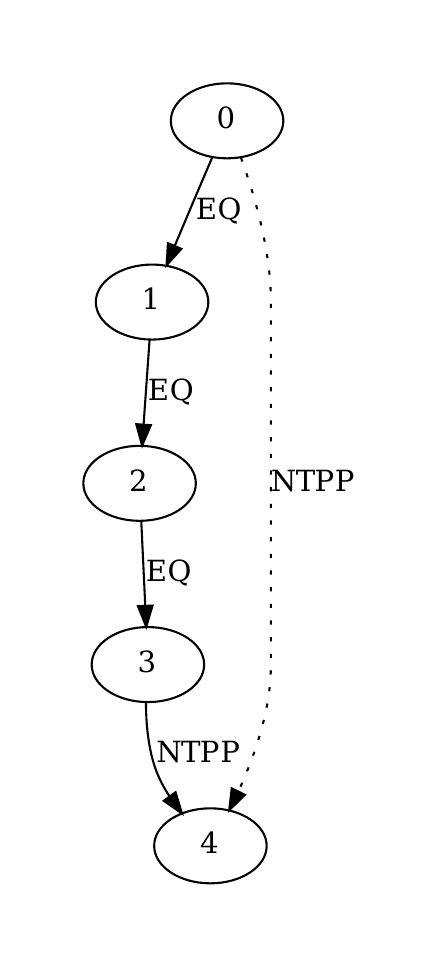}
  \caption{$k=4, b=1$}
\end{subfigure}\hfil 
\begin{subfigure}{0.25\textwidth}
  \includegraphics[width=0.9\linewidth]{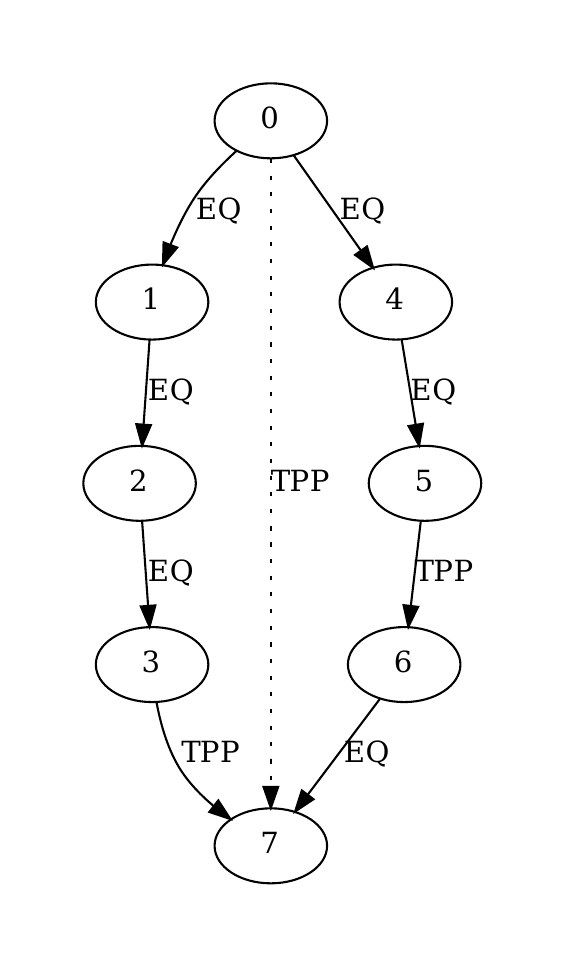}
  \caption{$k=4, b=2$}
\end{subfigure}\hfil 
\begin{subfigure}{0.25\textwidth}
  \includegraphics[width=1.\linewidth]{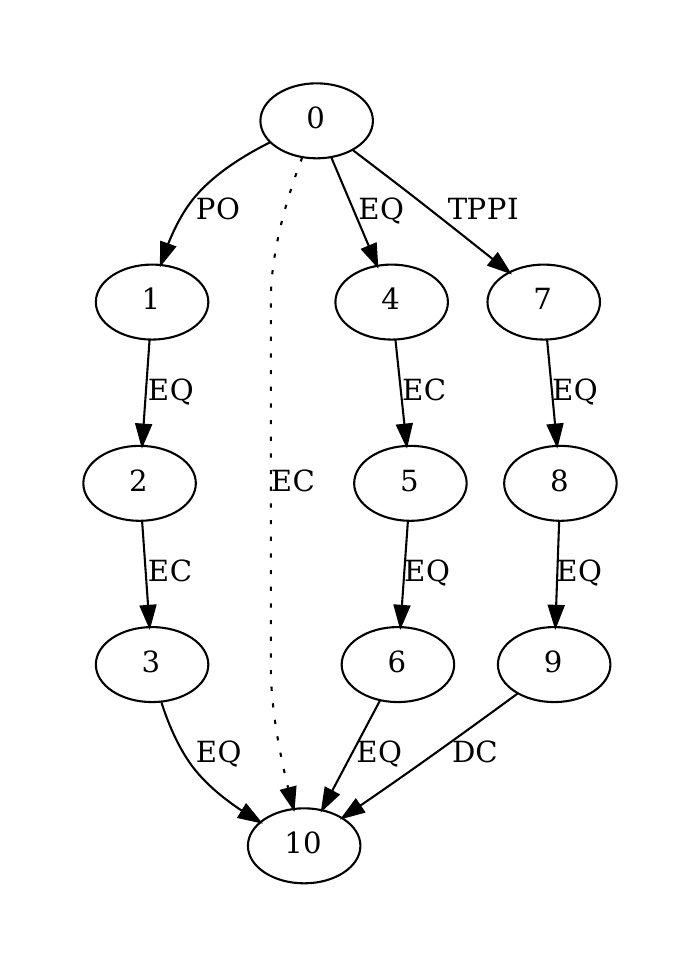}
  \caption{$k=4, b=3$}
\end{subfigure}
\begin{subfigure}{0.25\textwidth}
\hspace{-6ex}
\rightline{\includegraphics[width=1.2\linewidth]{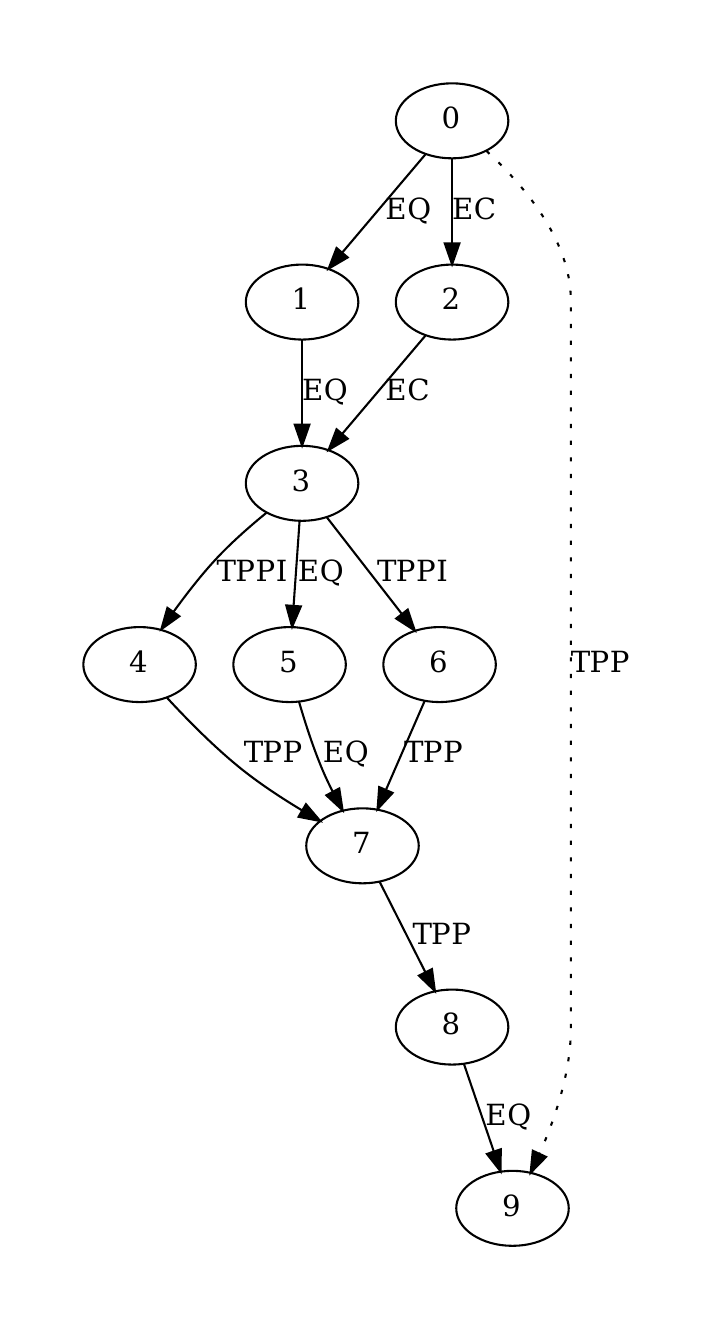}}
  \caption{$k=6, b=1$}
\end{subfigure}\hfil 
\begin{subfigure}{0.25\textwidth}
  \hspace{-4ex}
  \centerline{\includegraphics[width=2.7\linewidth]{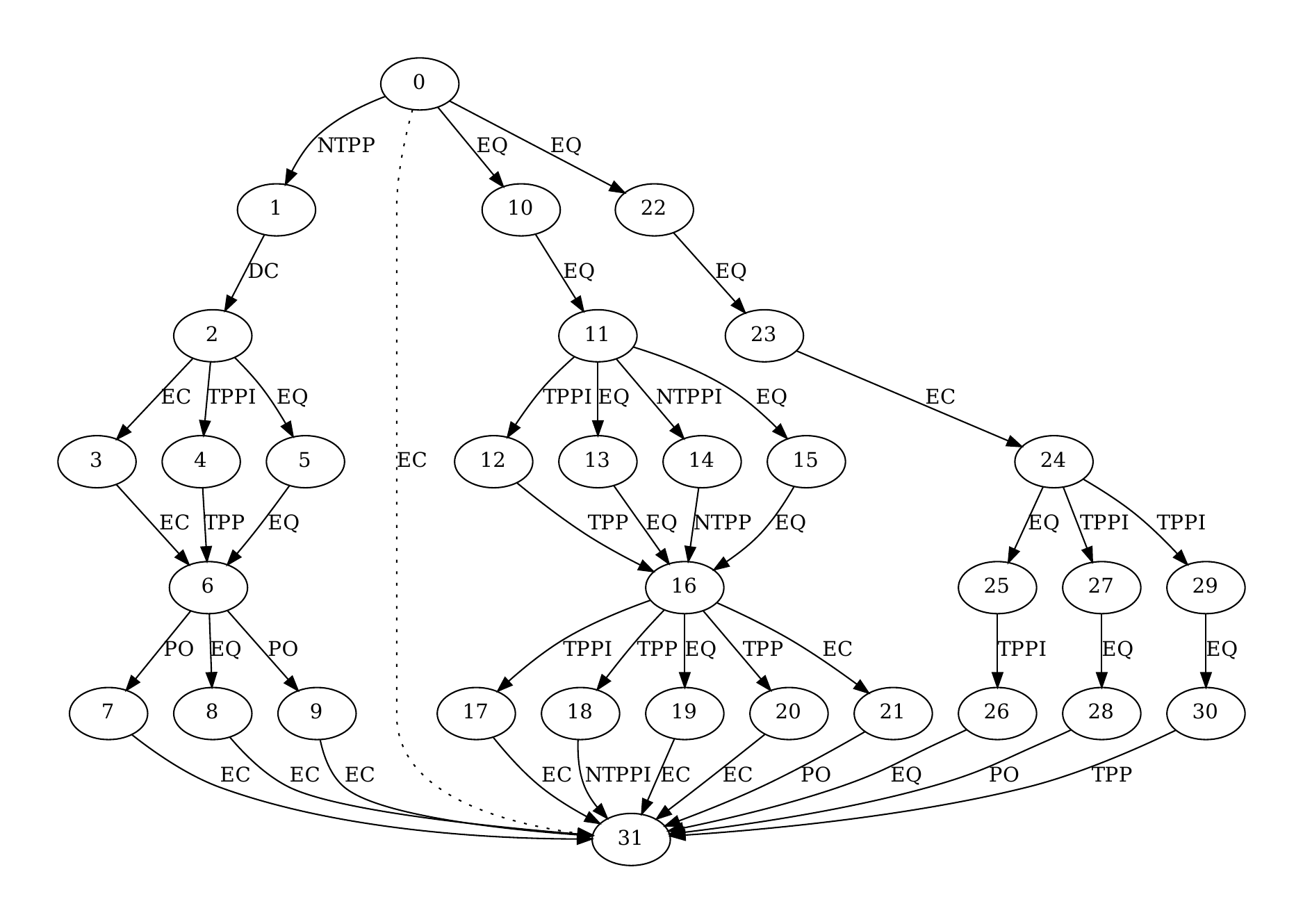}}
  \caption{$k=6, b=3$}
\end{subfigure}
\caption{Some graph instances for the RCC-8 dataset generated using the procedure described in~\ref{ssec:datagen}. The graph topology becomes more diverse for the test instances when sub-graphs are embedded within a single path, as shown in (g) for path length $k=6$ and number of paths $b=1$. In this particular case, there are two sub-graphs that have been embedded in the graph by replacing two edges.  Instances of the type shown in (a), (b), (c), (d), (e), (f) are used in the training set and the graph topology is fixed in this case. The target edge label between the source node and the tail node that needs to be predicted by the model is indicated by the dotted line.}
\label{fig:example-topo-rcc8}
\end{figure}

\begin{figure}[!h]
    \centering 
\begin{subfigure}{0.25\textwidth}
  \centerline{\includegraphics[width=0.55\linewidth]{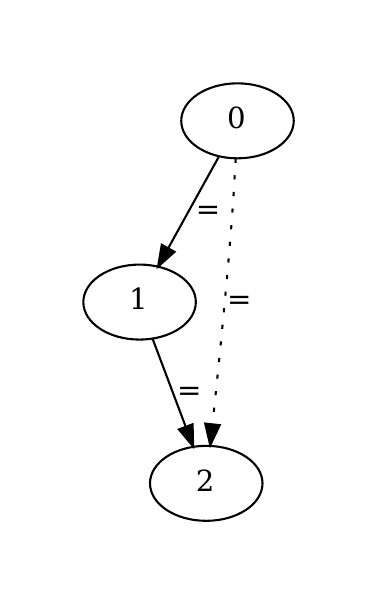}}
  \caption{$k=2, b=1$}
\end{subfigure}\hfil 
\begin{subfigure}{0.25\textwidth}
  \includegraphics[width=0.8\linewidth]{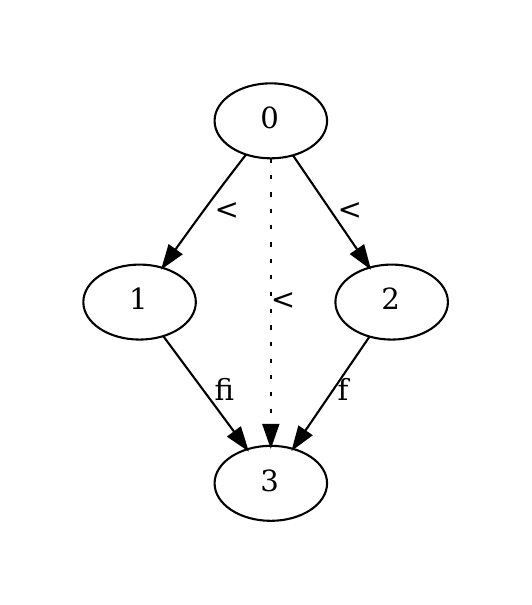}
  \caption{$k=2, b=2$}
\end{subfigure}\hfil 
\begin{subfigure}{0.25\textwidth}
  \includegraphics[width=\linewidth]{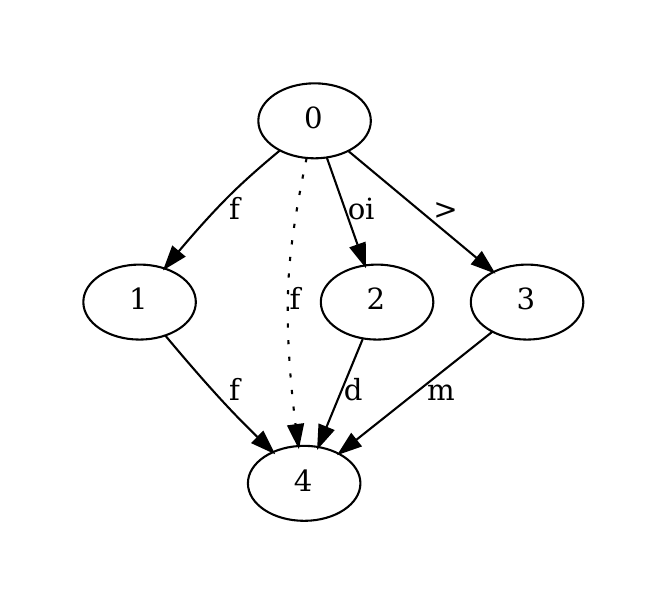}
  \caption{$k=2, b=3$}
\end{subfigure}
\begin{subfigure}{0.25\textwidth}
  \includegraphics[width=0.7\linewidth]{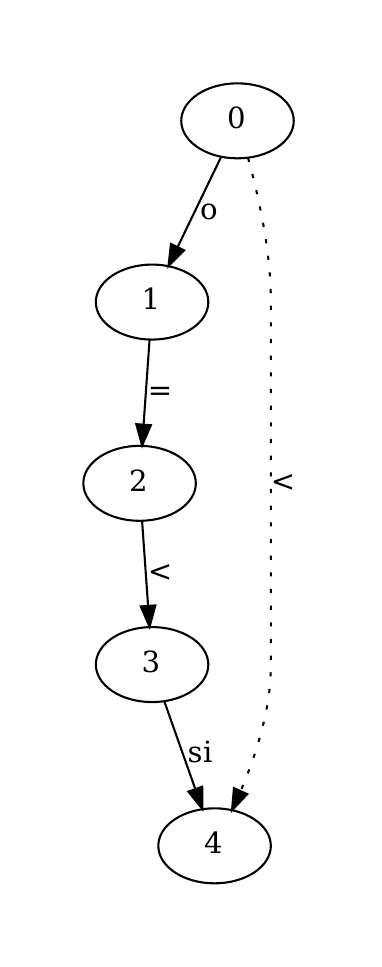}
  \caption{$k=4, b=1$}
\end{subfigure}\hfil 
\begin{subfigure}{0.25\textwidth}
  \includegraphics[width=0.9\linewidth]{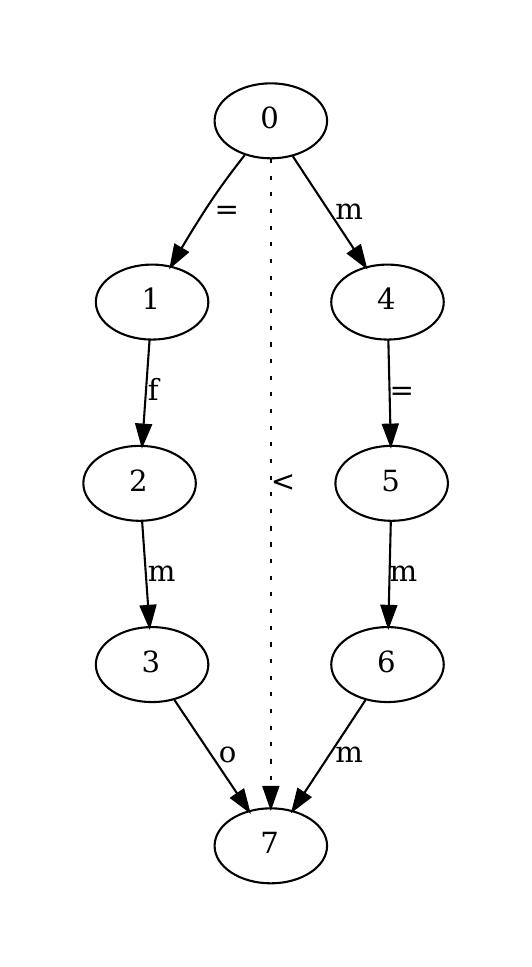}
  \caption{$k=4, b=2$}
\end{subfigure}\hfil 
\begin{subfigure}{0.25\textwidth}
  \includegraphics[width=1.\linewidth]{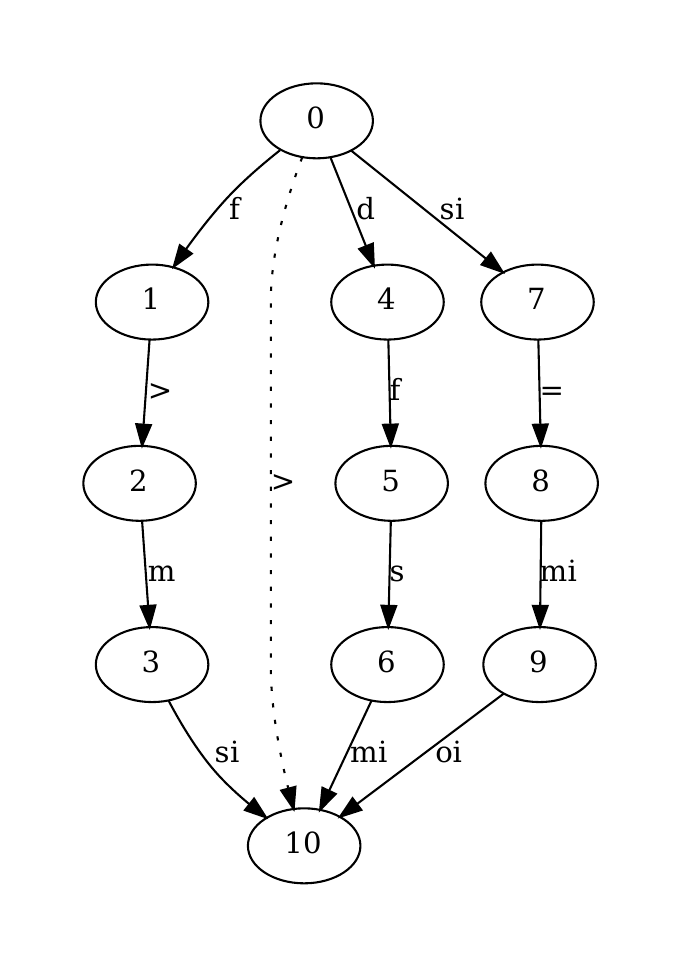}
  \caption{$k=4, b=3$}
\end{subfigure}
\begin{subfigure}{0.25\textwidth}
\hspace{-6ex}
\rightline{\includegraphics[width=1.1\linewidth]{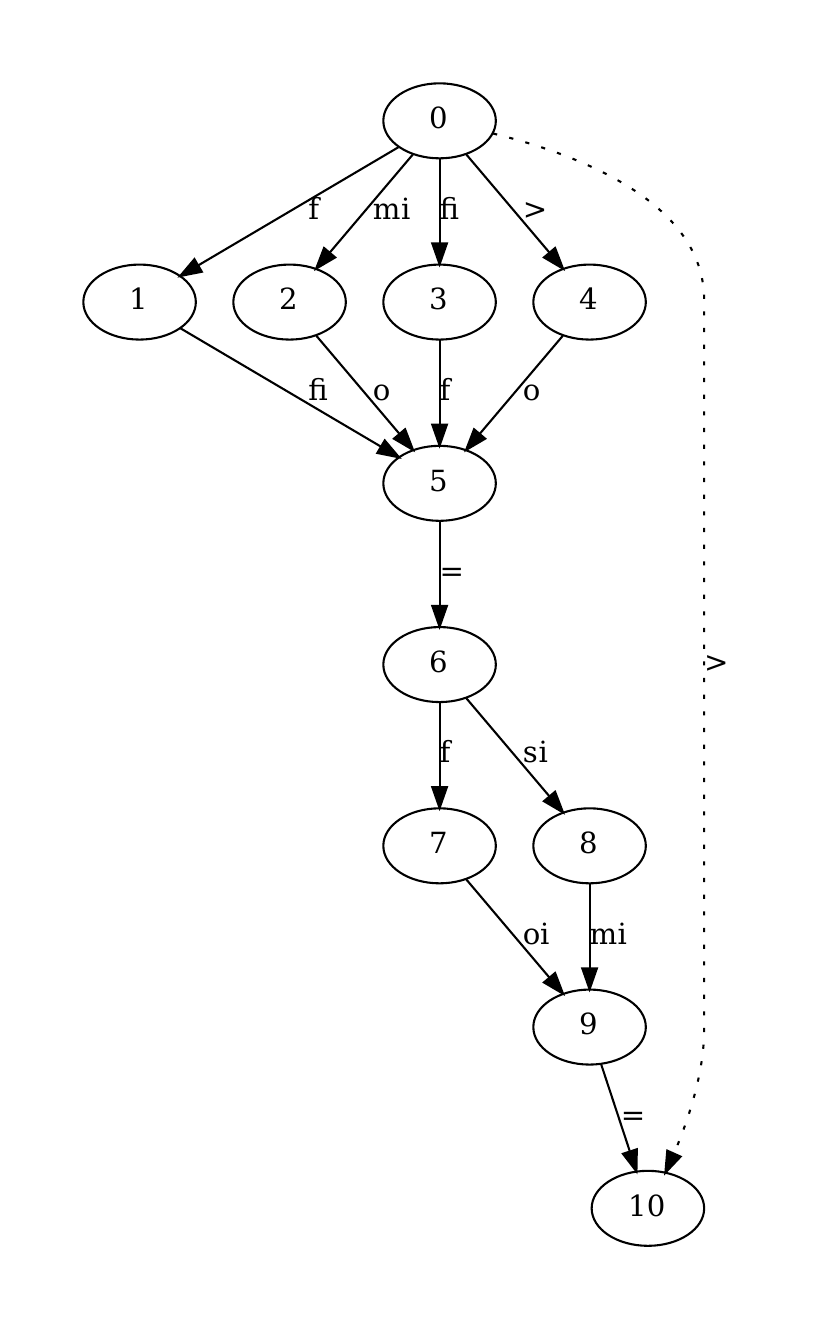}}
  \caption{$k=6, b=1$}
\end{subfigure}\hfil 
\begin{subfigure}{0.25\textwidth}
\hspace{-4ex}
  \centerline{\includegraphics[width=2.7\linewidth]{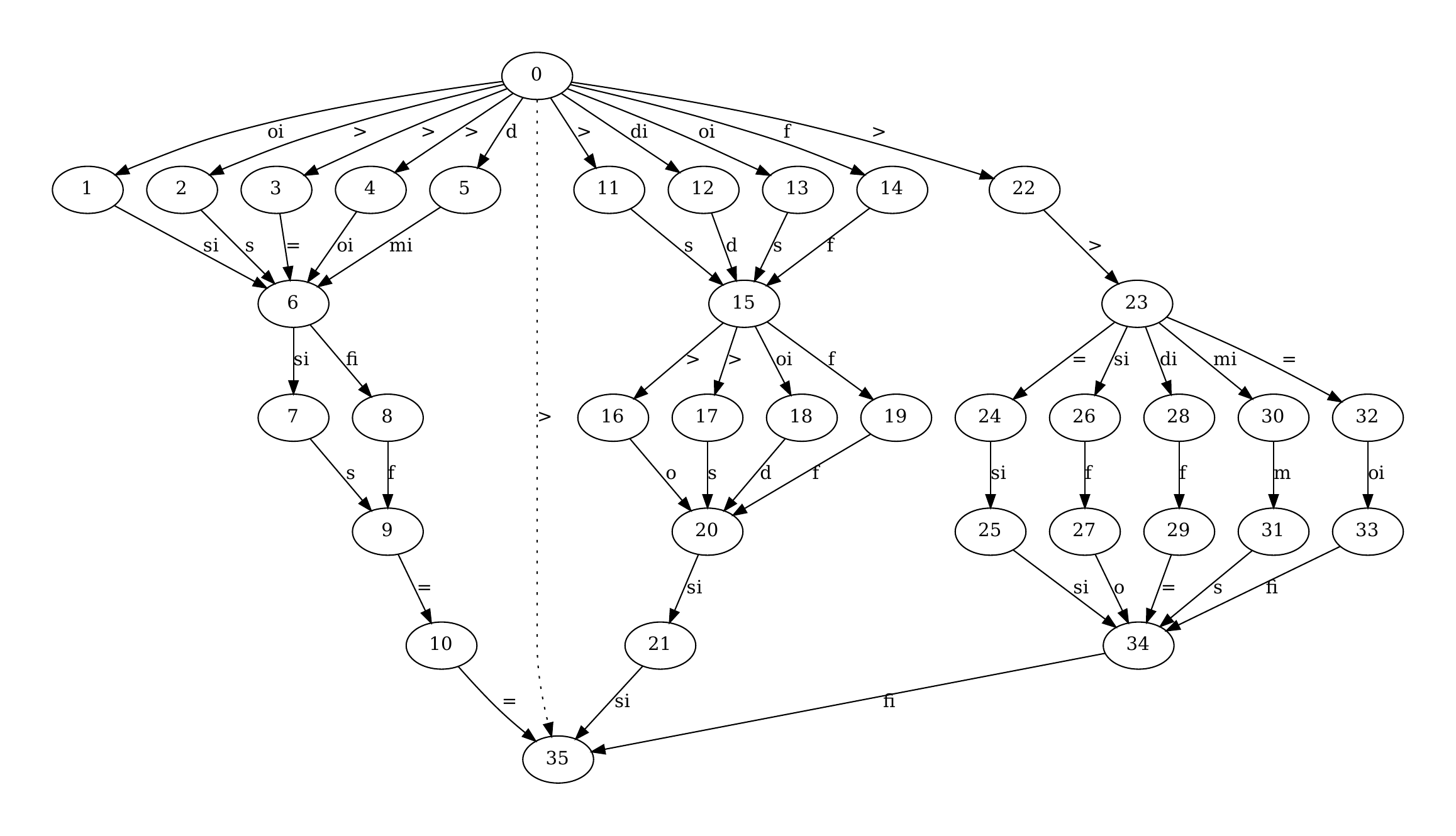}}
  \caption{$k=6, b=3$}
\end{subfigure}
\caption{Graph instances for the interval dataset generated using the procedure described in~\ref{ssec:datagen}. We highlight the rich variation in the topology of the graph instance for path length $k=6$ and number of paths $b=3$ in (h) by contrasting it with a similar graph instance for the RCC-8 dataset shown in Figure~\ref{fig:example-topo-rcc8}(h). The target edge label between the source node and the tail node that needs to be predicted by the model is indicated by the dotted line.}
\label{fig:example-topo-interval}
\end{figure}

\subsection{Ensuring path consistency within the dataset}
We ensure that all relational paths in a problem instance in the generated dataset do not informationally conflict with each other by using the \texttt{DPC+} algorithm \citep{long-dpc-plus}. It efficiently computes directional path consistency, i.e. $X_{ij} \subseteq X_{ik} \diamond X_{kj} \forall i,j \leq k$, for qualitative constraint networks that we can transform our graph instances to. \steven{Note that this takes advantages of the fact that directional} path consistency is sufficient as a test for global path consistency for networks with singleton edge labels \citep{directional-sufficient-path-consistency}.   

\section{Additional experimental details}\label{secAdditionalExperimentalDetails}
We now present further details of the experimental set-up, including details of the loss function that was used for training the model, the considered benchmarks and baselines, and training details such as hyperparameter optimisation.

\subsection{Loss function}

\paragraph{Forward model}
\steven{We first consider the base setting, where a single forward model is used. Let a set of training instances of the form $(\mathcal{F}_i,h_i,t_i,r_i)$ be given, where we}
write $\mathbf{t}_i=(t_{i,1},\ldots,t_{i,n})$ for the final-layer embedding of entity $t_i$ in the graph \steven{associated with} $\mathcal{F}_i$. \steven{Let} $\mathbf{r}=(r_1,\ldots,r_n)$ denote the embedding of relation $r$. We write:
\begin{align}\label{eq:cross-entropy-score}
\text{CE}(\mathbf{t}_i,\mathbf{r}) = -\sum_{j=1}^n r_j \log t_{i,j}
\end{align}
\steven{Since $r_i$ represents the correct label for training instance $i$, we clearly} want $\text{CE}(\mathbf{t}_i,\mathbf{r_i})$ to be low, while for each negative example $r'\in \mathcal{R}\setminus \{r_i\}$ we want $\text{CE}(\mathbf{t}_i,\mathbf{r'})$ to be high. We implement this with a margin loss, where for each $i$ we let $r_i'\in \mathcal{R}\setminus \{r_i\}$ be a corresponding negative example:
\begin{align}\label{eqLossForwardModelSingleFacet}
\mathcal{L} = \sum_i  \max(0, \text{CE}(\mathbf{t}_i,\mathbf{r_i}) - \text{CE}(\mathbf{t}_i,\mathbf{r_i'})  + \Delta)
\end{align}
where the margin $\Delta>0$ is a hyperparameter.

\paragraph{Full model}
In general, we use $m$ different models, each intuitively capturing a different aspect of the relations. \steven{Furthermore, instead of the tail node embedding $\mathbf{t_i}$, we use the prediction obtained by the forward-backward model, computed as in \eqref{eqBackwardModel}. Let us write $\mathbf{x_{ij}}$ to denote the prediction that is obtained by the $j\textsuperscript{th}$ model for training example $i$, and let $\mathbf{r_{ij}}$ denote the embedding of relation $r_i$ in the $j\textsuperscript{th}$ model. We write $\mathbf{r_{ij}'}$ to denote some negative example, i.e.\ $\mathbf{r_{ij}'}=\mathbf{r_{pj}}$ for some $r_p\in\mathcal{R}\setminus\{r_i\}$.
The overall loss function becomes:
\begin{align}\label{eqLossForwardModelMultiFacet}
\mathcal{L} = \sum_i  \max\Big(0, \Big(\sum_{j=1}^m \text{CE}(\mathbf{x_{ij}},\mathbf{r_{ij}}) - \text{CE}(\mathbf{x_{ij}},\mathbf{r_{ij}'})\Big) + \Delta\Big)
\end{align}}

\subsection{Inference}
\paragraph{Relation classification} For the \steven{relation classification} datasets, \steven{where we need to answer queries of  type $(h,?,t)$, at test time, we predict the target relation for which the cross-entropy with the predicted embedding is minimal. More precisely, let us write $\mathbf{x_{j}}$ for the embedding predicted by the $j\textsuperscript{th}$ model, computed as in \eqref{eqBackwardModel}. Let $\mathbf{r_{j}}$ be the embedding of relation $r$ in the $j\textsuperscript{th}$ model. We predict the relation $\hat{r}$ defined as follows }
\begin{align}\label{eq:relation-predict-argmax}
\steven{\hat{r} = \argmax_{r \in \mathcal{R}} \sum_{j=1}^m \text{CE}(\mathbf{x_j}, \mathbf{r_j})}   
\end{align}

\paragraph{Link prediction} For \steven{link prediction}  datasets, \steven{where we need to answer queries of} type $(h,r,?)$, \steven{we use the forward-only model, as we need to efficiently compute a score for all the entities in the knowledge graph. This is possible with one pass of the forward model, whereas with the full (forward-backward) model we would have to do one pass for each candidate entity. Let $\mathbf{t_j}$ be the final-layer embedding of entity $t$ in the $j\textsuperscript{th}$ model. Let us again write $\mathbf{r_{j}}$ for the embedding of relation $r$ in the $j\textsuperscript{th}$ model. The score of entity $t$ is then given by:
\begin{align}
\textit{score}(t) = \sum_{j=1}^m \text{CE}(\mathbf{t_j}, \mathbf{r_j})   
\end{align}
Finally, to answer the link prediction query, we rank the set of all candidate entities based on this score.}


\subsection{Benchmarks}

\begin{table}[t]
\scriptsize
\caption{Data statistics for different versions of the CLUTRR dataset, with varying training regimes and different numbers training and testing graphs.}
\label{table:clutrr-stats}
\begin{center}
\begin{tabular}{cccccc}
\toprule
\textbf{Training regime} & \textbf{Unique Hash} &\textbf{No. of relations} &\textbf{\# Train} &\textbf{\# Test} & \textbf{Test regime}\\
\midrule
$k\in \{2,3\}$ & $\texttt{data\_089907f8}$ & 22 &10,094& 900 & $k\in \{4,\dots,10\}$\\
$k\in \{2,3\}$ & $\texttt{data\_9b2173cf}$ & 22 &35,394& 39825 & $k\in \{4,\dots,10\}$\\
$k\in \{2,3,4\}$ & $\texttt{data\_db9b8f04}$ & 22&15,083& 823 & $k\in \{5,\dots,10\}$\\
\bottomrule
\end{tabular}
\end{center}
\end{table}

\begin{table}[t]
\scriptsize
\caption{Data statistics of the Spatio-temporal reasoning datasets}
\label{table:rcc8-stats}
\begin{center}
\begin{tabular}{lccccc}
\toprule
\textbf{Dataset} & \textbf{Training regime} &\textbf{No. of relations} &\textbf{\# Train} &\textbf{\# Test} & \textbf{Test regime}\\
\midrule
RCC-8 & $b\in\{1,2,3\}, k\in\{2,3\}$ & 8 & 57,600 & 153,600 & $b\in\{1,2,3\}, k\in\{2,\dots,9\}$ \\
IA & $b\in\{1,2,3\}, k\in\{2,3\}$ & 13 & 57,600 & 153,600 & $b\in\{1,2,3\}, k\in\{2,\dots,9\}$ \\
\bottomrule
\end{tabular}
\end{center}
\end{table}

\begin{table}[t]
\caption{Data statistics for the `hard' Graphlog worlds, showing for each world the number of classes (NC), the number of distinct resolution sequences (ND), the average resolution length (ARL), the average number of nodes (AN), the average number of edges (AE), and the number of training and testing graphs.}
\label{tab:graphlog-stats}
\begin{center}
\scriptsize
\begin{tabular}{lccccccc}
\toprule
\textbf{World ID} & \textbf{NC} & \textbf{ND} & \textbf{ARL} & \textbf{AN} & \textbf{AE} & \textbf{\# Train} & \textbf{\#Test}\\
\midrule
World 6 &16& 249&5.06&16.3&20.2&5000&1000\\
World 7 &17&288&4.47&13.2&16.3&5000&1000\\
World 8 &15&404&5.43&16.0&19.1&5000&1000\\
World 11 &17&194&4.29&11.5&13.0&5000&1000\\
World 32 &16&287&4.66&16.3&20.9&5000&1000\\
\bottomrule
\end{tabular}
\end{center}
\end{table}

\textbf{CLUTRR}\footnote{\url{https://github.com/facebookresearch/clutrr}} \citep{clutrr} is a dataset which involves reasoning about family relationships. The original version of the dataset involved narratives describing the fact graph in natural language. It was, among others, aimed at testing the ability of language models such as BERT \citep{DBLP:conf/naacl/DevlinCLT19} to solve such reasoning tasks. However, the original paper also considered a number of baselines which were given access to the fact graph itself, especially GNNs and sequence classification models. A crucial finding was that such models fail to learn to reason in a systematic way: models trained on short inference chains perform poorly when tested on examples involving longer inference chains. This has inspired a line of work which has introduced a number of neuro-symbolic methods for addressing this issue. The CLUTRR dataset was released under a CC-BY-NC 4.0 license. 

\textbf{Graphlog}\footnote{\url{https://github.com/facebookresearch/graphlog}} \citep{graphlog} involves examples for 57 different \emph{worlds}, where each world is characterised by a set of logical rules. For each world, a number of corresponding knowledge graphs are provided, which the model can use to learn the underlying rules. The model is then tested on previously unseen knowledge graphs for the same world. The aim of this benchmark is to test the ability of models to systematically generalise from the reasoning patterns that have been observed during training, i.e.\ to apply the rules that have been learned from the training data in novel ways. This dataset is released under a CC-BY-NC 4.0 license. 

\textbf{RCC-8} \steven{and \textbf{IA} are the benchmarks} that we introduce in this paper, as described in Section \ref{secRCC8}. We release \steven{these benchmarks} under a CC-BY 4.0 license.

\textbf{Inductive Knowledge Graph Completion}\footnote{\url{https://github.com/kkteru/grail}} \citep{grail} \steven{is focused on link prediction queries of the form $(h,r,?)$ which are evaluated against a given knowledge graph. Different from the more commonly used \emph{transductive} setting, in the case of \emph{inductive} knowledge graph completion, the training and test knowledge graphs are disjoint. \citet{grail} proposed a number of benchmarks for this inductive setting by sampling disjoint training and test graphs from standard knowledge graph completion datasets. In this way, they obtained four different benchmarks from FB15k-237 and four benchmarks from WN18RR.}

\begin{table}[!t]
    \centering
    \scriptsize
    \caption{Dataset statistics for inductive knowledge graph completion. Queries and facts are $(h,r,t)$ triplets and are used as \emph{labels} and \emph{inputs} respectively. The goal is to predict the query targets $t$ once trained on fact triplets. Note that for the training sets, queries are treated as facts i.e. training data.}
    \label{table:kgc-stats}
        \begin{tabular}{llcccccccccc}
            \toprule
            \multirow{2}{*}{\bf{Dataset}} & & \multirow{2}{*}{\bf{\#Relation}} & \multicolumn{3}{c}{\bf{Train}} & \multicolumn{3}{c}{\bf{Validation}} & \multicolumn{3}{c}{\bf{Test}} \\
            & & & \#Entity & {\#Query} & {\#Fact} & {\#Entity} & {\#Query} & {\#Fact} & {\#Entity} & {\#Query} & {\#Fact} \\
            \midrule
            \multirow{4}{*}{FB15k-237}
            & v1 & 180 & 1594 & 4245 & 4245 & 1594 & 489 & 4245 & 1093 & 205 & 1993\\
            & v2 & 200 & 2608 & 9739 & 9739 & 2608 & 1166 & 9739 & 1660 & 478 & 4145 \\
            & v3 & 215 & 3668 & 17986 & 17986 & 3668 & 2194 & 17986 & 2501 & 865 & 7406 \\
            & v4 & 219 & 4707 & 27203 & 27203 & 4707 & 3352 & 27203 & 3051 & 1424 & 11714 \\
            \midrule
            \multirow{4}{*}{WN18RR}
            & v1 & 9 & 2746 & 5410 & 5410 & 2746 & 630 & 5410 & 922 & 188 & 1618 \\
            & v2 & 10 & 6954 & 15262 & 15262 & 6954 & 1838 & 15262 & 2757 & 441 & 4011\\
            & v3 & 11 & 12078 & 25901 & 25901 & 12078 & 3097 & 25901 & 5084 & 605 & 6327\\
            & v4 & 9 & 3861 & 7940 & 7940 & 3861 & 934 & 7940 & 7084 & 1429 & 12334 \\
            \bottomrule
        \end{tabular}
\end{table}

Dataset statistics for CLUTRR, Graphlog, RCC-8 and IA, and the Inductive KGC benchmarks are reported in tables \ref{table:clutrr-stats}, \ref{tab:graphlog-stats}, \ref{table:rcc8-stats}, \ref{table:kgc-stats}. We use a standard 80-20 split for training and validation for CLUTRR and RCC-8. For Graphlog, we use the validation set that is provided separately from the test set.

\subsection{Baselines} 
We compare our method against the following neuro-symbolic methods:
\begin{description}
\item[CTP] Conditional Theorem Provers \citep{DBLP:conf/icml/Minervini0SGR20} are a more efficient version of Neural Theorem Provers (NTPs \citep{DBLP:conf/nips/Rocktaschel017}). Like NTPs, they learn a differentiable logic program, but rather than exhaustively considering all derivations, at each step of a proof, CTPs learn a filter function that selects the most promising rules to apply, thereby speeding up backwards-chaining procedure of NTP. Three variants of this model were proposed, which differ in how this selection step is done, i.e.\ using a linear mapping (CTP\textsubscript{L}), using an attention mechanism (CTP\textsubscript{A}), and using a method inspired by key-value memory networks \citep{DBLP:conf/emnlp/MillerFDKBW16} (CTP\textsubscript{M}). We were not able to reproduce the results from the original paper, hence we report the results from \citep{DBLP:conf/icml/Minervini0SGR20} for the CLUTRR benchmark.
\item[GNTP] Greedy NTPs \citep{DBLP:conf/aaai/MinerviniBR0G20} are another approximation of NTPs, which select the top-$k$ best matches during each inference step.
\item[R5] This model \citep{r5} learns symbolic rules of the form $r(X,Z)\leftarrow r_1(X,Y)\wedge r_2(Y,Z)$, with the possibility of using invented predicates in the head. To make a prediction, the method then samples (or enumerates) simple paths between the head and tail entities and iteratively applies the learned rules to reduce these paths to a single relation. The order in which relations are composed is determined by Monte Carlo Tree Search.
\item[NCRL] Neural Compositional Rule Learning \citep{DBLP:conf/iclr/ChengAS23} also samples relational paths between the head and tail entities, and iteratively reduces them by composing 2 relations at a time, similar to R5. However, in this case, the choice of the two relations to compose in each step are determined by a Recurrent Neural Network. Moreover, rather than learning symbolic rules, the rules are learned implicitly by using an attention mechanism to compose relations. Both R5 and NCRL implicitly make the assumption that the relational reasoning problem is about predicting the target relation from a single relational path, and that this prediction can be done by repeatedly applying Horn rules. 
\end{description}
The following transformer \citep{transformer} variant is also a natural baseline:
\begin{description}
\item[ET] Edge Transformers \citep{edge-transformer} modify the transformer architecture by using an attention mechanism that is designed to simulate relational composition. In particular, the embeddings are interpreted as representations of edges in a graph. To update the representation of an edge $(a,c)$ the model selects pairs of edges $(a,x), (x,b)$ and composes their embeddings. These compositions are aggregated using an attention mechanism, similar as in the standard transformer architecture.
\end{description}
We also compare against several GNN models: 
\begin{description}
\item[GCN] Graph Convolutional Networks \citep{DBLP:conf/iclr/KipfW17} are a standard graph neural network architecture, which use sum pooling and rely on a linear layer followed by a non-linearity such as ReLU or sigmoid to compute messages. While standard GCNs do not take into account edge types, for the experiments we concatenate edge types to node embeddings during message passing, following \citep{clutrr}. GCNs learn node embeddings and can thus not directly be used for relation classification. To make the final prediction, we combine the final-layer embeddings of the head and tail entities with an encoding of the target relation, and make the final prediction using a softmax classification layer.
\item[R-GCN] Relational GCNs \citep{rgcn} are a variant of GCNs in which messages are computed using a relation-specific linear transformation. This is similar in spirit to how we compute messages in our framework, but without the inductive bias that comes from treating embeddings as probability distributions over primitive relation types.
\item[GAT] Graph Attention Networks \citep{gat} are a variant of GCNs, which use a pooling mechanism based on attention. Similar as for GCNs, we concatenate the edge types to node embeddings to take into account the edge types.
\item[E-GAT] Edge-based Graph Attention Networks \citep{graphlog} are a variant of GATs which take edge types into account. In particular, an LSTM module is used to combine the embedding of a neighboring node with an embedding of the edge type. The resulting vectors are then aggregated as in the GAT architecture.
\item[NBFNet] Neural Bellman-Ford Networks \citep{DBLP:conf/nips/ZhuZXT21} model the relationship between a designated head entity and the other entities from a given graph. Our model employs essentially the same strategy to use GNNs for relation classification, which is to learn entity embeddings that capture the relationship with the head entity rather than the entities themselves. The main difference between NBFnet and our model comes from the additional inductive bias that our model is adding. 
\end{description}
In \citep{DBLP:conf/icml/Minervini0SGR20}, a number of sequence classifiers were also used as baselines, and we also report these results. These methods sample a path between the head and the tail, encode the path using a recurrent neural network, and then make a prediction with a softmax classification layer. We report results for three types of architectures: vanilla \textbf{RNNs}, \textbf{LSTMs} \citep{lstm} and \textbf{GRUs} \citep{DBLP:conf/emnlp/ChoMGBBSB14}.
\subsection{Training details}

\subsubsection{Initialization and Compute}
The relation vectors $\mathbf{r}$ and the vectors $\mathbf{a}_{ij}$ defining the composition function are uniformly initialized. All baseline results that were obtained by us were hyperparameter-tuned using grid search, as detailed below. Some baseline results were obtained from their corresponding papers and reported verbatim (as indicated in the results tables). All experiments were conducted using \steven{RTX 4090 and V100 NVIDIA GPUs}. A single experiment using the GNN based methods in the paper can be conducted within 30 minutes to an hour on a \steven{single GPU}. This includes training and testing a single model on any benchmark of the following relation prediction benchmarks: CLUTRR, Graphlog, RCC-8, \steven{IA} (also see \steven{Figure}~\ref{fig:time-complexity-spatiotemporal} for train/test times on the spatiotemporal datasets). 
A single hyperparameter set evaluation would take the same time as an individual experiment. 
For the inductive knowledge graph completion setting, training and inference times for a single run are reported in \steven{Figure}~\ref{fig:time-complexity-kgc}. 

\subsubsection{Hyperparameter settings}
We use the Adam optimizer~\citep{kingma2017adam}. The number of layers of the \steven{EpiGNN} model is fixed to 9 and the number of negative examples per instance is fixed as 1. The other hyperparameters of the \steven{EpiGNN} model are tuned using grid search. The optimal values that were obtained are mentioned in Table~\ref{hyperparameter}.
\begin{table*}[!h]
\centering
\scriptsize
\caption{Optimal hyperparameters of the full (forward-backward) model on all benchmarks.}\label{hyperparameter}
\begin{tabular}{lcccccc} 
  \toprule 
  & \shortstack{\textbf{Batch} \\ \textbf{size}} &  \shortstack{\textbf{Embedding} \\ \textbf{dim}} & \textbf{Epochs} & \textbf{Facets} & \shortstack{\textbf{Learning} \\ \textbf{rate}} & \textbf{Margin}   \\ 
  \midrule
CLUTRR & 128 & 64 & 100 & 8 & 0.01 & 1.0  \\  
Graphlog & 64  & 64 & 150 & 1 & 0.01 & 1.0  \\
RCC-8     & 128  & 32 & 40 & 4 & 0.01 & 1.0 \\
IA     & 128  & 64 & 40 & 8 & 0.01 & 1.0 \\
  \bottomrule 
\end{tabular}
\end{table*}
For inductive knowledge graph completion, the classification task requires predicting tail entities so we cannot use backward flow in our model. The optimal hyperparameters for the forward only version of the EpiGNN for this setting are summarized in Table~\ref{hyperparameter-kgc}. For this setting, we use 6 message passing rounds similarly with NBFNet \cite{DBLP:conf/nips/ZhuZXT21}.
\begin{table*}[!h]
\centering
\scriptsize
\caption{Optimal hyperparameters of the forward only EpiGNN model on inductive knowledge graph completion benchmarks.}\label{hyperparameter-kgc}
\begin{tabular}{lcccccc} 
  \toprule 
  & \shortstack{\textbf{Batch} \\ \textbf{size}} &  \shortstack{\textbf{Embedding} \\ \textbf{dim}} & \textbf{Epochs} & \textbf{Facets} & \shortstack{\textbf{Learning} \\ \textbf{rate}} & \textbf{Margin}   \\ 
  \midrule
FB15k-237 v1 & 64 & 64 & 15 & 16 & 0.01 & 0.2  \\  
FB15k-237 v2 & 64 & 64 & 15 & 16 & 0.01 & 0.1  \\  
FB15k-237 v3 & 64 & 64 & 15 & 16 & 0.01 & 0.1  \\  
FB15k-237 v4 & 64 & 64 & 15 & 16 & 0.01 & 0.1  \\  
WN18RR v1 & 64 & 16 & 15 & 4 & 0.01 & 0.5  \\  
WN18RR v2 & 64 & 16 & 15 & 4 & 0.01 & 0.4  \\  
WN18RR v3 & 64 & 64 & 15 & 4 & 0.01 & 1.1  \\  
WN18RR v4 & 64 & 16 & 15 & 4 & 0.01 & 0.7  \\  
  \bottomrule 
\end{tabular}
\end{table*}

We conduct the following hyperparameter sweeps: learning rate in $\{0.00001, 0.001, 0.01, 0.1\}$, batch size in $\{16,32,64,128\}$, number of facets $m$ in $\{1,2,4,8,16,32\}$ and embedding dimension size in $\{8, 16, 32, 64, 128, 256\}$. We also tune the margin $\Delta$ in the loss function over $\{10, 1.1, 1.0, 0.9, \ldots, 0.1, 0.01\}$. All model parameters are shared across the different message passing layers of our model. 

The choice of the pooling operator has an important impact on the systematic generalization abilities of the model. In our experiments, we found that the pooling operator has to be specified as part of the inductive bias and cannot be learned from the training or validation data, which is in accordance with findings from the literature on systematic generalization \citep{xu2021how, bahdanu2019}.



\section{Additional analysis}\label{ssec:clutrr2}

\subsection{Additional CLUTRR results}
In the main paper, we presented the results for the standard CLUTRR benchmark, where problems of size $k\in\{2,3,4\}$ are used for training. In the literature, models are sometimes also evaluated on an even harder setting, where only problems of size $k\in\{2,3\}$ are available for training. We show the results for this setting in Table \ref{clutrr-k-2}. As can be seen, our model clearly outperforms both Edge Transformers (ET) and the GNN and RNN baselines. In this more challenging setting, the difference in performance between our model and ET is much more pronounced. However, R5, as the best-performing neuro-symbolic method, consistently outperforms our method in this case. We hypothesise that this is largely due to the inevitably small size of the training set (as the number of distinct paths of length 3 is necessarily limited). Rule learners can still perform well in such cases, which is something that R5 is able to exploit. To achieve similar results with our model, a stronger inductive bias would have to be imposed. One possibility would be to impose a sparsity prior on the relation embeddings $\mathbf{r}$ and the vectors $\mathbf{a}_{ij}$ defining the composition function. We leave a detailed investigation of this possibility for future work. 

In the literature, two different variants of the dataset have been used: \texttt{db\_9b2173cf} and \texttt{data\_089907f8}. 
In Table~\ref{clutrr-k-2}, we use the CLUTRR dataset \texttt{db\_9b2173cf}, which was introduced in the ET paper \citep{edge-transformer}, to evaluate our model as well as the baselines that were evaluated by us. The reported baseline results that were obtained from \citep{DBLP:conf/icml/Minervini0SGR20} and \citep{r5} are based on the smaller \texttt{data\_089907f8} variant, and are thus not directly comparable. 
\begin{table}[!h]
\centering
\scriptsize
\caption{Results on CLUTRR (accuracy) after training on problems with  $k\in\{ 2, 3 \}$ and then evaluating on problems with $k\in\{ 4,\ldots, 10 \}$. The best performance for each $k$ is highlighted in \textbf{bold}. Results marked with $^*$ were taken from \citep{DBLP:conf/icml/Minervini0SGR20} and those with $^\dagger$ from \citep{r5}. The results from \citep{DBLP:conf/icml/Minervini0SGR20} and \citep{r5} were evaluated on a different variant of the dataset and may thus not be directly comparable.
\label{clutrr-k-2}} 
\begin{tabular}{lccccccc}
        \toprule 
         & \textbf{4~Hops} & \textbf{5~Hops} & \textbf{6~Hops} & \textbf{7~Hops} & \textbf{8~Hops} & \textbf{9~Hops} & \textbf{10~Hops} \\
        \midrule
        EpiGNN-\texttt{mul} (ours) &0.96$\pm$.02 &0.96$\pm$.03 &0.94$\pm$.05 &0.92$\pm$.07 &0.90$\pm$.10 &0.88$\pm$.11 &0.85$\pm$.13\\
        EpiGNN-\texttt{min} (ours) &0.96$\pm$.02 &0.95$\pm$.05 &0.91$\pm$.08 &0.87$\pm$.11 &0.82$\pm$.13 &0.79$\pm$.14 &0.74$\pm$.15\\
        \midrule
        R5$^\dagger$ &0.98$\pm$.02 & \textbf{0.99$\pm$.02} &0.98$\pm$.03 & \textbf{0.96$\pm$.05} & \textbf{0.97$\pm$.01} & \textbf{0.98$\pm$.03} & \textbf{0.97$\pm$.03}\\
        $\mbox{CTP}_L^*$ &0.98$\pm$.02 &0.98$\pm$.03 &0.97$\pm$.05 &0.96$\pm$.04 &0.94$\pm$.05 &0.89$\pm$.07 &0.89$\pm$.07\\
        $\mbox{CTP}_A^*$ & \textbf{0.99$\pm$.02} &0.99$\pm$.01 & \textbf{0.99$\pm$.02} &0.96$\pm$.04 &0.94$\pm$.05 &0.89$\pm$.08 &0.90$\pm$.07\\
        $\mbox{CTP}_M^*$ &0.97$\pm$.03 &0.97$\pm$.03 &0.96$\pm$.06 &0.95$\pm$.06 &0.93$\pm$.05 &0.90$\pm$.06 &0.89$\pm$.06\\        
        GNTP$^*$ &0.49$\pm$.18 &0.45$\pm$.21 &0.38$\pm$.23 &0.37$\pm$.21 &0.32$\pm$.20 &0.31$\pm$.19 &0.31$\pm$.22\\
        \midrule
        ET &0.90$\pm$.04 &0.84$\pm$.02 &0.78$\pm$.02 &0.69$\pm$.03 &0.63$\pm$.05 &0.58$\pm$.06 &0.55$\pm$.08 \\
\midrule
        GAT$^*$ &0.91$\pm$.02 &0.76$\pm$.06 &0.54$\pm$.03 &0.56$\pm$.04 &0.54$\pm$.03 &0.55$\pm$.05 &0.45$\pm$.06\\
        GCN$^*$ &0.84$\pm$.03 &0.68$\pm$.02 &0.53$\pm$.03 &0.47$\pm$.04 &0.42$\pm$.03 &0.45$\pm$.03 &0.39$\pm$.02 \\
        NBFNet &0.55$\pm$.08 &0.44$\pm$.07 &0.39$\pm$.07 &0.37$\pm$.06 &0.34$\pm$.04 &0.32$\pm$.05 &0.31$\pm$.05 \\
        R-GCN &0.80$\pm$.09 &0.63$\pm$.08 &0.52$\pm$.11 &0.46$\pm$.07 &0.41$\pm$.05 &0.39$\pm$.06 &0.38$\pm$.05 \\
        \midrule
        RNN$^*$ &0.86$\pm$.06 &0.76$\pm$.08 &0.67$\pm$.08 &0.66$\pm$.08&0.56$\pm$.10 &0.55$\pm$.10 &0.48$\pm$.07\\
        LSTM$^*$ &0.98$\pm$.04 &0.95$\pm$.03 &0.88$\pm$.05 &0.87$\pm$.04 &0.81$\pm$.07 &0.75$\pm$.10 &0.75$\pm$.09 \\
        GRU$^*$ &0.89$\pm$.05 &0.83$\pm$.06 &0.74$\pm$.12 &0.72$\pm$.09 &0.67$\pm$.12 &0.62$\pm$.10 &0.60$\pm$.12\\
        \bottomrule 
    \end{tabular}
\end{table}

\subsection{Extended ablation analysis}
In the main paper, we considered four separate ablations. Table~\ref{tab:extended-ablations} extends this analysis by showing results for all combinations of these ablations.
Facet ablation refers to the configurations where $m=1$; probability ablation refers to the configuration where embeddings are unconstrained; composition ablation refers to the configuration where distmult in combination with an MLP is used as the composition function $\psi$; and \steven{backward} ablation refers to the configuration where we only have the forward model. We can clearly see that the probability and composition function ablation cause a significantly stronger performance degradation compared to the facet and forward-backward ablation.   
\begin{table}[!h]
\centering
\scriptsize
\caption{Results for all combinations of the individual ablations from Table~\ref{table:ablations}.}
\begin{tabular}{ccccccccc}
\toprule
 \shortstack{\textbf{Facet} \\ \textbf{Ablation}} & \shortstack{\textbf{Probability} \\ \textbf{Ablation}} & \shortstack{\textbf{Composition} \\ \textbf{Ablation}} & \shortstack{\textbf{\steven{Backward}} \\ \textbf{Ablation}} & \shortstack{\textbf{CLUTRR} \\ \textbf{Avg}} & \shortstack{\textbf{CLUTRR} \\ $k=10$} & \shortstack{\textbf{RCC-8} \\ \textbf{Avg}} & \shortstack{\textbf{RCC-8} \\ $b=3, k=9$} \\
\midrule
True & True & True & True & 0.06 & 0.04 & 0.12 & 0.12 \\
True & True & True & False & 0.10 & 0.15 & 0.12 & 0.12 \\
True & True & False & True & 0.27 & 0.24 & 0.25 & 0.17 \\
True & True & False & False & 0.20 & 0.16 & 0.25 & 0.14 \\
True & False & True & True & 0.06 & 0.04 & 0.12 & 0.12 \\
True & False & True & False & 0.11 & 0.20 & 0.12 & 0.12 \\
True & False & False & True & 0.92 & 0.73 & 0.81 & 0.49 \\
True & False & False & False & 0.94 & 0.85 & 0.92 & 0.68 \\
False & True & True & True & 0.06 & 0.04 & 0.22 & 0.18 \\
False & True & True & False & 0.11 & 0.15 & 0.12 & 0.12 \\
False & True & False & True & 0.29 & 0.25 & 0.60 & 0.27 \\
False & True & False & False & 0.36 & 0.30 & 0.38 & 0.21 \\
False & False & True & True & 0.08 & 0.10 & 0.12 & 0.12 \\
False & False & True & False & 0.29 & 0.31 & 0.12 & 0.12 \\
False & False & False & True & 0.94 & 0.82 & 0.84 & 0.51 \\
False & False & False & False & 0.99 & 0.99 & 0.96 & 0.80 \\
\bottomrule
\end{tabular}
\label{tab:extended-ablations}
\end{table}

\subsection{Learned sparseness of relation vectors} 
We visualize the learned relation vectors for each benchmark studied in this paper in Figures~\ref{fig:learned-relation-vectors-rcc8},~\ref{fig:learned-relation-vectors-ia},~\ref{fig:learned-relation-vectors-clutrr} and~\ref{fig:learned-relation-vectors-graphlog}. It can be seen that the vectors are mostly one-hot, despite the fact that no explicit sparsity constraints were used in the model. Also, we note that different facets are capturing different parts of a relation and there is shared structure between different relations if the semantic meaning is similar e.g. contrast the vector for grandfather and grandmother or husband and wife in Figure~\ref{fig:learned-relation-vectors-clutrr}, and similarly, the vectors for $\intervalsi, \intervals$ share a structure each being the other's inverse in Figure~\ref{fig:learned-relation-vectors-ia}. 

Note that for the relation prediction task, there are no entities at each node but rather intermediate compositions of relations.

\begin{figure}[!h]
    \centering
    \centerline{\includegraphics[width=0.9\textwidth]{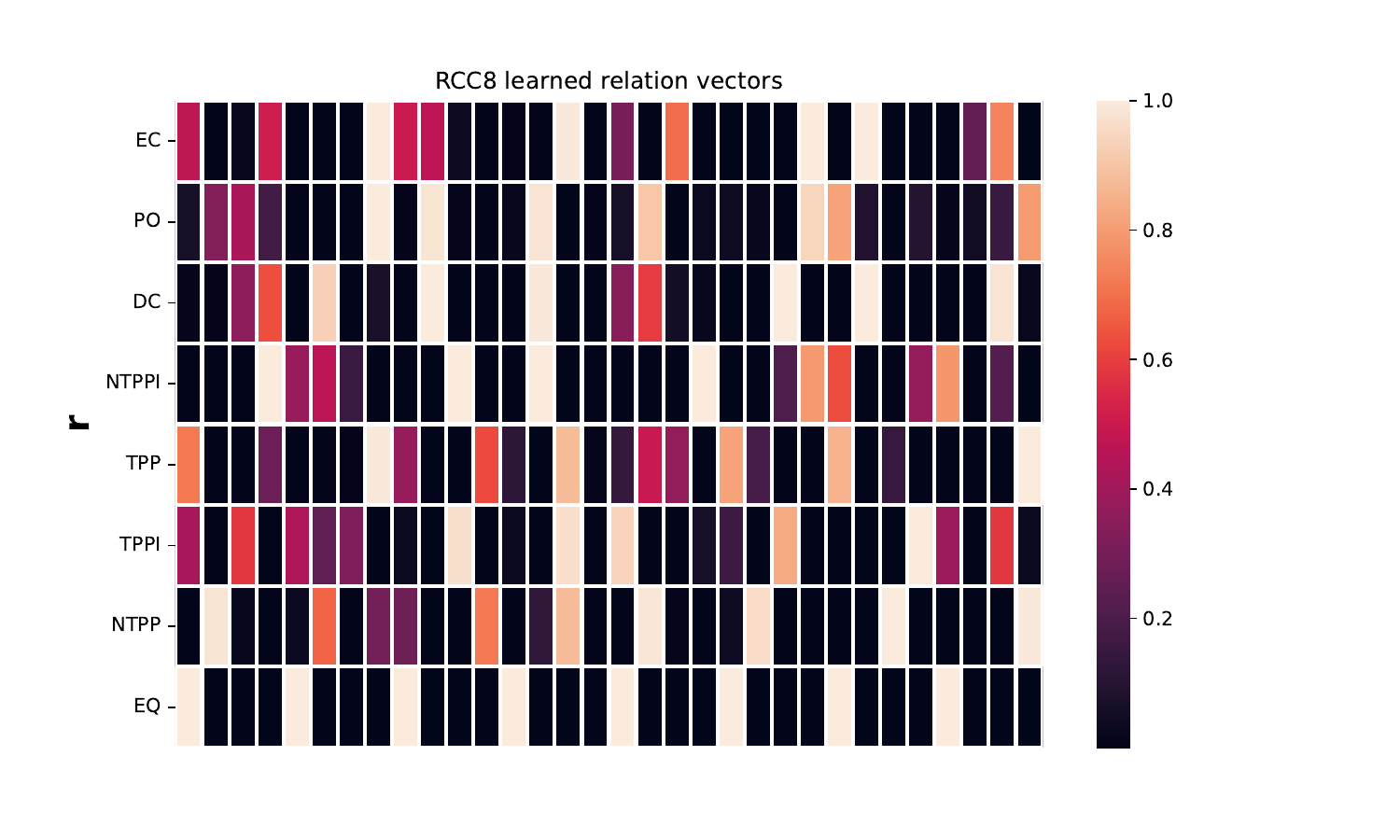}}
    \caption{Schematic visualisation of the learned relation vectors with 4 facets with a hidden dimension of 8 for the RCC-8 benchmark. Notice that the $\eq$ relation is learned to be one-hot at the first index for every facet as it corresponds to the identiy composition in Eq.~\eqref{eqForwardModel}. }
    \label{fig:learned-relation-vectors-rcc8}
\end{figure}
\begin{figure}[!h]
    \centering
    \centerline{\includegraphics[width=1.\textwidth]{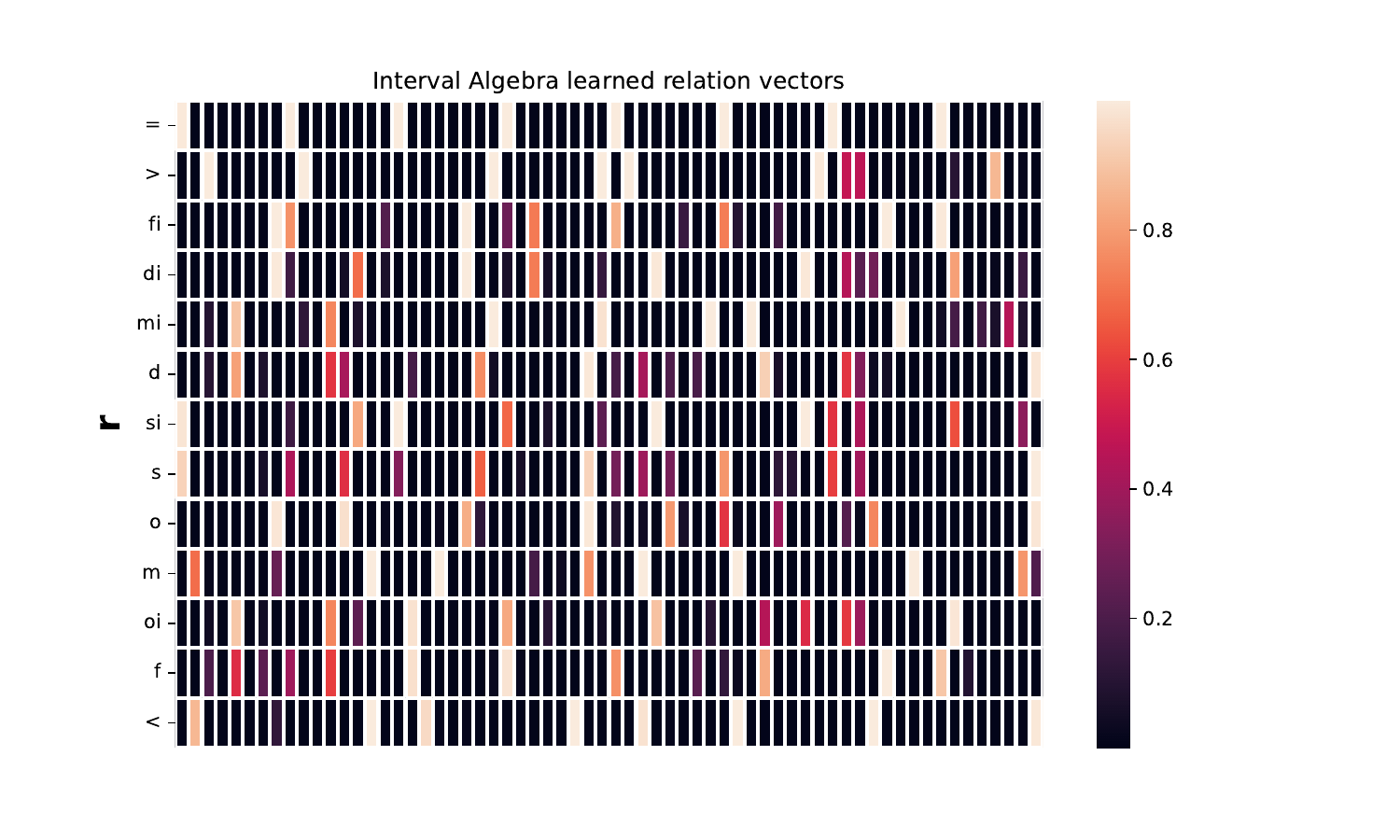}}
    \caption{Schematic visualisation of the learned relation vectors for the Interval Algebra benchmark with 8 facets, each with a hidden dimension of 8. Again, the learned identity relation representation $=$ corresponds to the identity composition.}
    \label{fig:learned-relation-vectors-ia}
\end{figure}

\begin{figure}[!h]
    \centering
    \centerline{\includegraphics[width=1.\textwidth]{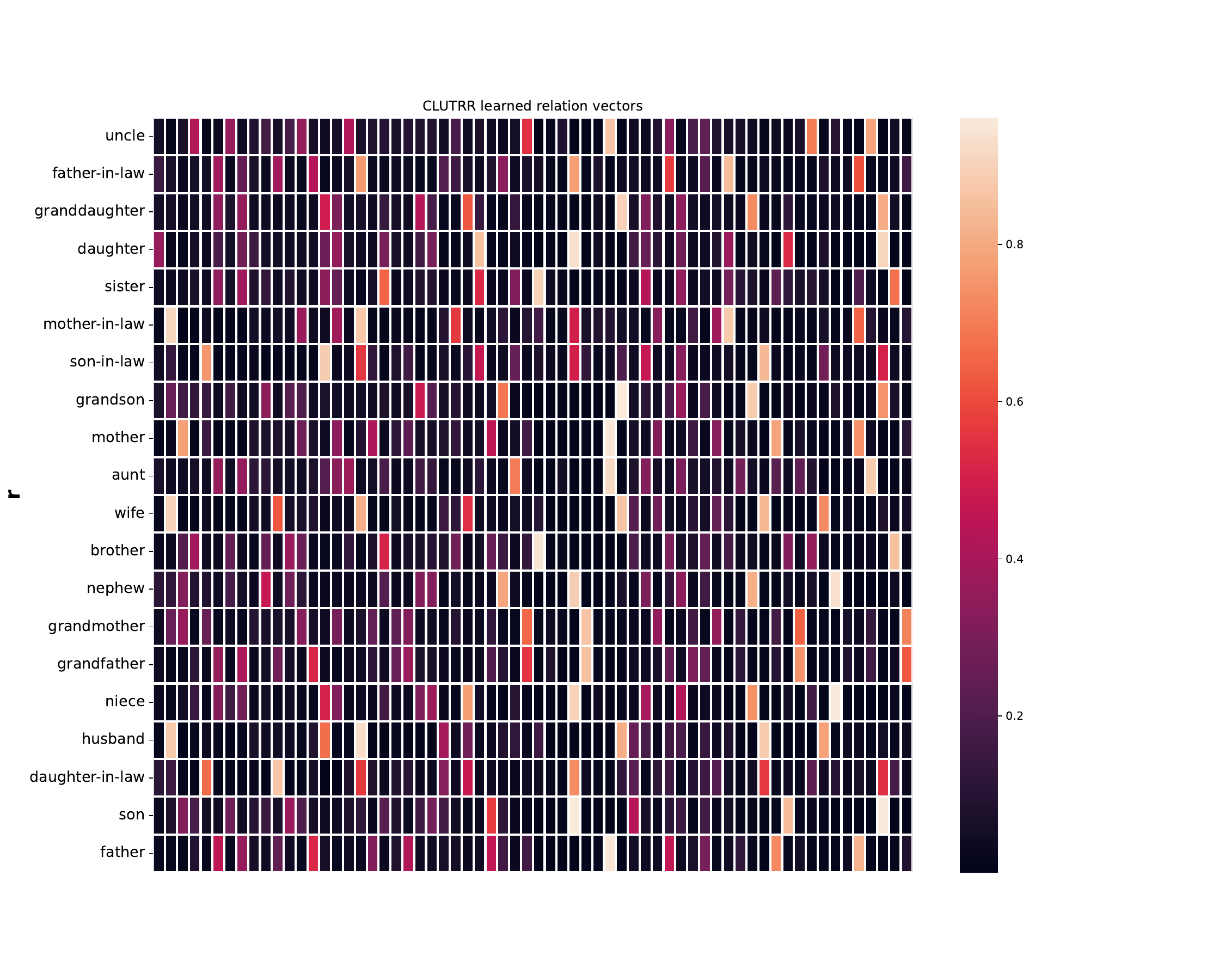}}
    \caption{Schematic visualisation of the learned relation vectors for the CLUTRR benchmark.}
    \label{fig:learned-relation-vectors-clutrr}
\end{figure}
\begin{figure}[!h]
    \centering
    \centerline{\includegraphics[width=1.\textwidth]{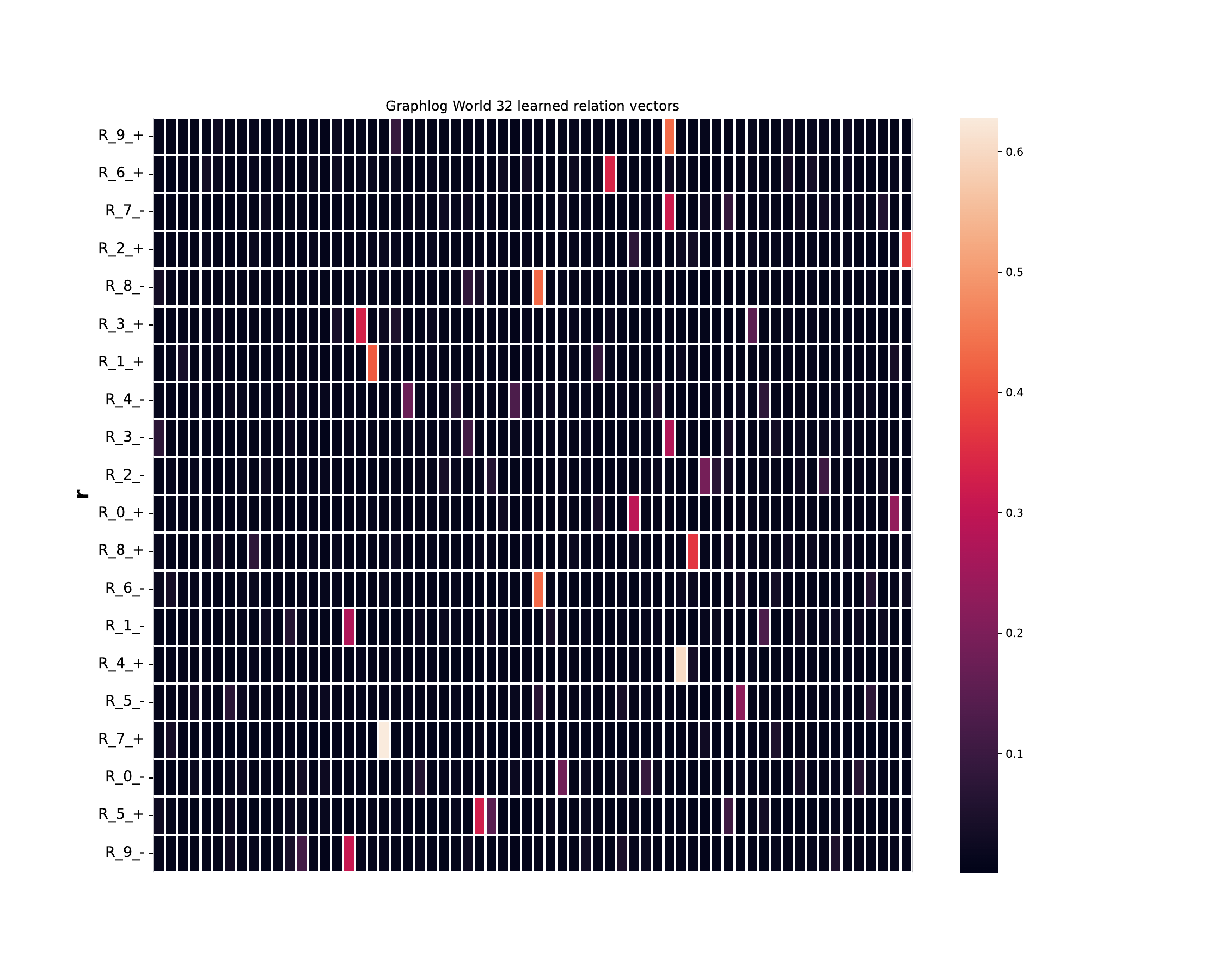}}
    \caption{Schematic visualisation of the learned relation vectors for the Graphlog benchmark.}
    \label{fig:learned-relation-vectors-graphlog}
\end{figure}

\subsection{Effect of the number of message passing rounds} 
We study the effect of varying the number of message passing rounds on the accuracy \steven{for varying values of $k$ and} $b=3$, for the EpiGNN-\texttt{min} and EpiGNN-\texttt{mul} models on the RCC-8 and \steven{IA} datasets. These models are trained \textit{ab initio} with the same training data and configuration as before but with a different number of message passing rounds (from 5 to 15) for each instance. The results are displayed in Figure~\ref{fig:message-passing-rounds-hyp-exp}. There are three pertinent observations that can be made. Firstly, the maximum attained $k$-hop accuracy decreases with $k$, which makes sense as it confirms that the problem complexity increases with $k$. Secondly, there is a jump in the $k$-hop accuracy when the number of message passing rounds matches $k$ \steven{after which the accuracy saturates}. This rightly suggests that the number of message passing rounds should at least be equal to the final $k$-hop in the dataset to ensure that all information propagates from \steven{head entity} to \steven{tail} within the model. Thirdly, these observations are shared across the dataset types and aggregation functions.        
\begin{figure*}
    \centering
    \centerline{\includegraphics[width=.8\textwidth]{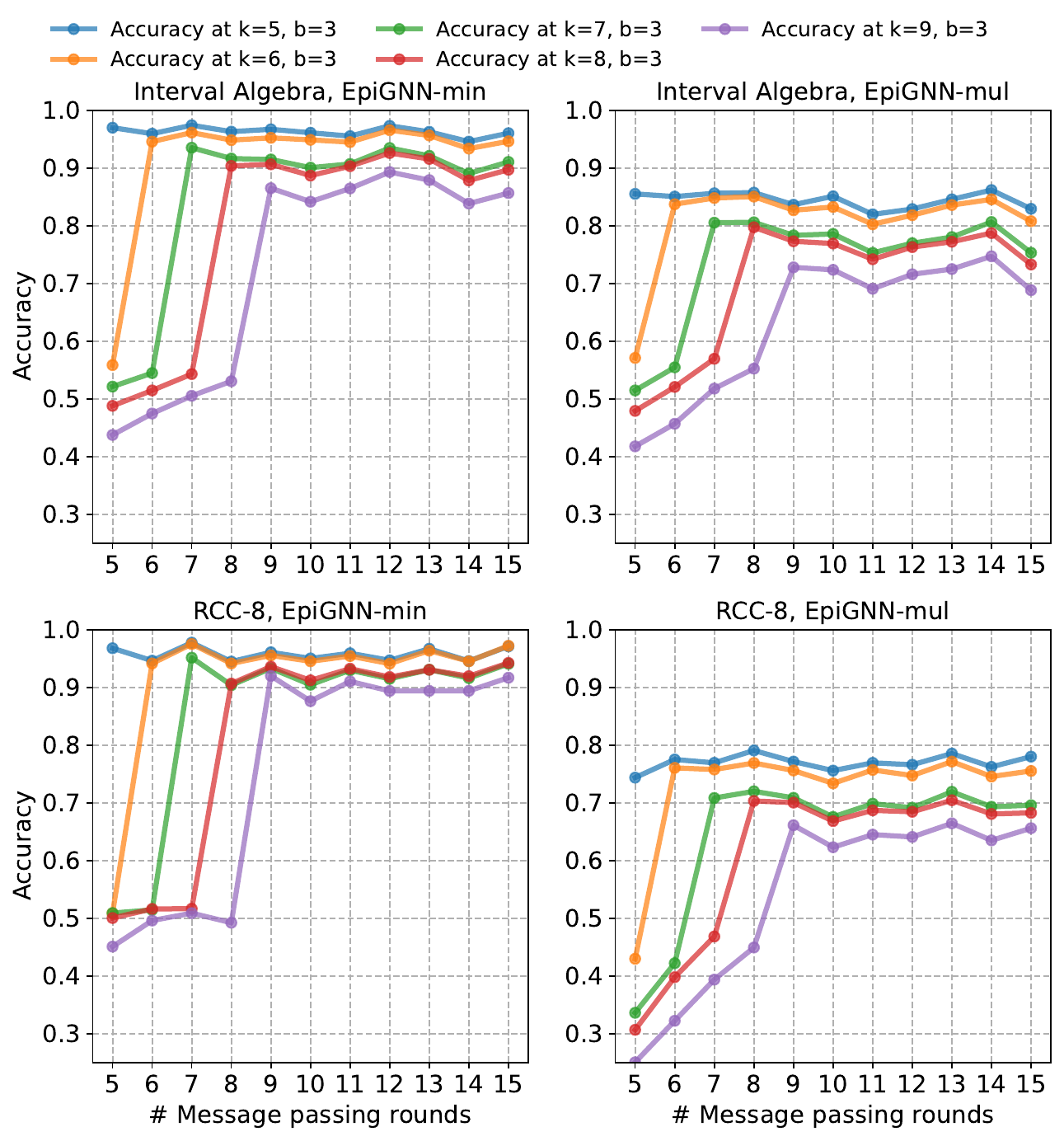}}
    \caption{Effect of the number of message passing rounds on the $k$-hop accuracy for the EpiGNN model on the \steven{IA} and RCC-8 datasets. There is a sharp jump in performance when $k$ equals the number of message passing rounds. }
    \label{fig:message-passing-rounds-hyp-exp}
\end{figure*}

\subsection{Additional analysis of parameter and time complexity}\label{secAppendixTimeComplexity}
\paragraph{Knowledge graph completion} 
The empirical parameter complexity of the EpiGNN on all splits~\citep{grail} of the inductive knowledge graph completion benchmarks for FB15k-237 \citep{fb15k} and WN18RR \citep{wn18rr} is shown in Figure~\ref{fig:param-complexity-kgc}. The parameter estimates of the best performing GNN baselines in Table~\ref{table:kgc-inductive}, namely NBFNet \citep{DBLP:conf/nips/ZhuZXT21} and REST~\citep{REST} are also shown. It can be observed that the EpiGNN is the most parameter efficient model out of all 3 by at least on order of magnitude on all the splits. The time complexity with respect to NBFNet is shown in Figure~\ref{fig:time-complexity-kgc} where the EpiGNN is slightly slower than NBFNet on FB15k-237 for both training and inference but faster for both in WN18RR.  

\begin{figure*}
    \centering
    \centerline{\includegraphics[width=.5\textwidth]{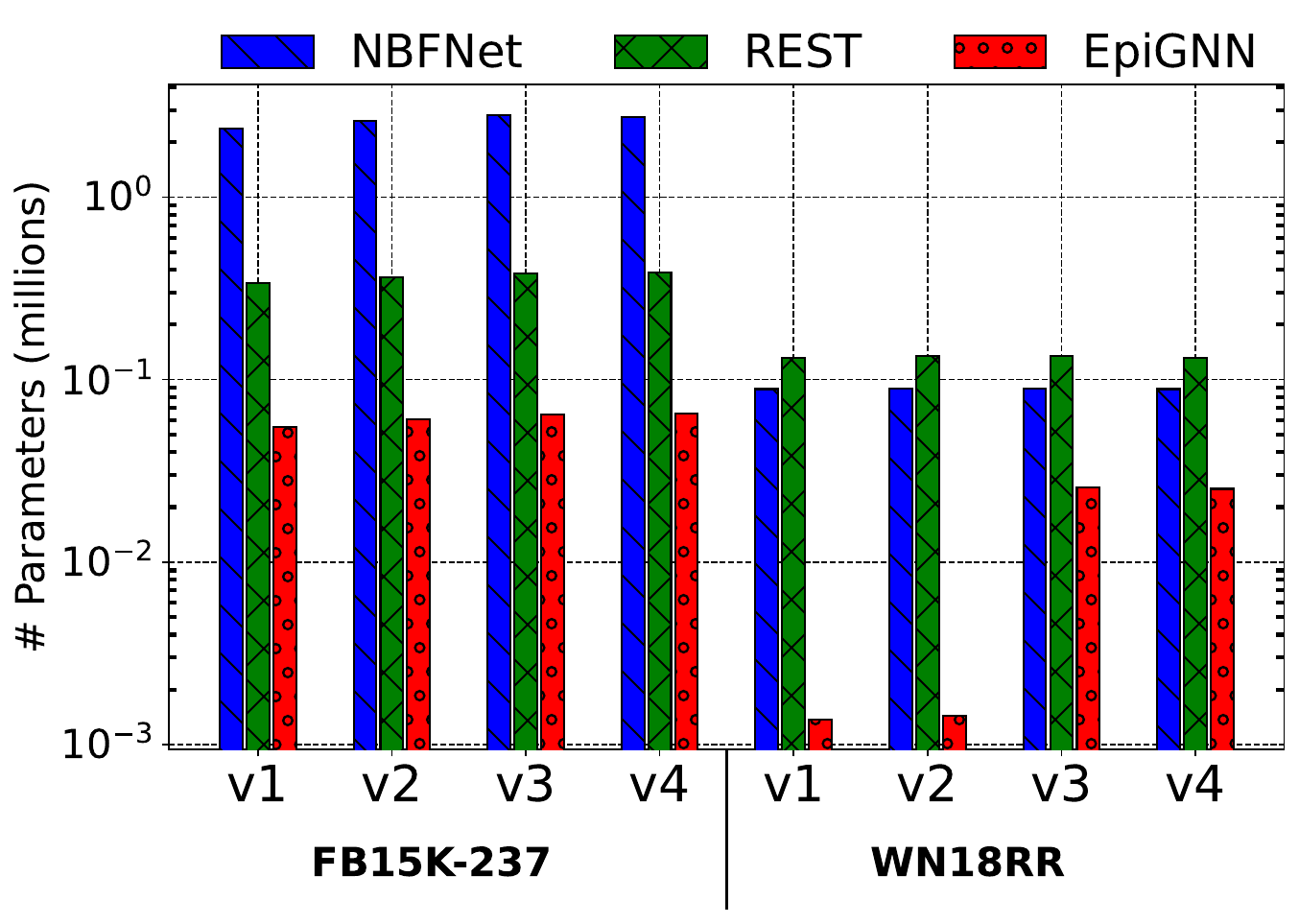}}
    \caption{Parameter complexity on all the inductive versions of FB15k-237 and WN18RR. }
    \label{fig:param-complexity-kgc}
\end{figure*}
\begin{figure*}
    \centering
    \centerline{\includegraphics[width=.7\textwidth]{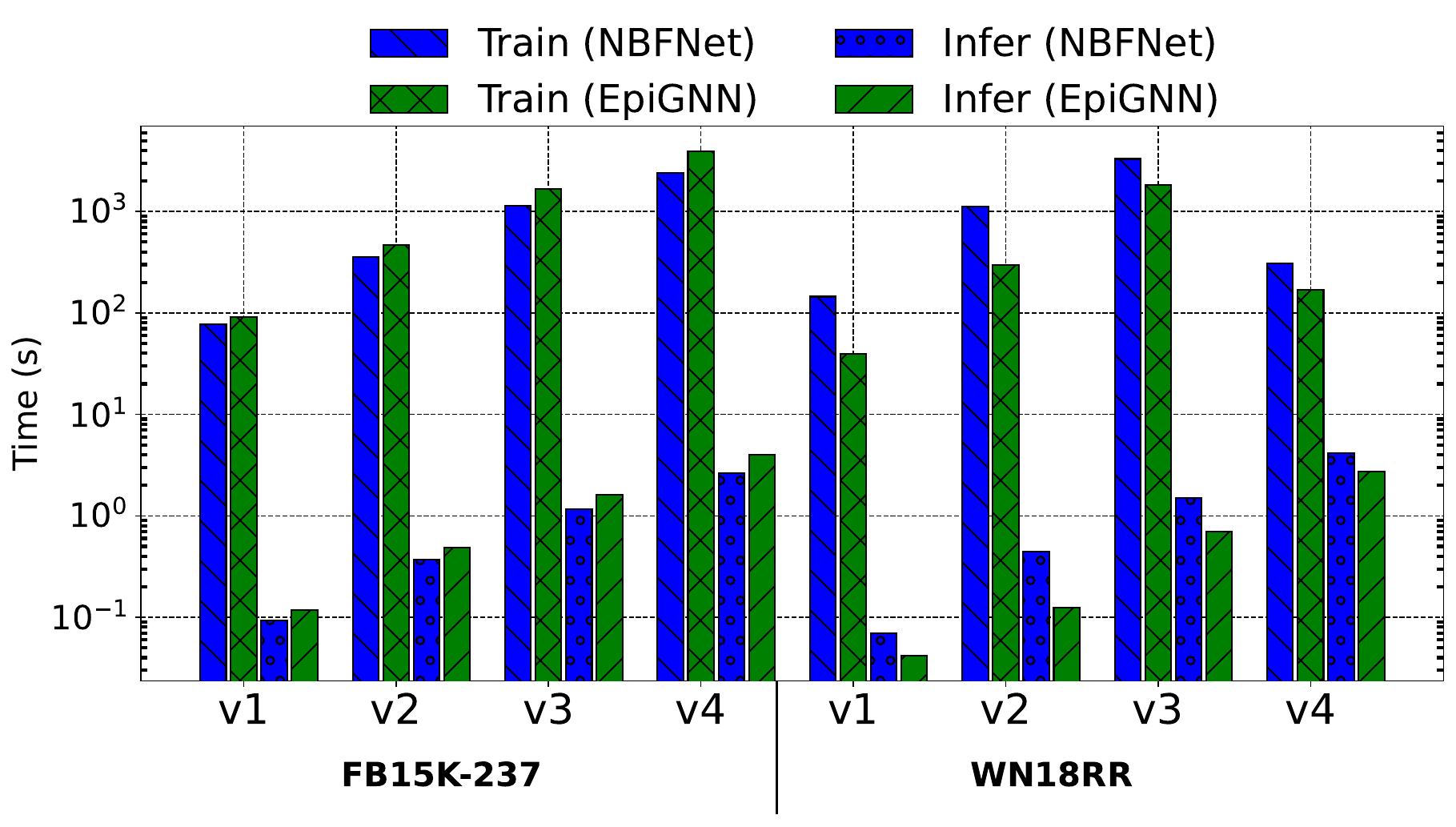}}
    \caption{Time complexity of the EpiGNN against the NBFNet on all the inductive versions of FB15k-237 and WN18RR. Results are obtained on a single Nvidia RTX 4090 GPU. }
    \label{fig:time-complexity-kgc}
\end{figure*}
\paragraph{Spatio-temporal reasoning} The training and inference times of the EpiGNN with respect to the edge transformers \citep{edge-transformer} (cf. the best baseline in Figure~\ref{table:rcc8}) are shown in Figure~\ref{fig:time-complexity-spatiotemporal}.
\begin{figure*}
    \centering
    \centerline{\includegraphics[width=.4\textwidth]{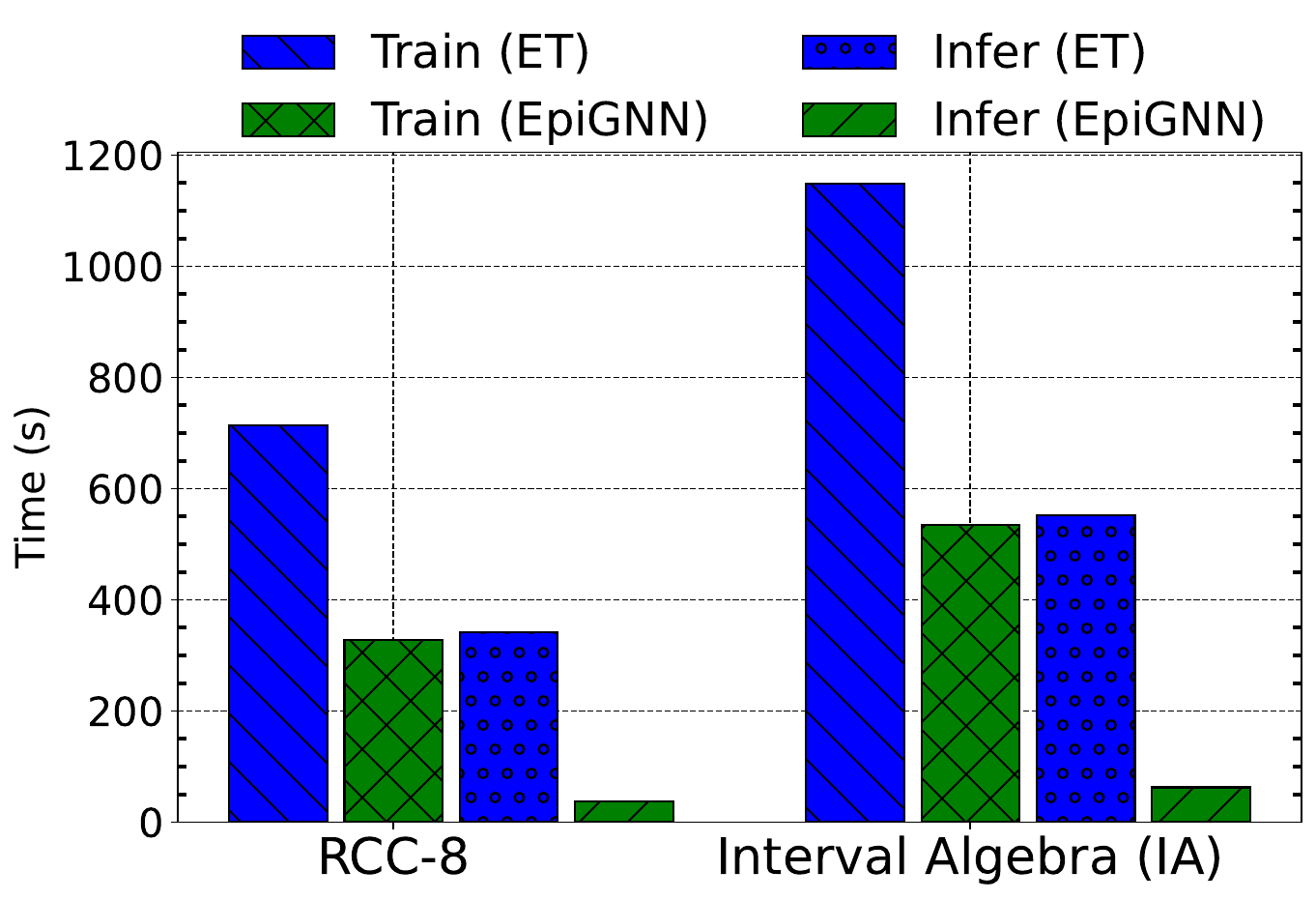}}
    \caption{Time complexity of the EpiGNN against the best baseline on spatio-temporal systematic reasoning. Results are obtained on a single Nvidia RTX 4090 GPU. }
    \label{fig:time-complexity-spatiotemporal}
\end{figure*}

\subsection{Results on the expanded versions of RCC-8 and Interval Algebra datasets}\label{secExpandedDisjunctiveResults}
The RCC-8 and Inverval algebra datasets presented in the main text can be made more challenging by increasing the number of paths $b$ and the maximum number of inference hops $k$. In this section, we provide results for the expanded RCC-8 and IA datasets for the EpiGNN and Edge Transformers (being the best baseline in Figure~\ref{table:rcc8}).

The Edge transformer is not able to fit graphs in memory on an RTX 4090 GPU with $k > 9$ and $b \geq 6$ so we need to restrict the comparison in Figure~\ref{fig:expanded-et-ours-comp}. The EpiGNN is able to handle much larger graphs since it is more compute and parameter efficient so we provide the results for up to $k = 15$ and $b=8$ in Figure~\ref{fig:expanded-full-ours}. Note that our model holds its performance fairly steady on this significantly expanded dataset with an average accuracy of 0.74 on RCC-8 and 0.77 on the IA dataset for the hardest setting: $k=15$ and $b=8$. Edge Transformers significantly deteriorate on IA and can only achieve an average accuracy of 0.19 at $k=9, b=6$. On RCC-8, the Edge transformer has an average accuracy of 0.7 for $k=9,b=6$ versus 0.85 for our model.  

\begin{figure*}
    \centering
    \centerline{\includegraphics[width=1.\textwidth]{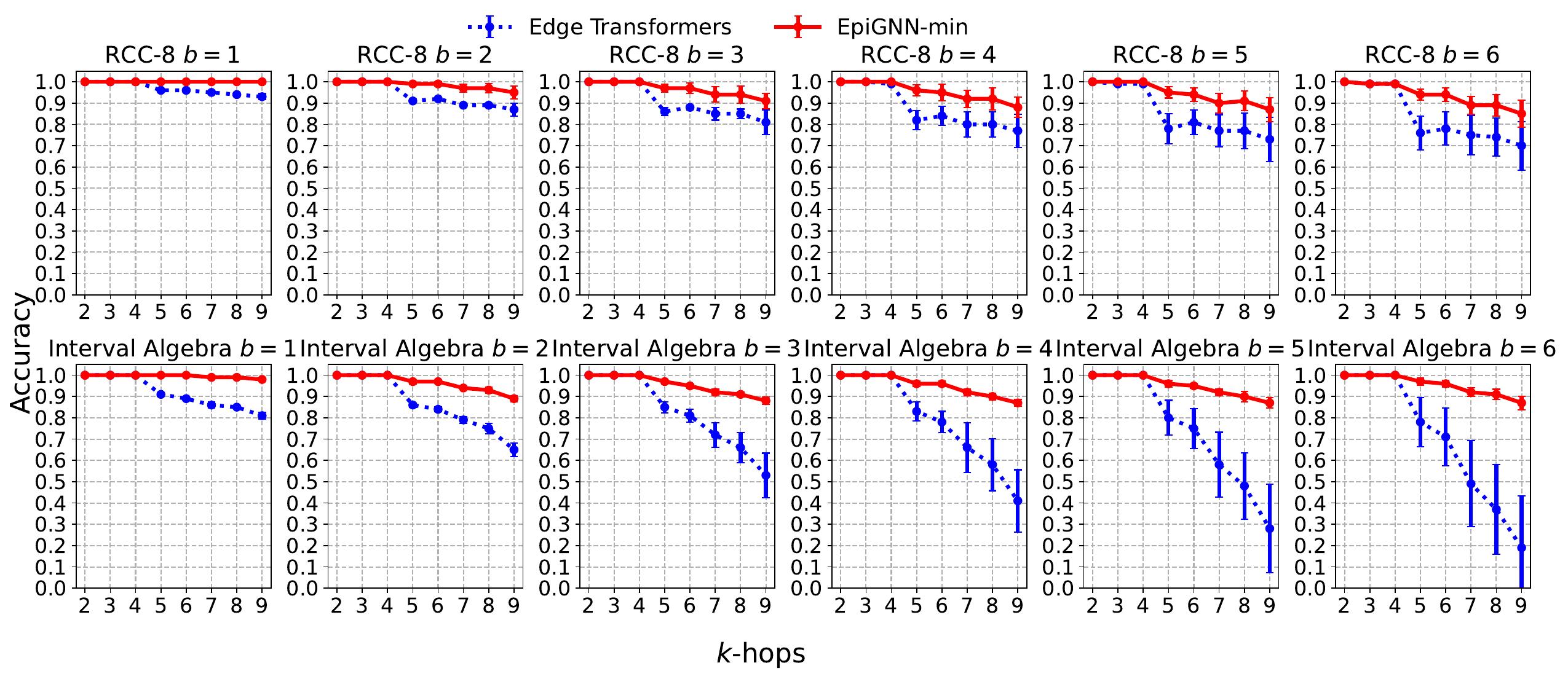}}
    \caption{Performance of EpiGNN-\texttt{min} and Edge Transformers on the expanded version of the RCC-8 and Interval Algebra datasets. The performance of Edge Transformers deteriorates as the number of paths is increased compared to EpiGNNs and this effect can be more significantly observed on the Interval Algebra dataset. We restrict the comparison to $b \leq 6, k \leq 9$ since Edge transformers cannot fit graphs for the other settings in memory on an RTX 4090 GPU.}
    \label{fig:expanded-et-ours-comp}
\end{figure*}
\begin{figure*}
    \centering
    \centerline{\includegraphics[width=1.\textwidth]{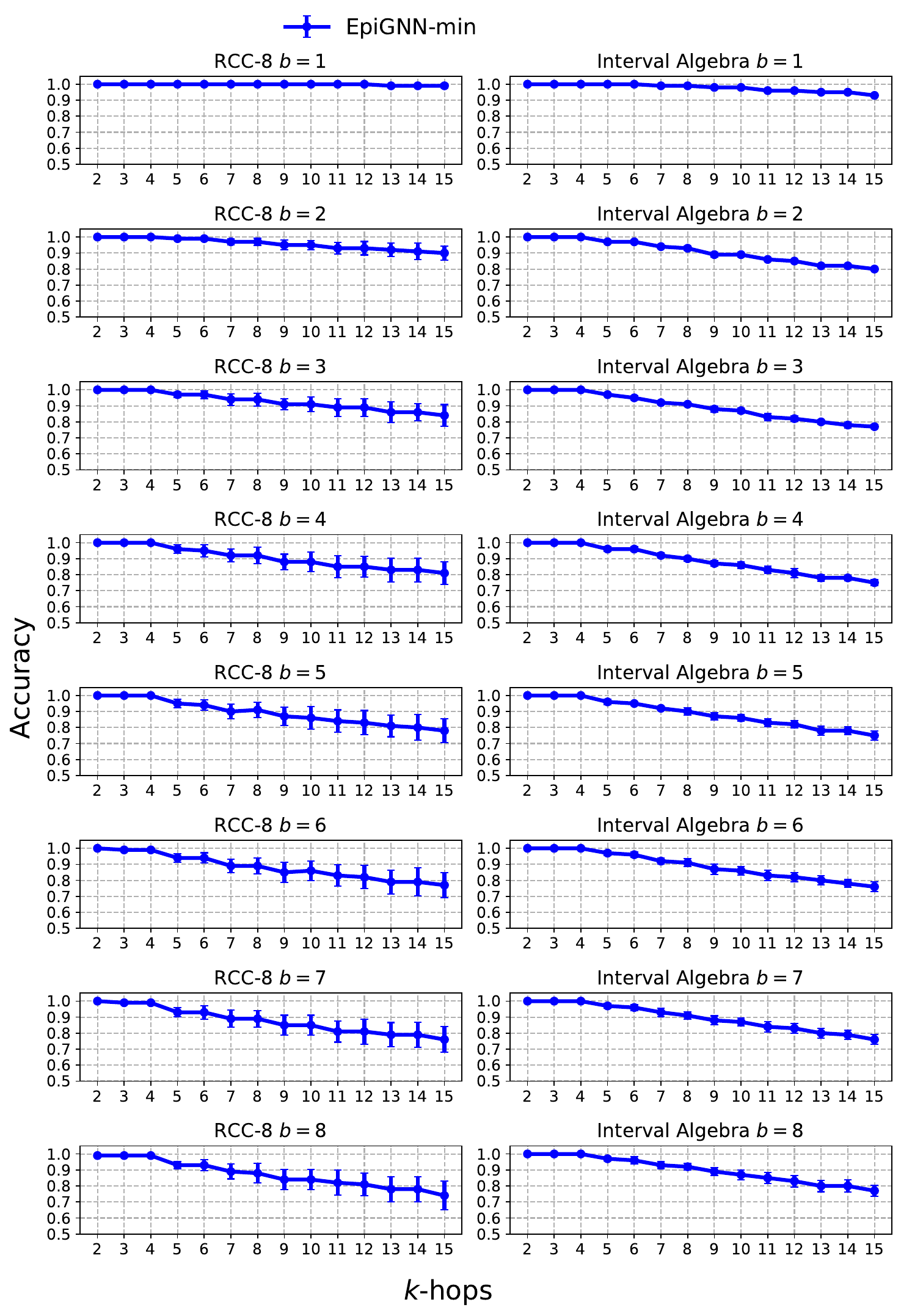}}
    \caption{Performance of EpiGNN-\texttt{min} on the complete expanded version of the RCC-8 and Interval Algebra datasets with maximum values of $b=8, k=15$. The model's performance is scalable and is fairly steady for the hardest setting: $b=8, k=15$ highlighting its inductive bias for systematic generalization.}
    \label{fig:expanded-full-ours}
\end{figure*}
\pagebreak

\end{document}